%% file: main.tex
\documentclass[10pt]{article}

\usepackage{aithyra}

\input{math_commands.tex}
\definecolor{refpurple}{HTML}{6A3D9A}
\colorlet{aithyra-shade}{aithyra-teal!12}   
\definecolor{indgray}{HTML}{595959}   
\definecolor{gtpurple}{RGB}{75,45,140}  

\usepackage{wrapfig}
\usepackage{cancel}

\providecommand{\Nc}{\mathcal{N}}
\providecommand{\Id}{\mathrm{Id}}

\definecolor{gtpurple}{RGB}{75,45,140}  %
\newcommand{\gt}{\textcolor{gtpurple}{\texttt{\textbf{GT}}}\xspace}

\newcommand{\globalt}{%
  \textcolor{gtpurple}{\texttt{\textbf{Global\kern0.22em Transport}}}\xspace}

\definecolor{myred}{RGB}{215,48,39}
\definecolor{mygreen}{RGB}{26,152,80}
\newcommand{\cmark}{\textcolor{mygreen}{\ding{51}}}
\newcommand{\xmark}{\textcolor{myred}{\ding{55}}}
\newcommand{\halfmark}{\textcolor{gray}{\checkmark\kern-1.1ex\raisebox{.7ex}{\rotatebox[origin=c]{125}{--}}}}

\newcommand{\cut}[1]{}

\input{macros}

\title{Global Transport Couplings for Classifier-Free Guided Flows}
\date{}

\begin{document}

\cut{\noindent\includegraphics[height=0.4in]{figures/aithyra.png}}

\noindent{\color{aithyra-teal}\rule{\textwidth}{1pt}}
\vspace{0.05em}

\begin{center}
    {\sffamily\bfseries\huge Global Transport Couplings \\ for Classifier-Free Guided Flows}
\end{center}

\vspace{0.2em}
\noindent{\color{aithyra-teal}\rule{\textwidth}{1pt}}

\vspace{0.5em}

\begin{center}
    {\sffamily\bfseries
        Katarina Petrović\textsuperscript{1,2}\quad
        Zander W. Blasingame\textsuperscript{1}\quad
        Danyal Rehman\textsuperscript{1,3,4,5}\quad
        \.{I}smail \.{I}lkan Ceylan\textsuperscript{6,1,2}\quad
        Michael Bronstein\textsuperscript{1,2}\\
        Stephen Y. Zhang\textsuperscript{7\textdagger}\quad
        Lazar Atanackovic\textsuperscript{8,9,10\textdagger}\quad
        Alexander Tong\textsuperscript{1\textdagger}
    }\\[0.5em]
    {\sffamily
        \small
        \textsuperscript{1}AITHYRA \quad
        \textsuperscript{2}University of Oxford \quad
        \textsuperscript{3}Mila - Québec AI Institute \quad
        \textsuperscript{4}Université de Montréal
        \\
        \textsuperscript{5}Massachusetts Institute of Technology \quad
        \textsuperscript{6}Technische Universität Wien \quad
        \textsuperscript{7}Flatiron Institute \quad
        \textsuperscript{8}University of Alberta \\
        \textsuperscript{9}Alberta Machine Intelligence Institute \quad
        \textsuperscript{10}Canada CIFAR AI Chair

        {\textsuperscript{\textdagger}Equal advising}
    }
\end{center}

\vspace{0.5em}

\begin{tcolorbox}[
    enhanced,
    breakable,
    colback=aithyra-lightgrey!50,
    colframe=aithyra-teal,
    boxrule=1pt,
    arc=10pt,
    left=10pt, right=10pt, top=8pt, bottom=8pt,
]

{\centering\sffamily\bfseries\large\color{aithyra-teal} Abstract\par}

\vspace{0.6em}

{\sffamily

 Optimal-transport couplings have been shown to reduce training variance in unconditional flow models, but their role in conditional generation remains unclear. A natural approach constructs separate couplings for each condition, but this is impractical for large or continuous conditioning spaces found in modern image foundation models. We introduce \globalt (\gt), a global class-agnostic optimal-transport coupling, computed without class labels. \gt can associate different conditions with different regions of the source noise, and consequently worsens performance without guidance. However, when combined with classifier-free guidance (CFG), \gt consistently improves generation across domains, model scales, and sampling budgets. This reversal suggests that couplings for conditional flows should be evaluated both empirically and theoretically under the guided flow used at inference, rather than on unguided generation. We evaluate \gt over both discrete class and continuous text conditioned image generation across model scales, and investigate how coupling choice alters guided trajectories. These results identify coupling design in the guided flow setting as a simple training time axis to improve performance without modifying existing architectures, samplers, or guidance mechanisms. \looseness=-1
 
}

\vspace{0.8em}

{\small\sffamily
    \textbf{Correspondence:}
    \href{mailto:kpetrovic@aithyra.at}{\texttt{kpetrovic@aithyra.at}}\\
    \textbf{Date:} 6 October, 2026
}

\end{tcolorbox}

\vspace{-5pt}
\section{Introduction}
\vspace{-5pt}

\looseness=-1
Flow matching~\parencite{lipman2023flowmatchinggenerativemodeling, albergo2023building,peluchetti2023non} and diffusion models~\parencite{ho2020denoising, song2021scorebased} have emerged as powerful paradigms for generative modeling, achieving state-of-the-art generation results across a wide range of domains including images \parencite{Rombach_2022_CVPR, esser2024scaling, Ma2024SiTEF}, videos \parencite{ho2022video, blattmann2023stable, polyak2024movie}, and biological data \parencite{morehead2026flow, li2026flow}. In the simplest construction, source and data samples are paired independently. A widely adopted strategy for improving generation quality is by incorporating optimal transport (OT) \emph{couplings} between noise samples and data points~\parencite{pmlr-v202-pooladian23a, tong_conditional_2023}. OT pairs noise and data to minimize transport cost, reducing the training variance and enabling more efficient numerical integration and sampling efficiency of flow matching models ~\parencite{lu2026bidirectional,berthelot2026coupling,malnick2026optimaltransportflowmatching}.
 These benefits suggest that coupling design could also improve conditional generation. It is less obvious, however, what an appropriate coupling should be when every data sample is associated with a class, text prompt, or other condition.

The natural extension of this paradigm is to match source and data separately for each condition. Such class-conditional OT respects the common source distribution used to sample each class, but requires sufficiently large batches of examples sharing a condition~\parencite{chemseddine2025conditional, kerrigan2024dynamic, mousavi-hosseini2026flow,kong2026alignflow,cheng2025curse}. This becomes impractical when there are many classes and is infeasible when conditions, such as text-embeddings, are continuous. As a result, class-conditional OT has achieved limited adoption in the conditional generative setting.

There is, however, an additional confounder in the case of conditional generation. Conditional flow models are almost ubiquitously sampled with classifier-free guidance (CFG)~\parencite{ho2022classifier}, which combines conditional and unconditional velocities and is used to steer towards a condition and away from the unconditional density. Class-conditional OT has been shown to improve both the unguided conditional flow, and the guided flow~\parencite{cheng2025curse}. 
The natural extension of this paradigm to the \emph{conditional} generative setting is via \emph{class-conditional} OT, where one seeks an optimal coupling for each \emph{class}.
However, this procedure necessitates the computation of couplings for class-conditioned batches, which becomes impractical for large conditioning spaces frequently found in modern generative models, and infeasible in continuous conditioning spaces used in text-to-image models \parencite{esser2024scaling}. 
As a result, class-conditional OT has achieved limited adoption in the conditional generative setting, and yields modest gains for the added computational cost.


CFG trains a flow model to perform both conditional and unconditional generation, dropping the conditioning signal on some fraction of the data points; at inference time, guided samples are drawn by linearly composing the two velocity fields, modulated by a guidance scale.
As a result, CFG guided samples follow trajectories defined jointly by both conditional and unconditional velocity fields, rather than by the conditional model alone. Class conditional OT improves both the unconditional and conditional fields separately. A natural question then arises: \looseness=-1
\vspace{2pt}
\begin{mdframed}[style=MyFrame3]
    Can a coupling that considers both fields together produce a better \textit{guided} generator?
\end{mdframed}
\vspace{-3mm}

We answer in the affirmative with \globalt (\gt), which computes a single \textit{global} OT assignment across source and target samples without considering their conditions. \gt is straightforward to apply to both class labels and text conditioning, and requires no change to the model, sampler, or CFG rule. However, \gt comes with a tradeoff: Although the coupling preserves the overall source marginal, the conditional source distribution do not need to be consistent with the unconditional source distribution. Following from this mismatch, \gt performs worse in unguided conditional sampling in our experiments. Under CFG, however, the ordering reverses with \gt improving guided generation across domains, model scales, and sampling budgets.

To investigate the cause of this reversal, we study the coupling choice from the perspective of the \textit{guided} flow. We find \gt reduces the \textit{prediction gap}~\parencite{wang2025towards,cai2026improving}, the difference between the conditional and unconditional flows. This reduces a pattern we term the \textit{yo-yo effect}, where independent flows exhibit pronounced contraction, re-expansion, and overshooting along guided trajectories with increasing guidance strength. Our contributions are:




\begin{itemize}[leftmargin=1.2em, itemsep=2pt, topsep=2pt, parsep=0pt, partopsep=0pt]
    \item We propose \gt, a practical, low-overhead coupling strategy for conditional generation that requires no change to architectures, samplers, or guidance mechanisms.
    \item We demonstrate that unguided conditional performance does not always correlate with guided performance, and analyze how coupling choice affects guided trajectories and the prediction gap.
    \item We show that \gt consistently yields competitive generation performance on class-conditioned image generation (e.g.\ FID 1.91 on ImageNet-256), text-conditioned image generation, and single-cell gene expression data, across model scales and sampling budgets, and outperforms both independent and class-conditional couplings. \looseness=-1
\end{itemize}

\section{Background and preliminaries}
\xhdr{Flow Matching}
Flow matching (FM)~\parencite{lipman2023flowmatchinggenerativemodeling, albergo_stochastic_2023,liu2023flow,peluchetti2023non} allows continuous time transport between distributions: the algorithm trains a velocity field that evolves samples from a source distribution $p_0$ to a target data distribution $p_1 = p_{\rm data}$, where $p_0$ is typically chosen to be an easy-to-sample distribution such as $\mathcal{N}(0,I_d)$. \looseness=-1

Posit a continuous-time transport, i.e. a \emph{probability path}, $(p_t)_{t\in[0,1]}$ with the prescribed boundary conditions $(p_0, p_1)$. Any such $p_t$ is generated by a marginal velocity field $u_t$ such that $\partial_t p_t + \nabla \cdot (p_t u_t) = 0$. 
FM seeks to approximate $u_t$ with a model $u_t^\theta$ by least-squares regression, minimizing \(\mathcal{L}_{\mathrm{FM}}(\theta)
    =
    \mathbb{E}_{t\sim U[0, 1],\, x_t\sim p_t}
    \left[
        \bigl\|u_t^\theta(x_t)-u_t(x_t)\bigr\|_2^2
    \right]\).
This objective is intractable, as in practice $u_t$ is unavailable in closed form. Conditional FM \parencite{tong_conditional_2023, albergo_stochastic_2023} instead introduces a coupling \(\pi\in\Pi(p_0,p_1)\) and a conditional probability path \(p_t(\cdot\mid x_0,x_1)\) generated by tractable conditional velocity field \(u_t(\cdot\mid x_0,x_1)\).
Taking expectations yields the CFM objective, which trains a neural velocity field $u^\theta$ by regression onto the closed-form conditional velocity $u_t(x_t \mid x_0, x_1)$: \looseness=-1
\begin{equation}
\label{eq:L_cfm}
    \mathcal{L}_{\mathrm{CFM}}(\theta)
    =
    \mathbb{E}_{t,\,(x_0,x_1)\sim\pi,\, x_t\sim p_t(\cdot\mid x_0,x_1)}
    \left[
        \bigl\|
        u_t^\theta(x_t)-u_t(x_t\mid x_0,x_1)
        \bigr\|_2^2
    \right].
\end{equation}
\cut{Crucially, the two objectives differ only by a constant independent of
\(\theta\)~\parencite{lipman2023flowmatchinggenerativemodeling}, so the gradients are equivalent.
Performing inference requires integrating neural velocity field over time from start time $s \in [0, 1]$ to target time $t \in [0,1]$ i.e.
\begin{equation}
    x_{t|s} = x_s + \int_s^t u^\theta(x_\tau) \;\mathrm d \tau; \quad x_\tau \sim p_\tau
\end{equation}
which can be discretized and integrated numerically which can be quite computationally expensive.} 

Once trained, samples from $p_1$ can be generated by first drawing from the source $x_0 \sim p_0$, then numerically integrating the ODE $\dot{x}_t = u_t(x_t)$. 
\cut{
requires integrating neural velocity field over time from start time $s \in [0, 1]$ to target time $t \in [0,1]$ i.e. $x_{t|s} = x_s + \int_s^t u^\theta(x_\tau) \;\mathrm d \tau$, methods such as consistency models~\parencite{song2023consistencymodels}, meanflows~\parencite{geng2025meanflowsonestepgenerative}, and flow maps~\parencite{boffi2025buildconsistencymodellearning}, frame the problem as learning to predict the integration directly that is the parameterize
\begin{equation}
    \Phi_{s,t}^\theta(x_s) \approx x_s + \int_s^t u^\theta(x_\tau)\; \mathrm d\tau
\end{equation}
which can either be trained together with the flow~\parencite{geng2025meanflowsonestepgenerative, boffi2025buildconsistencymodellearning} or distilled from it~\parencite{sabour2026align, lee2026decoupled}.}

\xhdr{Classifier-free Guidance}
Conditional generation, present in many applications such as text-to-image or text-to-video synthesis,
requires sampling given a prompt \(c\in\mathcal{C}\), achieved by
learning a prompt-conditioned velocity field \(u_t(x_t\mid c)\) via the
analogue of~\eqref{eq:L_cfm},
\begin{equation}
    \label{eq:L_cfm_cond}
    \mathcal{L}^{c}_{\mathrm{CFM}}(\theta)
    =
    \mathbb{E}_{t,\,(x_0,x_1,c)\sim\pi,\, x_t\sim p_t(\cdot\mid x_0,x_1)}
    \left[
        \bigl\|
        u_t^\theta(x_t, c)-u_t(x_t\mid x_0,x_1, c)
        \bigr\|_2^2
    \right].
\end{equation}
Classifier-free guidance (CFG)~\parencite{ho2022classifier} improves sample
quality and prompt alignment by sampling from a linear combination of the
marginal and conditional fields,
\begin{equation}\label{eq:cfg_main}
    \widetilde{u}_t^{w}(x_t \mid c)
    =
    u_t(x_t) + w\, g(x_t, t, c), \quad
    g(x_t,t,c) = u_t(x_t\mid c)-u_t(x_t)
\end{equation}
where \(w\ge 1\) and \(g\) are the guidance weight and vector respectively. In practice, a single model is trained across all conditions, with \(c\) replaced by a null token \(\varnothing\) with some probability, so that
\(u_t^\theta(x_t,\varnothing)\) estimates the marginal field. Appendix~\ref{appendix: cfg_theory} reviews the theory underlying CFG.

\xhdr{Optimal Transport} Optimal transport conditional flow matching (OT-CFM)~\parencite{tong_conditional_2023,pmlr-v202-pooladian23a,mousavi-hosseini2026flow,kong2026alignflow} replaces the independent coupling $\pi = p_0 \otimes p_1$ with approximations of the Euclidean OT coupling $\pi_{\rm OT}$, defined as the solution to \looseness=-1
\begin{equation}
\label{eq: OT}
    \pi_{\rm OT} = \text{argmin}_{\pi \in \Pi(p_0, p_1)} \mathbb{E}_{(x_0, x_1) \sim \pi} \| x_0 - x_1 \|^2_.
\end{equation}
The resulting trajectories are straighter in practice and therefore easier to integrate in the few-step sampling regime, shown to achieve modest but consistent gains in the unconditional setting.

\xhdr{Class-conditional Optimal Transport}
Class-conditional generation involves sampling from data distribution $p_1(x\mid c)$ conditioned on selected class $c\in \mathcal{C}$.
A na\"ive adaptation of minibatch OT-CFM to the class-conditional setting involves solving $|\mathcal{C}|$ independent OT problems, one for each class $c \in \mathcal{C}$. 
However, this quickly becomes infeasible when the number of classes $|\mathcal{C}|$ becomes even moderately large (for instance, ImageNet contains 1000 classes), and is not possible in settings where the condition itself is a continuous latent state, as is the case for text-to-image generation for example. \parencite{kerrigan2024dynamic, chemseddine2025conditional, cheng2025curse} proposed to approximate this by minimizing 
\begin{equation}
    \pi_{\rm C^2OT} = \text{argmin}_{\pi \in \Pi(p_0, p_1)} \mathbb{E}_{((x_0, c_0), (x_1, c_1)) \sim \pi} \left[ \| x_0 - x_1 \|^2 + \beta \|c_0 - c_1 \|^2 \right]
\end{equation}
where $\beta$ is adjusted to fit the scale and $c_{0, 1}$ denotes the class associated with $x_{0, 1}$. This allows the use of much smaller batches in practice, but with unknown degradation, and thus far has not been widely adopted in conditional generation settings, where independent couplings are almost universal.\looseness=-1 

\section{Global Transport}
\begin{wrapfigure}{r}{0.5\textwidth}
  \vspace{-2.1\intextsep}
  \begin{minipage}{0.5\textwidth}
  \small
  \setlength{\abovecaptionskip}{2pt}
  \setlength{\belowcaptionskip}{0pt}
  \begin{algorithm}[H]
    \caption{\gt training step}
    \label{alg:gtfm}
    \begin{algorithmic}
    \State $x_0 \sim p_0,\; (x_1, c) \sim p_1,\; t \sim \mathcal{U}(0, 1)$
    \State $\tilde{c} \gets \varnothing$ w.p.\ $p_{\text{uncond}}$, else $\tilde{c} \gets c$
    \State $\hat{x}_0 \gets \pi_{\gt}(x_0);\quad x_t \gets t x_1 + (1 - t)\, \hat{x}_0$
    \State $\theta \gets \mathrm{Update}\big(\theta, \nabla_\theta \| u_t^\theta(x_t, \tilde{c}) - (x_1 - \hat{x}_0)\|^2\big)$
    \end{algorithmic}
  \end{algorithm}
  \end{minipage}
  \vspace{-1\intextsep}
\end{wrapfigure}

We next introduce \globalt (\gt) a practical and low overhead application of optimal transport which improves performance in the class-conditioned setting. \gt computes an approximate optimal transport coupling without encouraging or constraining assignments to preserve class labels. The method does not change the architecture nor inference algorithm and adds only a training-time assignment step. We first describe the overall objective, then two approximations either using (1) a mini-batch approximation~\parencite{fatras2021minibatch,tong_conditional_2023,pmlr-v202-pooladian23a} or (2) a semi-discrete approximation~\parencite{mousavi-hosseini2026flow,kong2026alignflow}.

\xhdr{General objective} Given noise samples $x_0 \sim p_0$ and class conditional data samples $(x_1, c) \sim p_1$, for the optimal transport coupling $\pi_{OT}$ as defined in \eqref{eq: OT}, and a standard linear flow matching path $x_t = t x_1 + (1-t) x_0$ where the noise is rearranged to match the closest data points, ignoring the labels $c$. This results in the conditional flow matching objective:
\begin{equation}
    \gL_{\rm GT}(\theta) = \mathbb{E}_{t\sim U[0,1], (x_0, x_1) \sim \pi_{\rm OT}} \| u^\theta_t(x_t, \tilde{c}) - (x_1 - x_0) \|^2
\end{equation}
where $\tilde{c}$ is set to $\varnothing$ with probability $p_{\text{uncond}}$ and $c$ otherwise. The full algorithm appears in Algorithm~\ref{alg:gtfm}.

\xhdr{Minibatch-OT Implementation}
We compute the assignment using the Hungarian algorithm with squared Euclidean cost in the model input space. For a batch of size \(B\), the cost matrix requires \(O(B^2)\) memory and the exact assignment \(O(B^3)\) time. This is generally dwarfed by network evaluation time on large systems and is negligible (usually $<2\%$ overhead) for modern workflows. However, it can be expensive for large batch sizes, if this setting is desired a regularized~\parencite{tong_simulation-free_2023,cuturi2013sinkhorn,zhang2026on}, or semi-discrete approach may be more appropriate.

\xhdr{Semi-discrete OT Implementation} Building off of recent work on Flow matching with semi-discrete optimal transport couplings, we can also use an approximate semi-discrete OT implementation. This approach first performs an expensive preprocessing step to optimize a semi-dual potential function over all discrete datapoints, which can then be used to calculate couplings with any member of the continuous noise measure. 

\xhdr{The conditional source distribution mismatch} Ignoring conditions while constructing the coupling changes the distribution of $(x_0, x_1)$ pairs the conditional model learns from. To illustrate this, we consider the full batch, idealized setting using the true OT coupling $\pi_{OT}$, with associated transport map $T: p_0 \to p_1$. Draw $x_0 \sim p_0$ and set $x_1 = T(x_0)$ with condition $C$. Then the source points that are paired with points with condition $c$ do not need to be distributed like $p_0$ as different conditions may be paired with source points from different regions of the prior. In other words, although $x_0 \sim p_0$, the conditional distribution $x_0 \mid C = c$ need not equal $p_0$. This tells us that under the unguided setting ($w=1$), the standard class conditioned inference procedure starting with samples from $p_0$ will not necessarily land at $p_1 | c$, and is only guaranteed to if the initial samples are drawn from the intractable distribution $x_0 \mid C = c$.


\cut{\begin{algorithm}[t]
  \caption{Global Transport Training}
  \label{alg:gtfm}
\begin{algorithmic}
\State {\bfseries Input:} Samplable distributions $p_0,p_1$, network $u^{\theta}$.
\While{Training}
\State $x_0 \sim p_0(x_0); \quad (x_1, c) \sim p_1(x_1, c); \quad t \sim \mathcal{U}(0, 1)$
\State $\tilde{c} \gets \varnothing$ with probability $p_{\text{uncond}}$
\State $\pi \gets \text{OT}(x_0, x_1)$
\State $x_t \gets t x_1 + (1 - t) \pi(x_0)$
\State $\theta \gets \mathrm{Update}(\theta, \nabla_\theta \| u_t^\theta(x_t, \tilde{c}) - (x_1 - \pi(x_0))\|^2$
\EndWhile
\State \Return $u^\theta$
\end{algorithmic}
\end{algorithm}}

\begin{figure}[h]
    \centering
    \includegraphics[width=\linewidth]{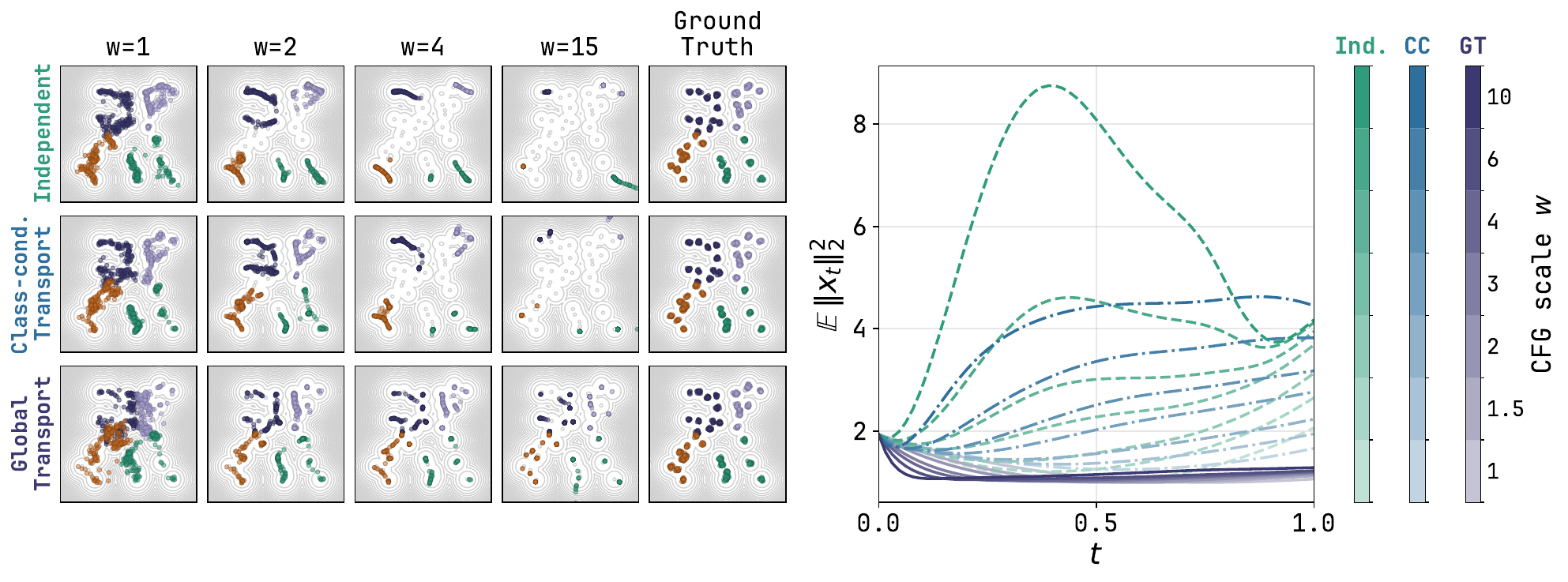}
    \vspace{-20pt}
    \caption{\textbf{2-dimensional depiction of the ``yo-yo'' effect.} This effect pushes samples away from the center at high guidance scales for the independent and class-conditional OT couplings, degrading generative performance. Global Transport fixes this and is stable under increasing guidance $w$.}
    \label{fig:toy-guidance}
\end{figure}

\cut{
This effect leads to samples being pushed away from the center at high guidance scales for the independent and class-conditional OT couplings, leading to poor generative performance. This is fixed in Global couplings which is stable under increasing guidance $w$.
}
\section{Coupling Choice With Classifier-Free Guidance}
\label{sec: gt}
The conditional source distribution mismatch described above suggests that \gt may be a poor choice for unguided conditional generation $(w=1)$, which we observe in a 2-dimensional 40-Gaussian mixture model example with four classes (Figures~\ref{fig:toy-guidance} and \ref{fig:ground_truth_toy} with details in Appendix~\ref{appendix: case_study_toy}). However, the result changes under guidance $(w > 1)$. This guidance dependent effect motivates studying the unconditional and conditional fields together as they are used during inference, rather than judging a coupling by its unguided conditional generation performance.

\begin{figure}[t]
    \centering
    \includegraphics[width=\linewidth]{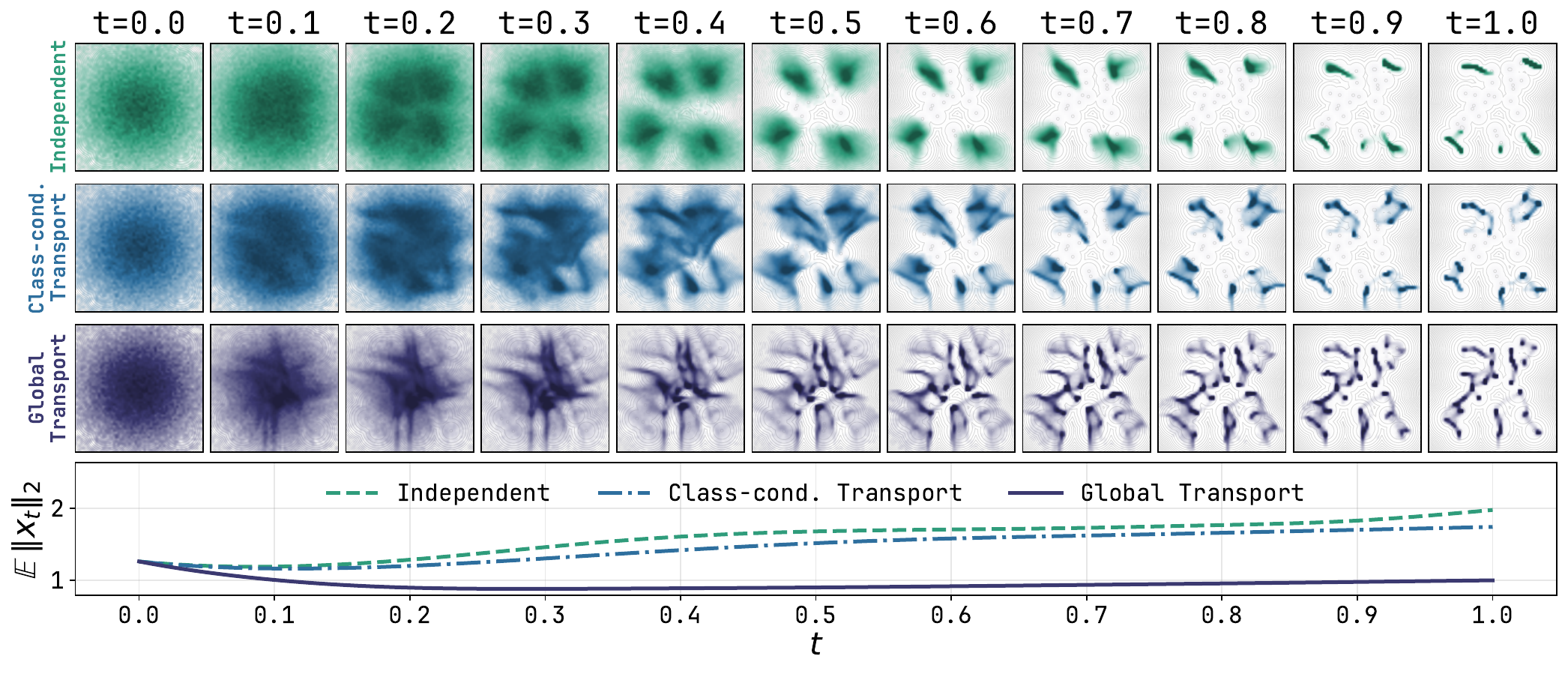}
    \caption{Density over time for Independent, Class conditional, and \gt couplings. Independent ``yoyo''s, contracting towards the class origin then expanding outwards, whereas \gt is stable.\looseness=-1}
    \label{fig:toy_yoyo}
\end{figure}
\subsection{Geometry of guided trajectories}

Let $X_t^{(w)}$ denote a sample generated by integrating
$\dot X_t^{(w)}
=\widetilde{u}_t^{\,w}(X_t^{(w)}\mid c)$
from $X_0^{(w)}\sim p_0$. To track its radial position over time, we measure
\begin{equation}
    Y_w(t)=\mathbb{E} \|X_t^{(w)}\|_2.
    \label{eq:guided-state-norm}
\end{equation}
Unlike the distribution of linear interpolants used in training,
$Y_w(t)$ describes the radial position of trajectories produced by the guided sampler.

In our GMM example, trajectories trained with independent
pairing contract before expanding toward their class modes (Figure~\ref{fig:toy_yoyo}).
As $w$ increases, the re-expansion becomes more pronounced and can
overshoot the target regions (Figure~\ref{fig:toy-guidance}). We refer to this contraction followed by re-expansion as the \emph{yo-yo effect}. Class-conditional OT reduces some aspects of this behavior, while \gt exhibits less
pronounced contraction and overshoot over the guidance scales we
test (Figures~\ref{fig:toy-guidance}
and~\ref{fig:toy_analysis}).

\begin{figure}[h]
    \centering
    \includegraphics[width=\linewidth]{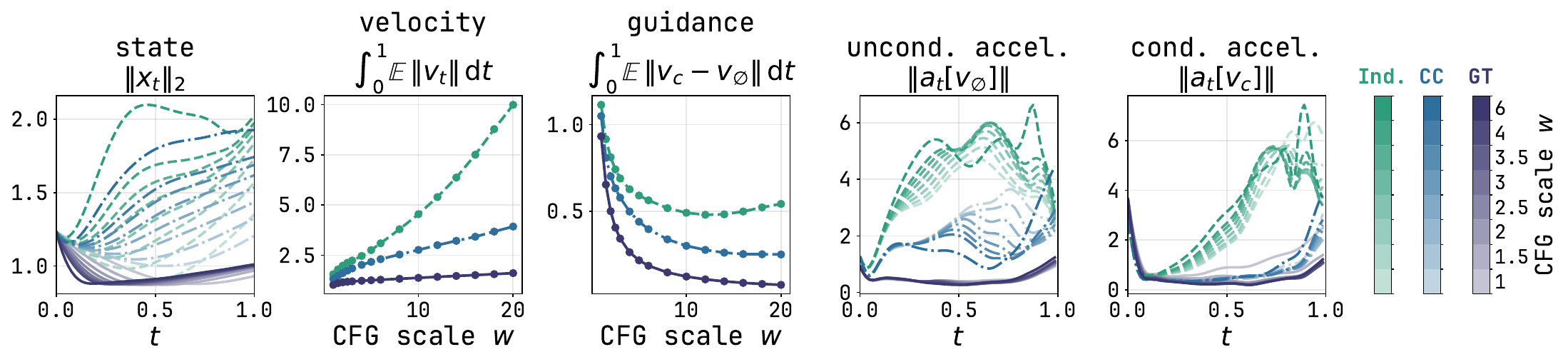}
    \caption{\textbf{Guided dynamics on the 40-GMM} across guidance scales $w$. From left: state norm along guided trajectories; integrated guided velocity norm; integrated prediction gap $\|v_c - v_\varnothing\|$; unconditional and conditional material acceleration $\|a_t[v]\|$.}
    \label{fig:toy_analysis}
\end{figure}

\subsection{Coupling controls the prediction gap}
\label{sec:prediction-gap}

Recall from \eqref{eq:cfg_main} the guidance vector $g(x_t, t, c) = u_t(x_t | c) - u_t(x_t)$ measures the gap between unconditional and conditional velocity fields. Following \textcite{wang2025towards} we call $\|g(x,t,c)\|^2$ the \emph{prediction gap} and ask how the training coupling constrains its size.

Let $\pi$ be the joint distribution of matched source points $x_0$,
data points $x_1$, and conditions $C$. Its $(x_0,x_1)$ marginal
couples $p_0$ and $p_1$. Define the training interpolant
$x_t=(1-t)x_0+tx_1$ and its coupling cost
$\operatorname{Cost}(\pi)=\mathbb{E}_{\pi}\|x_1-x_0\|^2$. 

\begin{proposition}[Transport cost bounds the on-path prediction gap]
\label{prop:gap-transport}
Suppose $u_t(x)$ and $u_t(x\mid c)$ are the population
squared-error minimizers for the common flow-matching target
$x_1-x_0$ under $\pi$. Then
\begin{equation}
\label{eq:gap-transport}
\mathbb{E}_{t, \pi}
\left[\|g(X_t,t,C)\|^2\right] 
\leq 
\operatorname{Cost}(\pi) - \mathcal{W}_2^2(p_0,p_1).
\end{equation}
\end{proposition}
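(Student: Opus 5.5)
The approach is to write both minimizers as conditional expectations and combine two variance decompositions (law of total variance), then lower-bound the leftover term by Benamou--Brenier. Write $V = x_1 - x_0$ for the regression target under $\pi$. Since $u_t(x)$ and $u_t(x\mid c)$ are the population squared-error minimizers, they are conditional expectations:
\[
u_t(X_t) = \mathbb{E}[V \mid X_t], \qquad u_t(X_t\mid C) = \mathbb{E}[V\mid X_t, C].
\]
Because $\sigma(X_t)\subseteq\sigma(X_t,C)$, the tower property makes the residual $V - u_t(X_t\mid C)$ orthogonal in $L^2$ to every function of $(X_t,C)$. In particular it is orthogonal to $g(X_t,t,C) = u_t(X_t\mid C) - u_t(X_t)$, and also to $u_t(X_t)$.

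The key steps, for each fixed $t$, are as follows.

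\emph{Step 1 (Pythagorean decompositions).} Orthogonality of the residual gives
\[
\mathbb{E}\|V\|^2 = \mathbb{E}\|V-u_t(X_t\mid C)\|^2 + \mathbb{E}\|u_t(X_t\mid C)\|^2 .
\]
Applying the same orthogonality one level down gives
\[
\mathbb{E}\|u_t(X_t\mid C)\|^2 = \mathbb{E}\|g(X_t,t,C)\|^2 + \mathbb{E}\|u_t(X_t)\|^2 .
\]

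\emph{Step 2 (discard the conditional residual).} Dropping the nonnegative term $\mathbb{E}\|V-u_t(X_t\mid C)\|^2$ and averaging over $t\sim U[0,1]$ yields
\[
\mathbb{E}_{t,\pi}\|g\|^2 \le \operatorname{Cost}(\pi) - \int_0^1 \mathbb{E}\|u_t(X_t)\|^2\,dt .
\]

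\emph{Step 3 (lower bound the marginal kinetic energy).} It remains to show
\[
\int_0^1\mathbb{E}_{X_t\sim p_t}\|u_t(X_t)\|^2\,dt \ge \mathcal{W}_2^2(p_0,p_1).
\]
The marginal field $u_t$ generates the interpolant path $p_t$ (the law of $X_t$). This is the standard CFM fact recalled in the background: averaging the conditional continuity equations over $\pi$ gives $\partial_t p_t + \nabla\cdot(p_t u_t) = 0$. The path has endpoints $p_0$ and $p_1$ because the $(x_0,x_1)$-marginal of $\pi$ lies in $\Pi(p_0,p_1)$. The Benamou--Brenier formula states that $\mathcal{W}_2^2(p_0,p_1)$ is the infimum of $\int_0^1\!\int\|v_t\|^2\,dp_t\,dt$ over all such pairs $(p_t,v_t)$ solving the continuity equation, so the inequality follows. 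Substituting it into Step 2 gives \eqref{eq:gap-transport}.

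The main obstacle is Step 3, namely justifying Benamou--Brenier for $(p_t,u_t)$. This needs $u_t$ to be a genuine weak solution of the continuity equation with finite energy. Finite energy is immediate from Jensen's inequality, $\int_0^1\mathbb{E}\|u_t\|^2\,dt\le\operatorname{Cost}(\pi)<\infty$. The weak formulation can be checked directly against test functions $\varphi$: $\frac{d}{dt}\mathbb{E}\varphi(X_t) = \mathbb{E}[\nabla\varphi(X_t)\cdot V] = \mathbb{E}[\nabla\varphi(X_t)\cdot u_t(X_t)]$, again by the tower property. No regularity of $u_t$ is needed, since Benamou--Brenier holds for distributional solutions with finite kinetic energy. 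Everything else is routine $L^2$ projection algebra. Note that the bound is tight when $\pi$ is the OT coupling and $C$ carries no information beyond $X_t$, which matches the paper's motivation for \gt.
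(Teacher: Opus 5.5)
Your proposal is correct and follows essentially the paper's proof. The paper shows $\mathbb{E}_\pi\|g\|^2 = L_u^\pi(t) - L_c^\pi(t)$ using the same orthogonality of the residual $V - u_t(X_t\mid C)$, rewrites $L_u^\pi(t) = \operatorname{Cost}(\pi) - \mathbb{E}\|u_t(X_t)\|^2$, drops $L_c^\pi(t)\ge 0$, and applies Benamou--Brenier; this is your three-term Pythagorean decomposition regrouped. Your check that $(p_t,u_t)$ is a finite-energy weak solution of the continuity equation fills in a step the paper only assumes before invoking Benamou--Brenier.
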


The proof in Appendix~\ref{appendix: g_proof} first expresses the expected prediction gap as the difference between the optimal unconditional and conditional flow-matching regression errors. It then bounds the integrated unconditional error by the excess quadratic transport cost. A lower-cost coupling can therefore give a tighter upper bound on the average prediction gap along its training interpolants. We note that this bound holds strictly along the linear training interpolants and does not govern the distribution of states visited during guided inference $(w>1)$, which we verify empirically. We also note that at the optimal transport limit, the bound contracts to zero because deterministic transport renders class conditioning redundant on the training support. In practice, finite mini-batch OT retains a non-zero prediction gap while regularizing the velocity fields. Further, we can show a hierarchy of coupling costs between different methods compared in this paper, which we do in the following proposition.

\begin{proposition}
\label{cor:coupling-order}
For all $\beta$ the coupling costs satisfy
\begin{equation}
    \operatorname{Cost}(\pi_{\gt}^\star)
    \leq
    \operatorname{Cost}(\pi_{C^2OT(\beta)}^\star)
    \leq
    \operatorname{Cost}(\pi_{\mathrm{ind}}).
\end{equation}
with the lower equality achieved at $\beta=0$.
\end{proposition}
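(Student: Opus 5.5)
The plan is to compare the three couplings through a single cost functional on the common feasible set $\Pi(p_0,p_1)$, using only optimality of each minimizer. Throughout, $\operatorname{Cost}(\pi)=\mathbb{E}_\pi\|x_1-x_0\|^2$ denotes the pure quadratic transport cost, with no penalty term. For C$^2$OT, view the source as carrying a condition variable $c_0$ and the target as carrying $c_1$, so that admissible couplings are joint laws on $((x_0,c_0),(x_1,c_1))$ with the prescribed marginals. Their $(x_0,x_1)$-projections lie in $\Pi(p_0,p_1)$. The objective minimized by C$^2$OT is
\[
J_\beta(\pi)=\operatorname{Cost}(\pi)+\beta\,\mathbb{E}_\pi\|c_0-c_1\|^2 .
\]
Denote by $\pi_{C^2OT(\beta)}^\star$ a minimizer of $J_\beta$, and by $\pi^\star_{\gt}=\pi_{\rm OT}$ the minimizer of $\operatorname{Cost}$ from \eqref{eq: OT}.

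\textbf{Lower inequality.} The $(x_0,x_1)$-marginal of $\pi_{C^2OT(\beta)}^\star$ is feasible for \eqref{eq: OT}. Since $\operatorname{Cost}$ depends only on this marginal, optimality of $\pi_{\rm OT}$ gives
\[
\operatorname{Cost}(\pi^\star_{\gt})=\mathcal{W}_2^2(p_0,p_1)\le \operatorname{Cost}(\pi_{C^2OT(\beta)}^\star)
\]
for every $\beta\ge 0$.

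\textbf{Equality at $\beta=0$.} When $\beta=0$ we have $J_0=\operatorname{Cost}$, so any minimizer of $J_0$ has cost $\mathcal{W}_2^2(p_0,p_1)$ and the lower inequality holds with equality. This is the sense in which the bound is achieved at $\beta=0$; it does not require uniqueness of the minimizer.

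\textbf{Upper inequality.} Compare against the independent coupling, which is feasible for the C$^2$OT problem. The obstacle is that the penalty term is nonnegative but is not zero under independence. Minimality of $\pi^\star=\pi_{C^2OT(\beta)}^\star$ gives
\[
\operatorname{Cost}(\pi^\star)+\beta P(\pi^\star)\le \operatorname{Cost}(\pi_{\rm ind})+\beta P(\pi_{\rm ind}),
\qquad P(\pi)=\mathbb{E}_\pi\|c_0-c_1\|^2 .
\]
This yields $\operatorname{Cost}(\pi^\star)\le\operatorname{Cost}(\pi_{\rm ind})$ only if $P(\pi^\star)\ge P(\pi_{\rm ind})$, which need not hold. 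The fix I would use is to compare instead with the \emph{condition-matched independent coupling}: $c_0=c_1$ is fixed, and $x_0\sim p_0$ is drawn independently of $(x_1,c_1)\sim p_1$. This coupling is feasible in the C$^2$OT construction, where the source condition is sampled to match, so its penalty is $0$ and its transport cost equals $\operatorname{Cost}(\pi_{\rm ind})$, because $x_0$ is independent of $x_1$. Plugging it into the minimality inequality gives
\[
\operatorname{Cost}(\pi^\star)\le \operatorname{Cost}(\pi^\star)+\beta P(\pi^\star)\le \operatorname{Cost}(\pi_{\rm ind})+0 .
\]

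The main care point is therefore the upper bound. It holds as stated only under this formalization of the C$^2$OT source conditions, with $c_0$ sharing the law of $c_1$ and independent of $x_0$, and I would make that assumption explicit. If it fails, one should instead take the supremum over $\beta$ only in the limit $\beta\to\infty$, where $\pi^\star$ becomes per-class OT and is again dominated by the per-class independent coupling.
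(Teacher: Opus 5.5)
Your proof is correct, and for the upper bound it takes a cleaner route than the paper's. Your lower inequality and the $\beta=0$ equality are the same as the paper's argument: the spatial projection of the C$^2$OT minimizer is feasible for the global OT problem. For the equality you also implicitly use that $\pi_{\rm OT}$ lifts to an admissible extended coupling, e.g.\ by attaching $c_0\sim p_{\mathcal C}$ independently and drawing $c_1\sim p_1(\cdot\mid x_1)$; this is immediate.

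The difference is in the upper inequality. The paper first writes the minimality inequality against $\pi_{\rm ind}$ with independent $c_0,c_1$, which, as you note, does not close because the penalty is positive there. It then passes to the hard-constraint limit $\beta\to\infty$, splits into per-class problems, and bounds each by $p_0\otimes p_1(\cdot\mid c)$. Finally it returns to finite $\beta$ by appealing to ``monotonicity under relaxation of the condition constraint,'' which it asserts but does not prove.

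Your condition-matched independent coupling is exactly the paper's mixture $\sum_c p(c)\,p_0\otimes p_1(\cdot\mid c)$. You plug it directly into the finite-$\beta$ minimality inequality, where its zero penalty gives $\operatorname{Cost}(\pi^\star_{C^2OT(\beta)})\le\operatorname{Cost}(\pi_{\rm ind})$ in one line, with no limit or monotonicity needed. The same one-line argument, with the per-class OT coupling as comparator, gives $\operatorname{Cost}(\pi^\star_{C^2OT(\beta)})\le\operatorname{Cost}(\pi^\star_{C^2OT(\infty)})$. That is precisely the step the paper's monotonicity appeal needs, so your argument also supplies its missing justification.

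Both proofs rest on the same modelling assumption, $(x_0,c_0)\sim p_0\otimes p_{\mathcal C}$, which the paper states explicitly in its setup. Your closing fallback is not a genuine alternative: the per-class decomposition at $\beta\to\infty$ also requires the source condition law to be $p_{\mathcal C}$ and independent of $x_0$. You can drop that sentence.
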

which establishes that the \gt algorithm has a tighter upperbound on the prediction gap than C$^2$OT (for $\beta > 0$) which has a tighter upper bound on the prediction gap than independent couplings. 

While these results establish tighter upperbounds on the prediction gap, this does not establish causality between lower prediction gaps and improved performance. We use the bound to motivate the
prediction-gap measurements in Figure~\ref{fig:toy_analysis},
not as an explanation of generation performance on its own. \looseness=-1

\section{Experiments}
We now demonstrate the effectiveness of \gt on conditional generation in several different settings including image generation and single-cell experiments.
We first show that \gt improves the performance of classifier-free guided class-conditional generation on ImageNet~\parencite{deng2009imagenet} ($256\times256$) when training flow matching models and distilling them into flow maps. Then we move to the case of continuous conditioned models in the text-to-image setting, before finally demonstrating the impact on single-cell data generation guided by cell type labels across multiple single-cell datasets. We denote the minibatch version of \gt-MB as \gt, and the semi-discrete OT version of \gt as \gt + SD-OT.

\newsavebox\benchtable

\begin{table*}[h]
\centering
\caption{\textbf{Left:} FID$\downarrow$ comparison against baselines. \textsuperscript{\textdagger}denotes results obtained with guidance interval~\parencite{kynkaanniemi2024applying}. \textbf{Right:} Curated class-conditional samples from SiT-XL/2 + \gt on ImageNet-256.}
\label{tab:imagenet256_benchmark}
\vspace{-0.4em}
\sbox\benchtable{%
\scriptsize
\setlength{\tabcolsep}{3.2pt}%
\begin{tabular}{@{}llclr@{}}
\toprule
Method & NFE & CFG & Params & FID $\downarrow$ \\
\midrule
\multicolumn{5}{@{}l}{\textcolor{gray}{\textbf{GANs / Normalizing Flows / Autoregressive models}}} \\
\textcolor{gray}{StyleGAN-XL~\parencite{sauer2022styleganxlscalingstyleganlarge}}
& \textcolor{gray}{1} & \textcolor{gray}{\xmark} & \textcolor{gray}{166M} & \textcolor{gray}{2.30} \\
\textcolor{gray}{STARFlow~\parencite{gu2025starflow}}
& \textcolor{gray}{1} & \textcolor{gray}{\xmark} & \textcolor{gray}{1.4B} & \textcolor{gray}{2.40} \\
\textcolor{gray}{VAR-$d$30~\parencite{tian2024visual}}
& \textcolor{gray}{$20$} & \textcolor{gray}{\cmark} & \textcolor{gray}{2B} & \textcolor{gray}{1.92} \\
\textcolor{gray}{MAR-H/2~\parencite{li2024autoregressive}}
& \textcolor{gray}{$64$} & \textcolor{gray}{\cmark} & \textcolor{gray}{943M} & \textcolor{gray}{1.55} \\
\midrule
\multicolumn{5}{@{}l}{\textbf{Diffusion / Flow models}} \\
ADM~\parencite{dhariwal2021diffusion}
& $500$ & \cmark & 554M & 10.94 \\
LDM~\parencite{Rombach_2022_CVPR}
& $500$ & \cmark & 400M & 3.60 \\
RIN~\parencite{jabri2022scalable}
& 1000 & \xmark & 410M & 3.42 \\
SimDiff~\parencite{hoogeboom2023simple}
& $1024$ & \cmark & 2B & 2.77 \\
U-ViT-H/2~\parencite{bao2023all}
& $100$ & \cmark & 501M & 2.29 \\
DiT-XL/2~\parencite{Peebles_2023_ICCV}
& $500$ & \cmark & 675M & 2.27 \\
SiT-XL/2\textsuperscript{\textdagger}~\parencite{Ma2024SiTEF}
& $500$ & \cmark & 675M & 2.06 \\
\rowcolor{aithyra-shade} SiT-XL/2 + \gt \textsuperscript{\textdagger} \textbf{(ours)}
& $500$ & \cmark & 675M & \textbf{1.91} \\
\bottomrule
\end{tabular}}%
\begin{minipage}[t]{0.55\textwidth}
\vspace{0pt}
\centering
\usebox\benchtable
\end{minipage}%
\hfill%
\begin{minipage}[t]{0.42\textwidth}
\vspace{3pt}
\centering
\includegraphics[height=\dimexpr\ht\benchtable+\dp\benchtable\relax,width=\linewidth,keepaspectratio]{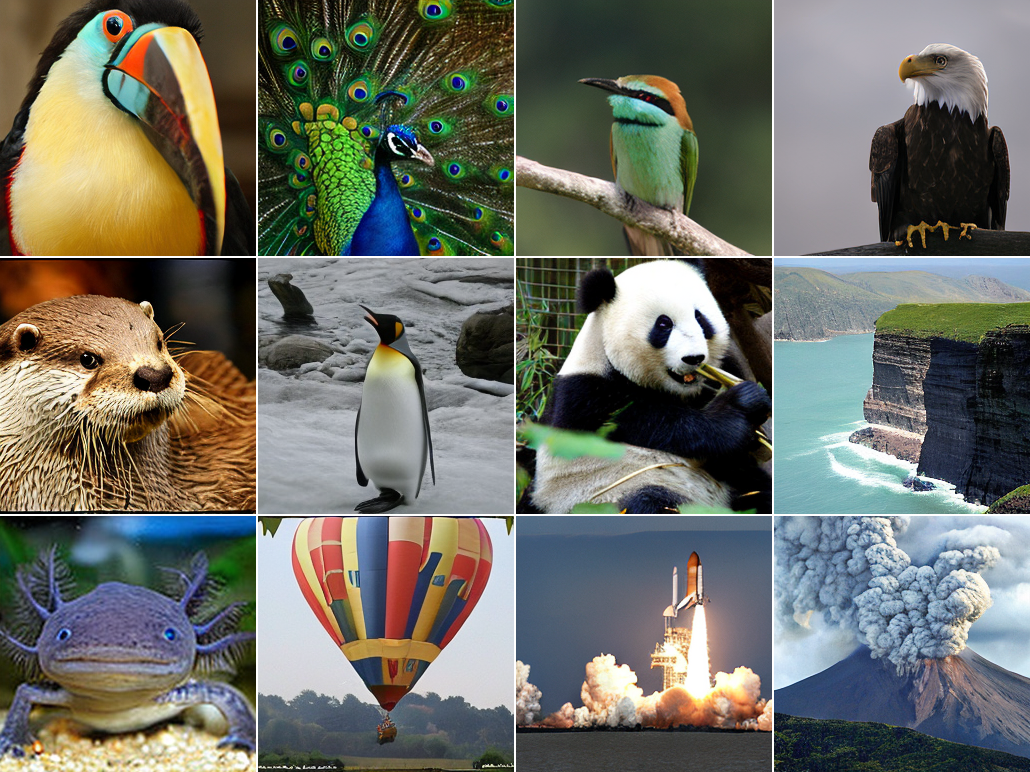}
\end{minipage}
\end{table*}

\cut{
\begin{figure}[h]
    \centering
    \includegraphics[width=0.7\linewidth]{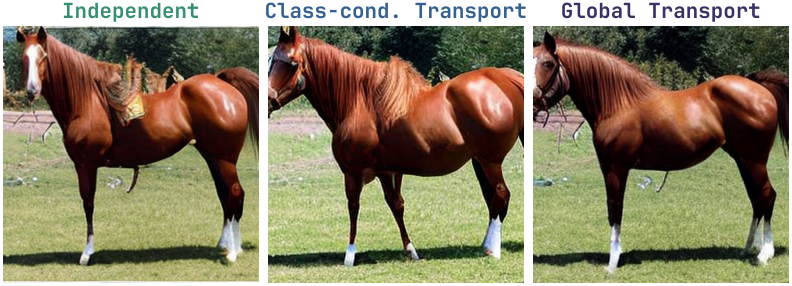}
    \caption{Class-conditional generation for indpendent, class-conditional and \gt transport for class 339 (\texttt{sorrel}) generated with SiT-L/2 (Euler, 64 steps) at guidance scale $w=2.5$.}
    \label{fig:generation_couplings}
\end{figure}
}

\begin{figure}[b]
    \centering
    \resizebox{\linewidth}{!}{%
        \raisebox{-0.5\height}{\includegraphics[height=4.6cm]{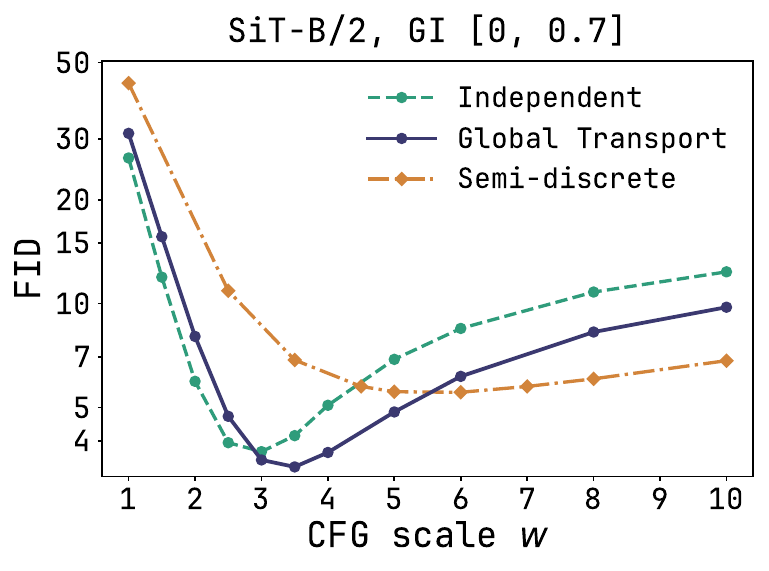}}%
        \hspace{0.4cm}%
        \raisebox{-0.5\height}{\includegraphics[height=4cm]{sit_l_2_cfg2.5_c339_i2_ind_ccot_ot.pdf}}%
    }
    \caption{\textbf{Left:} Guidance scale $w$ sweep for SiT-B/2 (Euler-64 steps) FID\,$\downarrow$ under a tuned guidance interval $[0, 0.7]$. \textbf{Right:} Class-conditional samples for class 339 (\texttt{sorrel}) from SiT-L/2 (Euler-64 steps, $w=2.5$) under independent, class-conditional, and \gt transport.}
    \label{fig:fid_cfg}
\end{figure}

\subsection{Class- and text- conditioned image generation on ImageNet-256} 
We follow the hyperparameter set-up of \textcite{Ma2024SiTEF} to train a SiT model on conditional generation across B/2, L/2 and XL/2 model scales using independent, class-conditional and \gt transport.
We report the Fr\'echet Inception Distance (FID) \parencite{heusel2017gans} and FD\textsubscript{DINOv2} \parencite{stein2023fddino} metrics to measure the distributional distance between the generated and real distributions. 
We highlight our best configuration in \cref{tab:imagenet256_benchmark} showing an improvement from the standard SiT-XL/2 trained with independent couplings to ours trained with the \gt strategy, improving from 2.06 to 1.91 in FID.
This one result, however, lays in front of a more interesting story.
\begin{wraptable}{r}{0.66\textwidth}
\caption{\small Comparison of different couplings across various NFEs and model sizes reported in FID and FD\textsubscript{DINOv2}. We swept for the optimal guidance strength $w^*$ for each setting and report the best result.}
\label{tab:sit_best_w}
\centering
\scriptsize
\begin{tabular}{ll ccc ccc}
\toprule
\multirow{2}{*}{Model} & \multirow{2}{*}{Coupling}
& \multicolumn{3}{c}{FID\,$\downarrow$}
& \multicolumn{3}{c}{FD\textsubscript{DINOv2}\,$\downarrow$} \\
\cmidrule(lr){3-5}\cmidrule(lr){6-8}
& & 16 & 32 & 64 & 16 & 32 & 64 \\
\midrule
& Independent     & 5.88 & 5.33 & 5.16 & 137.3 & 132.9 & 132.3 \\
& Class-cond.\ OT & 5.79 & 5.20 & 5.08 & 136.4 & 132.2 & 131.4 \\
\rowcolor{aithyra-shade}\cellcolor{white}
& \gt + SD-OT     & \textbf{5.11} & 4.90 & 4.85 & 199.6 & 194.6 & 184.7 \\
\rowcolor{aithyra-shade}\cellcolor{white}\multirow{-4}{*}{\shortstack[l]{SiT-B/2\\(130M)}}
& \gt             & 5.15 & \textbf{4.35} & \textbf{4.15}
                  & \textbf{123.6} & \textbf{118.0} & \textbf{116.4} \\
\midrule
& Independent     & 5.50 & 5.74 & 2.95 & 91.1 & 84.6 & 83.3 \\
& Class-cond.\ OT & 5.47 & 5.68 & 2.98 & 102.0 & 99.2 & 95.0 \\
\rowcolor{aithyra-shade}\cellcolor{white}\multirow{-3}{*}{\shortstack[l]{SiT-L/2\\(459M)}}
& \gt             & \textbf{3.94} & \textbf{3.84} & \textbf{2.69}
                  & \textbf{83.5} & \textbf{78.7} & \textbf{73.5} \\
\midrule
& Independent     & 3.95 & 3.00 & 2.78 & 84.2 & 78.7 & 77.8 \\
\rowcolor{aithyra-shade}\cellcolor{white}\multirow{-2}{*}{\shortstack[l]{SiT-XL/2\\(675M)}}
& \gt             & \textbf{3.22} & \textbf{2.69} & \textbf{2.59}
                  & \textbf{72.7} & \textbf{69.1} & \textbf{68.4} \\
\bottomrule
\end{tabular}
\end{wraptable}
Namely, with \gt the guidance scale $w$ can be pushed to much larger values before performance begins to degrade, enabling us to push far more aggressively into high guidance strength regimes.
To illustrate, consider \cref{fig:fid_cfg} where we compare the performance at various guidance scales $w$ along with applying the guidance interval technique \parencite{kynkaanniemi2024applying}.
We notice that for the model trained with \gt couplings we can use markedly higher guidance scales than the model trained with independent couplings; along with having an overall better minimum w.r.t.\ FID.
Observe that interval tuning improves all the couplings and shifts their optima to larger $w$, whilst \gt obtains the best global optima among the coupling strategies. 
Notably, the gap between \gt and independent couplings grows as $w$ increases. Further, observe that semi-discrete \gt which is closer to the exact OT map is the most robust at higher guidance scales, however does not reach the overall global minima FID.
To complement \cref{fig:fid_cfg} we report the best results across the different coupling strategies under several model sizes, swept over different guidance scales (per metric) in \cref{tab:sit_best_w}.
Observe that across all NFE and model sizes the \gt couplings obtain the strongest performance yielding a noticeable improvement. For model and training configurations please refer to appendix~\ref{appendix: imagenet}.\looseness=-1

\cut{
\begin{figure}[h]
    \centering
    \includegraphics[width=0.8\linewidth]{figs/imagenet/imagenet_fid_cfg_semidiscrete_panel.pdf}
    \caption{Guidance scale $w$ sweep for SiT-B/2 (64 Euler steps). (a) FID\,$\downarrow$ under the full guidance interval, and (b) FID\,$\downarrow$ under a tuned guidance interval $[0, 0.7]$ across $w$.}
    \label{fig:fid_cfg}
\end{figure}}

\cut{
\xhdr{Sample diversity} We measure the sample diversity by computing precision and recall metrics across independent, class-conditional and \gt transport using the SiT-B/2 model evaluated across guidance scales with Euler sampler (64 steps). 
}



\cut{
\begin{table*}[h]
\caption{\small FID\,$\downarrow$ on ImageNet for independent vs.\ mini-batch OT coupling
         across sampling steps and CFG guidance scales $w$.}
\label{tab:sit_ablations}
\centering
\resizebox{\textwidth}{!}{%
\begin{tabular}{ll *{9}{c}}
\toprule
 & & \multicolumn{3}{c}{16 steps} & \multicolumn{3}{c}{32 steps} & \multicolumn{3}{c}{64 steps} \\
\cmidrule(lr){3-5}\cmidrule(lr){6-8}\cmidrule(lr){9-11}
Model & Coupling & \multicolumn{1}{c}{$w{=}1.0$} & 2.0 & 4.0 & 1.0 & 2.0 & 4.0 & 1.0 & 2.0 & 4.0 \\
\midrule
\multirow{3}{*}{SiT-B/2 (130M)}
 & Independent          & 33.29 & 5.88 & 15.37 & 28.33 & 5.32 & 15.78 & 26.58 & 5.22 & 15.84 \\
 & Global Transport        & 38.08 & \textbf{5.15} & 13.24 & 33.27 & \textbf{4.35} & 13.49 & 31.43 & \textbf{4.14} & 13.54 \\
 & Class-cond. Transport & 33.18 & 5.79 & 15.15 & 28.43 & 5.20 & 15.59 & 26.72 & 5.08 & 15.59 \\
\midrule
\multirow{2}{*}{SiT-L/2 (459M)}
 & Independent          & 19.53 & 5.50 & 17.56 & 15.30 & 5.74 & 17.86 & 13.85 & 5.85 & 17.89 \\
 & Global Transport        & 23.77 & \textbf{3.94} & 14.76 & 19.35 & \textbf{3.84} & 15.17 & 17.72 & \textbf{3.88} & 15.17 \\
 & Class-cond. Transport        & -- & -- & -- & -- & -- & -- & -- & -- & -- \\
\midrule
\multirow{2}{*}{SiT-XL/2 (675M)}
 & Independent          & 18.57 & 5.55 & 17.53 & 14.26 & 5.89 & 17.80 & 12.58 & 6.10 & 17.87 \\
 & Global Transport        & 22.44 & \textbf{3.87} & 14.66 & 17.82 & \textbf{3.82} & 15.03 & 16.11 & \textbf{3.87} & 15.09 \\
\bottomrule
\end{tabular}%
}
\end{table*}
}

\xhdr{Distillation into a flow map} We next assess whether \gt couplings can serve as a distillation strategy for few-step generators. Following~\textcite{lee2026decoupled}, we distill a SiT-B/2 and SiT-L/2 flow matching teacher into a flow map with using the meanflow distillation objective~\parencite{geng2026mean}.
The teacher is pretrained with either independent or \gt couplings and the student is distilled with \gt in both cases. On SiT-B/2, observe that applying \gt only at distillation (i.e., from an independently pretrained teacher) matches pretraining with \gt throughout, indicating that \gt is effective purely as a distillation strategy for stronger few-step generators outperforming other coupling choices. \looseness=-1
\begin{table}[h]
\centering
\small
\caption{\textbf{Left}: FID\,$\downarrow$ on ImageNet-256 for DMF distilled with independent, class-conditional and \gt couplings, at B/2 and L/2 scale. \textbf{Right}: Curated samples from \gt DMF-B/2 at 4-NFE.\looseness=-1}
\label{tab:b2_case_study}
\label{tab:dmf_l2}
\vspace{0.6em}
\footnotesize
\setlength{\tabcolsep}{4pt}
\begin{tabular}{@{}llccc@{\hspace{1.2em}}ccc@{}}
  \toprule
  & & \multicolumn{3}{c}{Teacher (NFE)} & \multicolumn{3}{c}{Student (NFE)} \\
  \cmidrule(lr){3-5} \cmidrule(l){6-8}
  Model & Coupling & 16 & 32 & 64 & 1 & 2 & 4 \\
  \midrule
  & Independent            & 5.88 & 5.33 & 5.16 & 5.82 & 5.19 & 5.19 \\
  & Class-cond. OT  & 5.79 & 5.20 & 5.08 & 5.83 & 5.18 & 5.25 \\
  \rowcolor{aithyra-shade}\cellcolor{white}
  & \gt (from \gt teacher) & \textbf{5.15} & \textbf{4.35} & \textbf{4.15} & \textbf{5.66} & \textbf{4.17} & \textbf{4.14} \\
  \rowcolor{aithyra-shade}\cellcolor{white}\multirow{-4}{*}{DMF-B/2}
  & \gt (from ind.\ teacher) & 5.88 & 5.32 & 5.22 & 5.88 & 4.21 & 4.18 \\
  \midrule
  & Independent            & 5.50 & 5.74 & 2.95 & \textbf{3.31} & 2.82 & 2.68 \\
  \rowcolor{aithyra-shade}\cellcolor{white}\multirow{-2}{*}{DMF-L/2}
  & \gt                    & \textbf{3.94} & \textbf{3.84} & \textbf{2.69} & 4.03 & \textbf{2.46} & \textbf{2.37} \\
  \bottomrule
\end{tabular}%
\hspace{1.5em}%
\raisebox{-0.5\height}{\includegraphics[height=3.6cm]{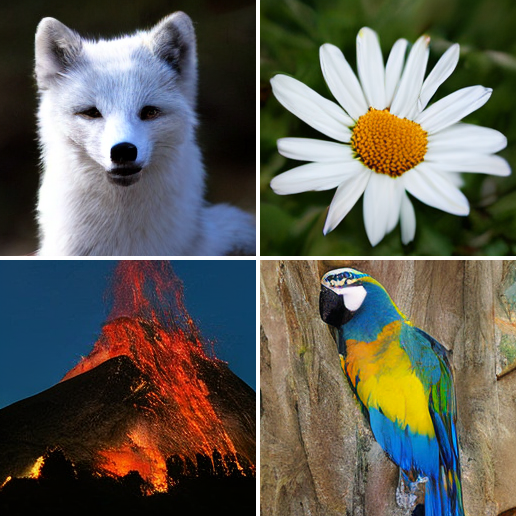}}%
\end{table}

\xhdr{Adaptive step sampling} We compare coupling plans under the adaptive \texttt{dopri5} solver as shown in Table~\ref{tab:sit_b2_nfe}. \gt attains the lowest FID and FD\textsubscript{DINOv2} overall, while semi-discrete \gt requires the fewest function evaluations and outperforms the other couplings at high guidance scales.

\cut{
\begin{table}[h]
\centering
\small
\caption{FID $\downarrow$ and NFE $\downarrow$ for SiT-B/2 over coupling plans and guidance scales $w$.}
\setlength{\tabcolsep}{4pt}
\begin{tabular}{c cc cc cc cc}
\toprule
 & \multicolumn{2}{c}{Independent} & \multicolumn{2}{c}{Class-cond.\ OT} & \multicolumn{2}{c}{\gt} & \multicolumn{2}{c}{\gt+SD-OT} \\
\cmidrule(lr){2-3}\cmidrule(lr){4-5}\cmidrule(lr){6-7}\cmidrule(lr){8-9}
$w$ & FID & NFE & FID & NFE & FID & NFE & FID & NFE \\
\midrule
1.0 & \textbf{25.01} & 50.83 & -- & -- & 29.65 & 49.17 & 42.39 & \textbf{38.00} \\
2.0 &  5.17 & 60.99 & -- & -- & \textbf{4.06} & 61.54 & 7.56 & \textbf{50.00} \\
3.0 & -- & -- & -- & -- & 8.77 & 73.45 & \textbf{4.79} & \textbf{62.80} \\
4.0 & 15.77 & 90.04 & -- & -- & 13.37 & 85.63 & \textbf{5.47} & \textbf{77.09} \\
\bottomrule
\end{tabular}
\label{tab:sit_b2_nfe}
\end{table}}

\newcolumntype{g}{>{\columncolor{aithyra-shade}}c}

\begin{table}[h]
\centering
\small
\caption{FID $\downarrow$, FD\textsubscript{DINOv2} $\downarrow$, and NFE $\downarrow$ for SiT-B/2 over coupling plans and guidance scales $w$.}
\setlength{\tabcolsep}{4pt}
\resizebox{1.0\textwidth}{!}{
\begin{tabular}{c ccc ccc ccc ccc}
\toprule
 & \multicolumn{3}{c}{Independent} & \multicolumn{3}{c}{Class-cond.\ OT} & \multicolumn{3}{c}{\gt} & \multicolumn{3}{c}{\gt+SD-OT} \\
\cmidrule(lr){2-4}\cmidrule(lr){5-7}\cmidrule(lr){8-10}\cmidrule(lr){11-13}
$w$ & FID & FD\textsubscript{DINOv2} & NFE & FID & FD\textsubscript{DINOv2} & NFE & FID & FD\textsubscript{DINOv2} & NFE & FID & FD\textsubscript{DINOv2} & NFE \\
\midrule
1.0 & \textbf{25.01} & \textbf{586.47} & 50.83 & 25.23 & 589.00 & 47.73 & \cellcolor{aithyra-shade}29.65 & \cellcolor{aithyra-shade}618.75 & \cellcolor{aithyra-shade}49.17 & \cellcolor{aithyra-shade}42.39 & \cellcolor{aithyra-shade}772.55 & \cellcolor{aithyra-shade}\textbf{38.00} \\
2.0 &  5.17 & \textbf{206.41} & 60.99 & 5.08 & 208.96 & 60.56 & \cellcolor{aithyra-shade}\textbf{4.06} & \cellcolor{aithyra-shade}220.08 & \cellcolor{aithyra-shade}61.54 & \cellcolor{aithyra-shade}7.56 & \cellcolor{aithyra-shade}341.00 & \cellcolor{aithyra-shade}\textbf{50.00} \\
3.0 & 11.13 & 138.30 & 76.39 & 10.98 & 137.98 & 75.84 & \cellcolor{aithyra-shade}8.77 & \cellcolor{aithyra-shade}130.56 & \cellcolor{aithyra-shade}73.45 & \cellcolor{aithyra-shade}\textbf{4.79} & \cellcolor{aithyra-shade}226.95 & \cellcolor{aithyra-shade}\textbf{62.80} \\
4.0 & 15.77 & 132.94 & 90.04 & 15.56 & 132.06 & 89.86 & \cellcolor{aithyra-shade}13.37 & \cellcolor{aithyra-shade}\textbf{116.34} & \cellcolor{aithyra-shade}85.63 & \cellcolor{aithyra-shade}\textbf{5.47} & \cellcolor{aithyra-shade}194.00 & \cellcolor{aithyra-shade}\textbf{77.09} \\
\bottomrule
\end{tabular}
}
\label{tab:sit_b2_nfe}
\end{table}

\begin{figure}[!ht]
    \centering
\begin{minipage}[b]{0.42\linewidth}
        \centering
        \includegraphics[width=\linewidth]{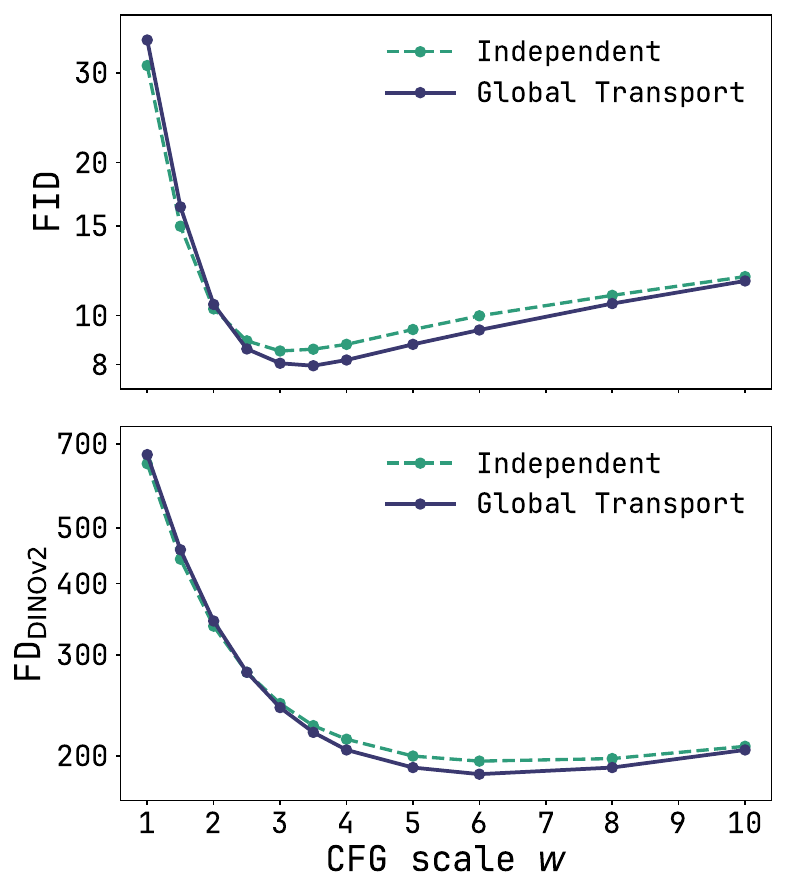}
    \end{minipage}\hspace{0.02\linewidth}%
    \begin{minipage}[b]{0.42\linewidth}
        \centering
        \includegraphics[width=\linewidth]{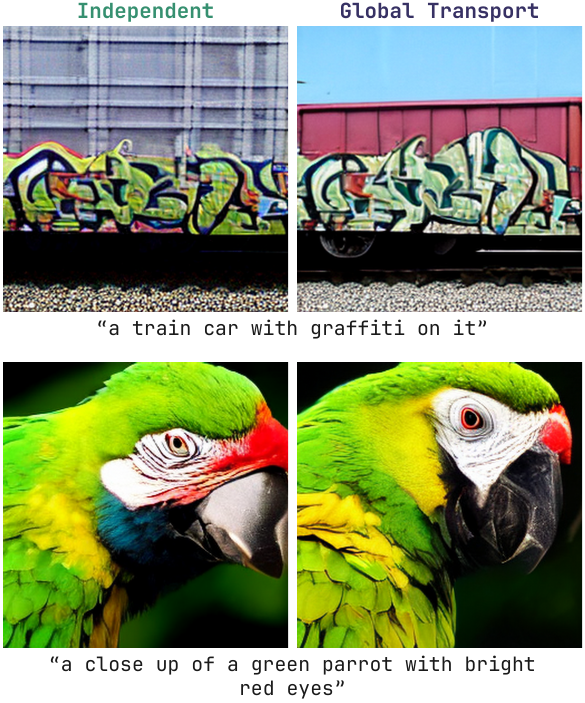}
    \end{minipage}
    \caption{\textbf{Left:} FID$\downarrow$ and FD\textsubscript{DINOv2}$\downarrow$ for text-conditioned ImageNet-256 across guidance scales $w$. \textbf{Right:} Text-conditioned ImageNet samples from independent and \gt coupling models at $w=4$.}
    \label{fig:metrics_cont_cond}
    \label{fig:captioned-imagenet}
\end{figure}

\xhdr{Continuous Conditions} We next demonstrate that \gt improves text-conditioned generation under classifier-free guidance. Following the text-conditioned ImageNet-256 setup of \textcite{cheng2025curse}, we condition on image captions from an enriched version of ImageNet~\parencite{visuallayer2024imagenet1kvl}. Captions are encoded by a frozen pretrained CLIP text encoder followed by an MLP that maps them to the conditioning signal. The \texttt{null} condition used for guidance dropout is a zero vector. As shown in Figure~\ref{fig:metrics_cont_cond}, \gt achieves lower FID and FD\textsubscript{DINOv2} at higher guidance scales.

\cut{
\begin{figure}[ht]
    \centering
    \includegraphics[width=0.8\linewidth]{figs/imagenet/imagenet_fid_fdd_cfg.pdf}
    \caption{FID$\downarrow$ and FD\textsubscript{DINOv2}$\downarrow$ for text-conditioned ImageNet-256 across guidance scales $w$.}
    \label{fig:metrics_cont_cond}
\end{figure}
\begin{figure}[h]
    \centering
    \includegraphics[width=0.8\linewidth]{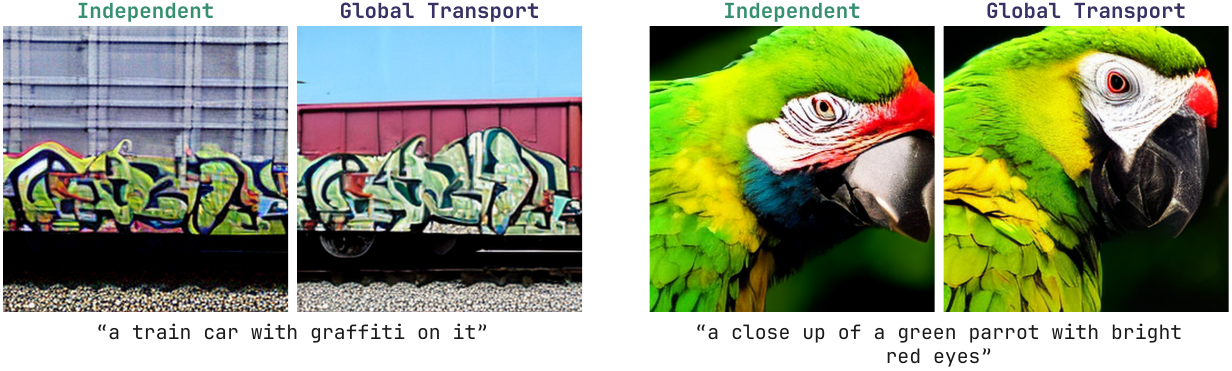}
    \caption{Text-conditioned imagenet samples from independent and \globalt coupling models.}
    \label{fig:captioned-imagenet}
\end{figure}}
\cut{\xhdr{Results on CIFAR-10} 
We evaluate global coupling on CIFAR-10, training a SiT-S/2 model, confirming
our observations on ImageNet-256.  
\begin{wrapfigure}[9]{r}{0.45\textwidth}
  \vspace{-\intextsep}
  \centering
  \includegraphics[width=0.43\textwidth]{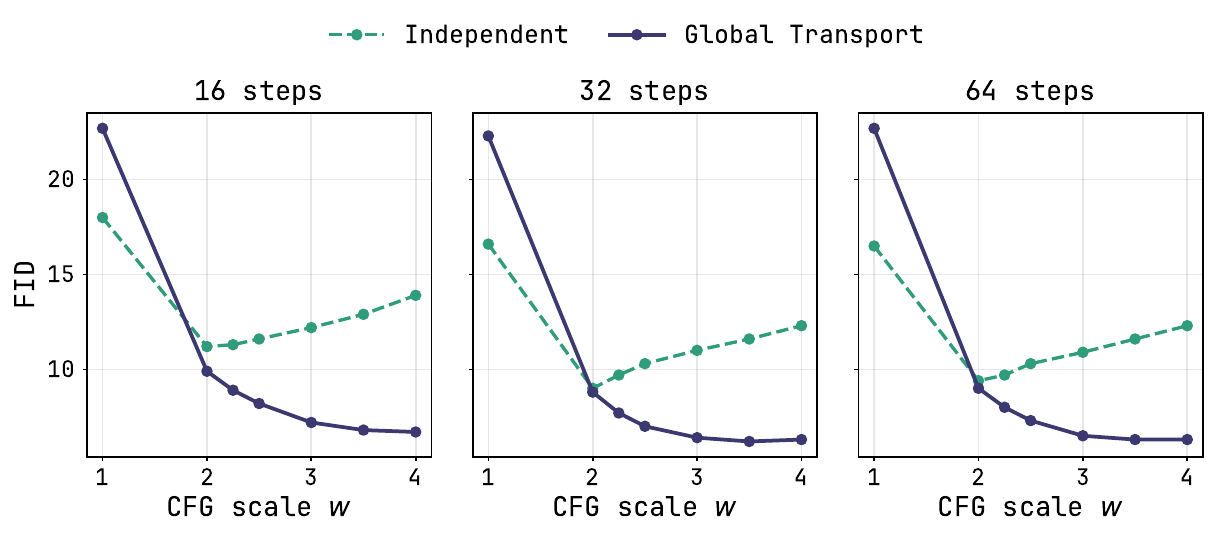}
  \caption{FID vs.\ guidance scale on CIFAR-10 (SiT-S/2) for independent
  vs.\ mini-batch OT coupling across sampling steps.}
  \label{cifar10_results}
\end{wrapfigure}
Figure~\ref{cifar10_results} shows that
global transport consistently improves class-conditional generation across
$\text{NFE}=\{16, 32, 64\}$.}



\begin{table}[h]
\centering
\caption{Comparison of \gt with single-cell generative models on distribution-matching metrics (RBF-kernel MMD and 2-Wasserstein distance), averaged over three seeds.}
\label{tab:main_results}
\setlength{\tabcolsep}{4pt}
\resizebox{0.9\textwidth}{!}{%
\begin{tabular}{lcccccc}
\toprule
& \multicolumn{2}{c}{PBMC3K} & \multicolumn{2}{c}{Dentate gyrus}
& \multicolumn{2}{c}{HLCA} \\
\cmidrule(lr){2-3} \cmidrule(lr){4-5} \cmidrule(lr){6-7} 
& MMD ($\downarrow$) & WD ($\downarrow$)
& MMD ($\downarrow$) & WD ($\downarrow$)
& MMD ($\downarrow$) & WD ($\downarrow$) \\
\midrule
c-CFGen             & 0.45 $\pm$ 0.00 & 11.17 $\pm$ 0.02 & \underline{0.06 $\pm$ 0.00} & 7.26 $\pm$ 0.03 & 
\textbf{0.06 $\pm$ 0.00} & 5.06 $\pm$ 0.00 \\
c-CFGen-linear     & \underline{0.41 $\pm$ 0.00} & \underline{10.27 $\pm$ 0.08} & \underline{0.06 $\pm$ 0.00} & 6.69 $\pm$ 0.01 & 
\underline{0.07 $\pm$ 0.00} & \textbf{4.89 $\pm$ 0.01} \\
scDiffusion & 0.67 $\pm$ 0.06 & 12.11 $\pm$ 0.16 & \underline{0.06 $\pm$ 0.00} & \textbf{5.89 $\pm$ 0.01} & 
0.12 $\pm$ 0.00 & 5.42 $\pm$ 0.01 \\
scVI        & 0.58 $\pm$ 0.01 & 13.39 $\pm$ 0.16 & 0.11 $\pm$ 0.00 & 7.34 $\pm$ 0.03 & 
0.13 $\pm$ 0.00 & 6.39 $\pm$ 0.01 \\
\midrule
\rowcolor{aithyra-shade}\gt \textbf{(ours)}     & \textbf{0.39 $\pm$ 0.00} & \textbf{9.80 $\pm$ 0.02} & \textbf{0.05 $\pm$ 0.00} & \underline{6.61 $\pm$ 0.01} & 
\underline{0.07 $\pm$ 0.00} & \underline{4.98 $\pm$ 0.00} \\
\bottomrule
\end{tabular}%
}
\end{table}

\subsection{Single-cell Experiments} We evaluate \gt on conditional single-cell generation following~\textcite{palma2025multimodal}, conditioning on cell type for PBMC3K\footnote{\url{https://satijalab.org/seurat/articles/pbmc3k_tutorial.html}}, Dentate gyrus~\parencite{la2018rna} and HLCA~\parencite{sikkema2023integrated}. Against c-CFGen~\parencite{palma2025multimodal}, its linear interpolant variant, scDiffusion~\parencite{luo2024scdiffusion} and scVI~\parencite{gayoso2021joint}, \gt is best on three of six metrics and second best on three out of six (Table~\ref{tab:main_results}, details in Appendix~\ref{appendix:single_cell}).

\section{Related Work}
\xhdr{Optimal Transport for Generative Models} Optimal transport~\parencite{benamou2000computational} is widely used to improve unconditional~\parencite{tong_conditional_2023, tong_simulation-free_2023, pmlr-v202-pooladian23a, calvoordonez2026weightedconditionalflowmatching} and class-conditional~\parencite{cheng2025curse, chemseddine2025conditional} generation, at scale via semi-discrete potentials~\parencite{mousavi-hosseini2026flow, kong2026alignflow} and ``re-flow''-style strategies that exploit flow invertibility~\parencite{kim2025simple, berthelot2026coupling}. In biology, it has been applied to single-cell trajectory inference~\parencite{schiebinger_optimal-transport_2019, kapusniak2024metric, petrovic2025curly} and measure-to-measure transport~\parencite{haviv2025wasserstein,vandergrift2026measuretomeasureregressiontransformers}.

\xhdr{Flow Maps} Flow maps~\parencite{frans2025one,geng2026mean, boffi2026build} have recently emerged as an efficient route to one- and few-step generators, either distilled~\parencite{sabour2026align, lee2026decoupled} or trained from scratch~\parencite{geng2026mean, boffi2026build, geng2026improved}, with applications to image~\parencite{lu2026onestep, wang2026transition} and video~\parencite{gu2026anyflow, shaul2026paralleldecodingdistillationfast} generation. \looseness=-1

\xhdr{Classifier-Free Guidance for Flow Models} Classifier-free guidance (CFG)~\parencite{ho2022classifier} is a widely adopted strategy for improving conditional generation with diffusion and flow matching models~\parencite{zheng2023guidedflowsgenerativemodeling}. Subsequent work includes guidance interval tuning~\parencite{kynkaanniemi2024applying}, velocity field projection~\parencite{fan2025cfg, cai2026improving}, guidance weight schedules~\parencite{chung2025cfg++,wang2024analysisclassifierfreeguidanceweight, galashov2026learn} and guiding with a weaker checkpoint of the same model~\parencite{karras2024guiding}.\looseness=-1

\section{Conclusion}
In this work, we studied how coupling choice affects conditional flow models sampled with classifier-free guidance. We introduced \globalt (\gt), which constructs \emph{condition-agnostic} OT couplings and requires no change to the model architecture, sampler, or guidance rule. Although this choice may lead to a mismatch between the source distribution seen by each condition during training and degrades performance in the unguided conditional generation setting, in our experiments \gt improves guided generation across all settings we study. This contrast shows that unguided performance is not a reliable basis for choosing a coupling when CFG is used during inference, and opens up a new direction of inquiry in designing couplings for more exotic inference strategies.

\paragraph{Limitations and future work} Our prediction gap bound does not control the learned fields along CFG trajectories or guarantee a resulting hierarchy of generation quality. Understanding when each construction is preferable and how coupling choice interacts with other guidance and post-training methods are useful directions for future work.  As CFG is primarily used in image and video generation, transferring these gains to domains outside of cells in the life sciences, where other factors dominate, remains open.


\subsection*{Acknowledgments}
The authors would like to thank Romeo Passaro who participated in planting the seeds of this idea, initial experiments and discussions, as well as Scott le Roux for feedback on initial draft. Danyal Rehman received financial support from the Natural Sciences and Engineering Research Council’s (NSERC) Banting Postdoctoral Fellowship under Funding Reference No. 198506. 
Lazar Atanackovic was supported by the Canada CIFAR AI Chairs program.
The research was enabled in part by computational resources provided by AITHYRA (\url{https://aithyra.at}), the Digital Research Alliance of Canada
(\url{https://alliancecan.ca}), the Alberta Machine Intelligence Institute (\url{https://www.amii.ca}), and NVIDIA. 
AITHYRA is supported by the Austrian Academy of Sciences and the not-for-profit Boehringer Ingelheim Stiftung.  
This research is partially supported by EPSRC Turing AI World-Leading Research Fellowship No.
EP/X040062/1 and EPSRC AI Hub on Mathematical Foundations of Intelligence: An ``Erlangen Programme'' for AI No. EP/Y028872/1.

\newrefcontext[sorting=nyt]
\printbibliography[heading=bibintoc]
\newpage
\appendix
\crefalias{section}{appendix}
\crefalias{subsection}{appendix}
\crefalias{subsubsection}{appendix}

\startcontents[]
\printcontents[]{l}{1}[3]{{\bfseries \large Appendices}}

\section{Synthetic Example}
\label{appendix: case_study_toy}
\xhdr{Dataset construction} We construct a 40-Gaussian-Mixture with 4 classes. Following~\cite{midgley2023flow}, we use a two-dimensional 40-component Gaussian mixture (40-GMM). All components have equal weight and share the isotropic covariance
\begin{equation}
    \Sigma = \begin{pmatrix} 40 & 0 \\ 0 & 40 \end{pmatrix},
\end{equation}
with means $\mu_i$ drawn uniformly from the box $[-40,40]^2$, i.e.\ $\mu_i \sim \mathcal{U}(-40,40)^2$, giving the density
\begin{equation}
    p_{\mathrm{gmm}}(x) = \frac{1}{40}\sum_{i=1}^{40} \mathcal{N}(x;\,\mu_i,\Sigma).
\end{equation}

The ground truth is shown in figure \ref{fig:ground_truth_toy}.
\begin{figure}[h]
    \centering
    \includegraphics[width=0.5\linewidth]{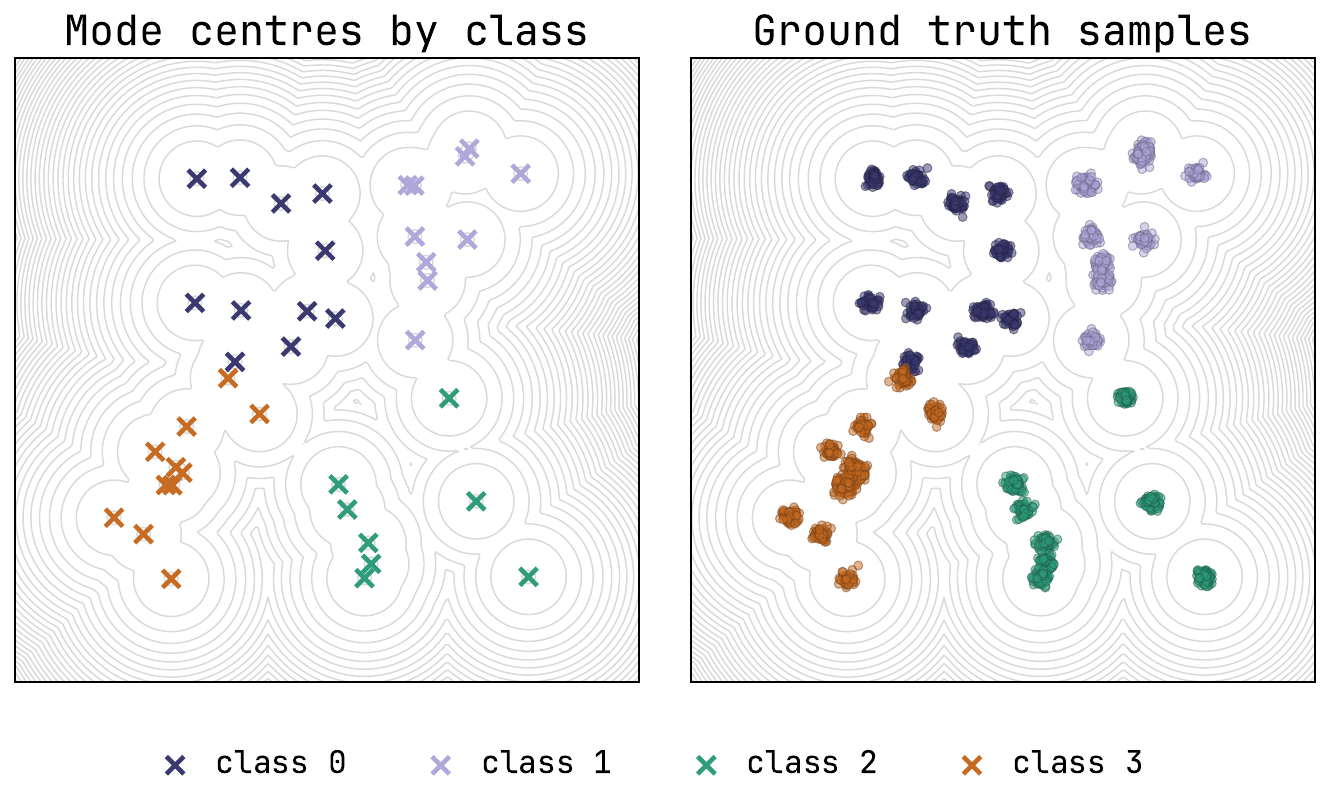}
    \caption{40-GMM ground truth data. Left: Mode centers by class, Right: Ground truth samples.}
    \label{fig:ground_truth_toy}
\end{figure}

\xhdr{Generated trajectories} 
We provide generated trajectories across a wide range of guidance scales for Independent, Class-conditional and \gt transport. We observe that ``yo-yo'' is present in Independent with trajectories curving inwards and outwards close to the conditioned class. 
\begin{figure}[h]
    \centering
    \includegraphics[width=\linewidth]{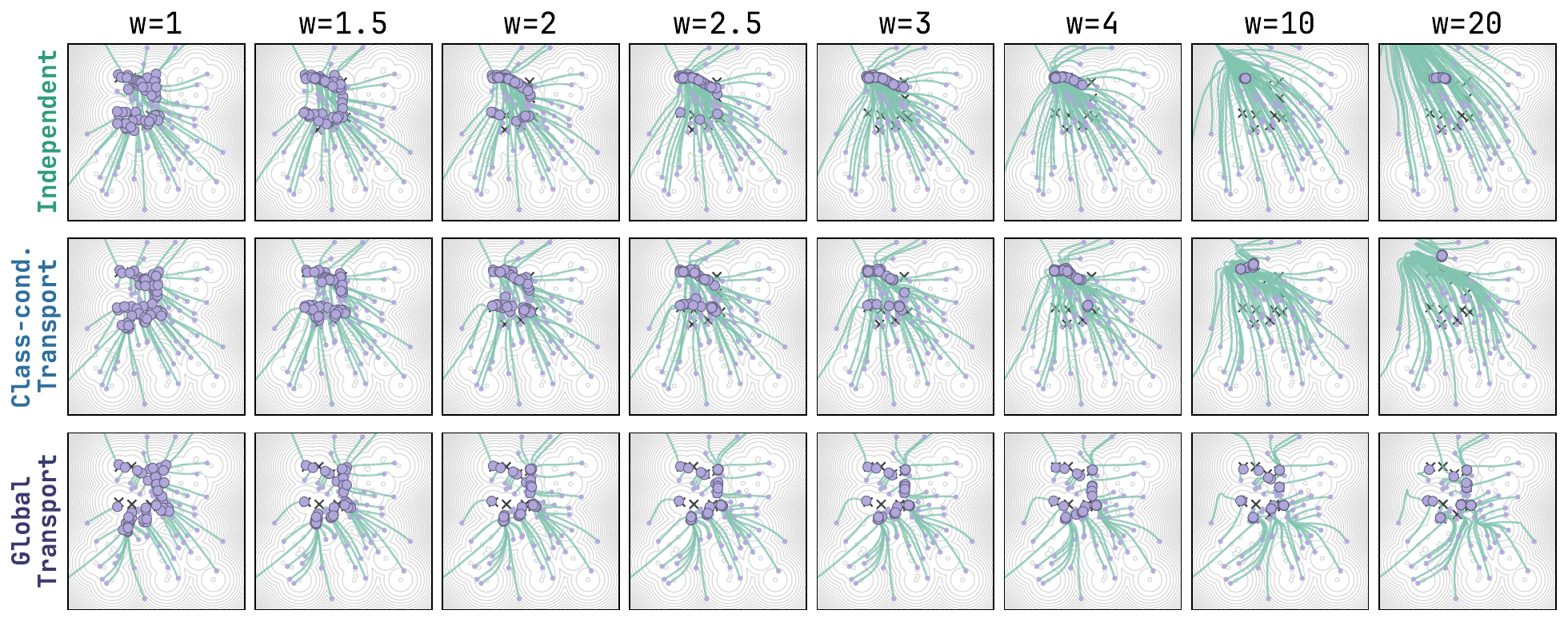}
    \caption{Generated trajectories for Independent, Class-conditional and \gt on 40-GMM.}
    \label{fig:toy_trajs}
\end{figure}

\xhdr{Velocity Field Heatmaps} We compute heatmaps for unconditional and conditional velocity fields comparing different coupling plans. Under the independent coupling, both unconditional and conditional field start as a sink located around class origin, reorganizing rapidly at $t=0.5$. Under \gt we observe that unconditional velocity field forms already at $t=0$ and remains stable across the trajectory.\looseness=-1 
\begin{figure}[h]
    \centering
    \includegraphics[width=0.8\linewidth]{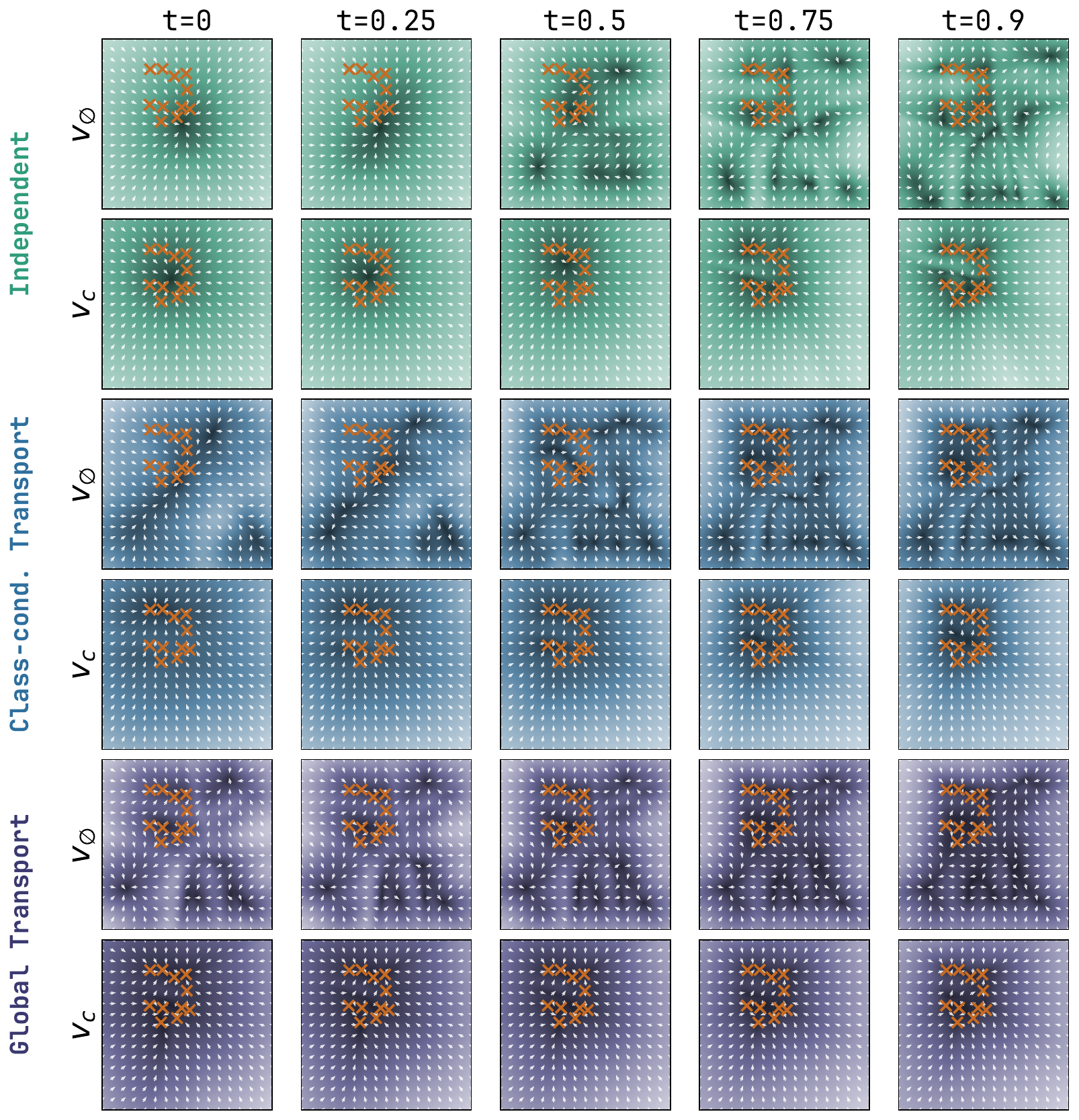}
    \caption{Velocity field heat maps for conditional $v_c$ and unconditional $v_{\varnothing}$ for Independent, Class-conditional and \gt transport.}
    \label{fig:toy_fields}
\end{figure}

\xhdr{Additional samples} We provide additional generated samples to accompany figure \ref{fig:toy-guidance} demonstrating \gt does not drop off the modes even in cases of extremely high guidance scale $w=20.$
\begin{figure}[h]
    \centering
    \includegraphics[width=\linewidth]{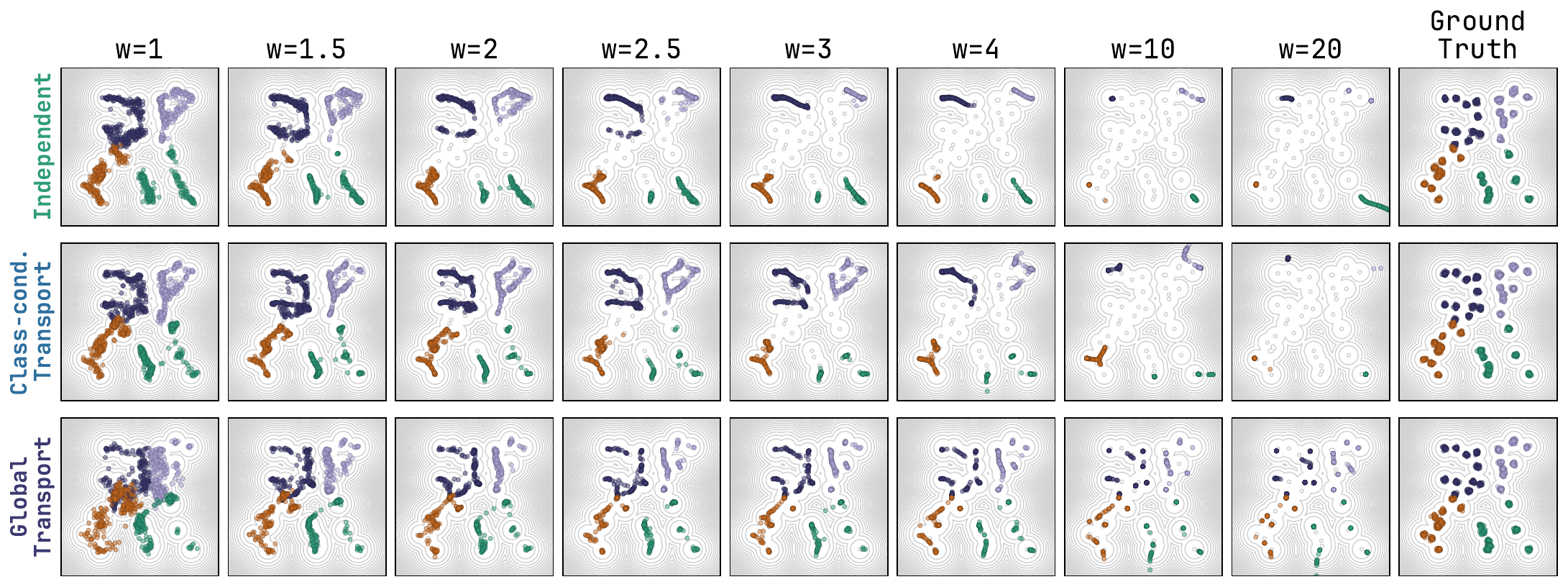}
    \caption{Additional 40-GMM samples across a range of guidance scales.}
    \label{fig:placeholder}
\end{figure}
\section{Proofs and Derivations}
\subsection{Relationship between coupling and prediction gap}
\label{appendix: g_proof}
For the ease of notation we will write $u_t(x_t)$ as $v_u$ and $u_t(x_t\mid c)$ as $v_c$.

\begin{proposition}
    For any coupling $\pi$,
\(
    \:\:\mathbb{E}_\pi\left[
        \left\|
        u_t(x_t\mid c)-u_t(x_t)
        \right\|^2
    \right]
    =
    L_u^\pi(t)-L_c^\pi(t).
\)
\end{proposition}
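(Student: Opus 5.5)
Here $L_u^\pi(t)$ and $L_c^\pi(t)$ are read as the minimal population regression errors at time $t$ for the common target $x_1-x_0$ under $\pi$:
\[
L_u^\pi(t)=\mathbb{E}_\pi\|u_t(x_t)-(x_1-x_0)\|^2,
\qquad
L_c^\pi(t)=\mathbb{E}_\pi\|u_t(x_t\mid c)-(x_1-x_0)\|^2 .
\]
The minimizers are conditional expectations, $u_t(x_t)=\mathbb{E}_\pi[x_1-x_0\mid x_t]$ and $u_t(x_t\mid c)=\mathbb{E}_\pi[x_1-x_0\mid x_t,C=c]$. This is the standard fact that squared-error regression is solved by the conditional mean, as assumed in Proposition~\ref{prop:gap-transport}. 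The plan is a Pythagorean decomposition, i.e.\ the law of total variance along the nested $\sigma$-algebras $\sigma(x_t)\subset\sigma(x_t,C)$.

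Write the residual of the unconditional predictor as a sum:
\[
(x_1-x_0)-v_u=\bigl[(x_1-x_0)-v_c\bigr]+\bigl[v_c-v_u\bigr].
\]
Expanding the squared norm and taking expectations under $\pi$ gives
\[
L_u^\pi(t)=L_c^\pi(t)+\mathbb{E}_\pi\|v_c-v_u\|^2+2\,\mathbb{E}_\pi\langle (x_1-x_0)-v_c,\;v_c-v_u\rangle .
\]
So the claim reduces to showing that the cross term vanishes.

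The cross term is handled by the tower property. The factor $v_c-v_u$ is a function of $(x_t,C)$ alone, so
\[
\mathbb{E}_\pi\langle (x_1-x_0)-v_c,\,v_c-v_u\rangle
=\mathbb{E}_\pi\bigl\langle \mathbb{E}_\pi[(x_1-x_0)-v_c\mid x_t,C],\;v_c-v_u\bigr\rangle ,
\]
and the inner conditional expectation is zero by the definition of $v_c$. Rearranging then yields $\mathbb{E}_\pi\|v_c-v_u\|^2=L_u^\pi(t)-L_c^\pi(t)$ for each fixed $t$, for any coupling $\pi$.

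The only point needing care is integrability and well-definedness. I will assume $\mathbb{E}_\pi\|x_1-x_0\|^2<\infty$; this holds whenever $p_0$ and $p_1$ have finite second moments, since then $\mathbb{E}_\pi\|x_1-x_0\|^2\le 2\mathbb{E}\|x_0\|^2+2\mathbb{E}\|x_1\|^2$. Under this assumption both conditional expectations are $L^2$ projections, so all terms are finite and the splitting is legitimate. With that in place, the identity is just orthogonality of the projection residual onto $L^2(\sigma(x_t,C))$ against the subspace $L^2(\sigma(x_t,C))$, which contains $v_c-v_u$, and the argument is otherwise routine.
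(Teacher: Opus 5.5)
Your proof is correct and matches the paper's argument essentially step for step: the add-and-subtract decomposition $u-v_u=(u-v_c)+(v_c-v_u)$, expanding the square, and killing the cross term by conditioning on $(x_t,c)$ and using $v_c=\mathbb{E}_\pi[u\mid x_t,c]$. Your explicit finite-second-moment assumption, which makes every term finite and justifies reading the conditional expectations as $L^2$ projections, is a small rigor addition that the paper leaves implicit.
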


\begin{proof}    
We can write the following by adding and subtracting $u$ to $g = v_c - v_u$
\begin{equation}
    u - v_u = (u - v_c) + (v_c-v_u)
\end{equation}
By squaring and taking the norm we get
\begin{align}
    \|u-v_u\|^2
    &=
    \|u-v_c\|^2
    +
    \|v_c-v_u\|^2
    \nonumber\\
    &\quad+
    2\langle u-v_c,v_c-v_u\rangle.
    \label{eq:proof_expansion}
\end{align}
We now take expectation over $(x_0, x_1, c)\sim \pi$ 
\begin{align}
    \mathbb{E}_{\pi}\|u-v_u\|^2
    &=
    \mathbb{E}_{\pi}\|u-v_c\|^2
    +
    \mathbb{E}_{\pi}\|v_c-v_u\|^2
    \nonumber\\
    &\quad+
    2\mathbb{E}_{\pi}\langle u-v_c,v_c-v_u\rangle.
    \label{eq:proof_expansion_exp}
\end{align}
The expectation of the final term is zero:
\begin{align}
&\mathbb{E}_\pi
\left[
    \langle u-v_c,v_c-v_u\rangle
\right]
\nonumber\\
&=
\mathbb{E}_\pi
\left[
    \mathbb{E}_\pi
    \left[
        \langle u-v_c,v_c-v_u\rangle
        \mid x_t,c
    \right]
\right]
\nonumber\\
&=
\mathbb{E}_\pi
\left[
    \left\langle
        \mathbb{E}_\pi[u-v_c\mid x_t,c],
        v_c-v_u
    \right\rangle
\right].
\end{align}
By the definition of the conditional field,
\begin{equation}
    v_c(x_t,t,c)
    =
    \mathbb{E}_\pi[u\mid x_t,c],
\end{equation}
and therefore
\begin{equation}
    \mathbb{E}_\pi[u-v_c\mid x_t,c]=0.
\end{equation}
Hence
\begin{equation}
    \mathbb{E}_\pi
    \left[
        \langle u-v_c,v_c-v_u\rangle
    \right]
    =0.
\end{equation}

Taking expectations in \eqref{eq:proof_expansion} now gives
\begin{align}
    \mathbb{E}_\pi\|u-v_u\|^2
    &=
    \mathbb{E}_\pi\|u-v_c\|^2
    +
    \mathbb{E}_\pi\|v_c-v_u\|^2.
\end{align}
Using the definitions of \(L_u^\pi(t)\) and \(L_c^\pi(t)\), we obtain
\begin{equation}
    \boxed{\mathbb{E}_\pi\|v_c-v_u\|^2
    =
    L_u^\pi(t)-L_c^\pi(t)}
\end{equation}
which proves the result.
\end{proof}

\begin{proposition}
    The error term $L_u^\pi(t)$ can be upper bounded as
    \vspace{-1em}
    \begin{equation}
    \label{eq:bound}
        \textstyle\int_0^1L_u^\pi(t)\:\mathrm{d}t\leq c(\pi)-\mathcal{W}_2^2(p_0,p_1) =: d(\pi),
    \end{equation}
    where $c(\pi)$ is the coupling cost and $d(\pi)\geq0$ denotes the excess quadratic transport cost, i.e., the suboptimality of $\pi$ relative to the population-optimal coupling~\parencite{boite2026expected}.
    \end{proposition}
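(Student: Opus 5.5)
The plan is to write the unconditional regression error as a variance and then control the remaining kinetic-energy term with the Benamou--Brenier formula.

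Write $u = x_1 - x_0$ for the flow-matching target. Since $v_u = u_t(x_t) = \mathbb{E}_\pi[u \mid x_t]$ is the population minimizer, the same orthogonality argument as in the preceding proposition (conditioning on $x_t$ alone instead of $(x_t,c)$) gives the Pythagorean identity
\begin{equation*}
L_u^\pi(t) = \mathbb{E}_\pi\|u - v_u\|^2 = \mathbb{E}_\pi\|x_1-x_0\|^2 - \mathbb{E}_{x_t\sim p_t}\|v_u(x_t)\|^2 .
\end{equation*}
The first term does not depend on $t$. Integrating over $t\in[0,1]$ therefore yields
\begin{equation*}
\int_0^1 L_u^\pi(t)\,\mathrm{d}t = c(\pi) - \int_0^1 \mathbb{E}_{p_t}\|v_u(x_t)\|^2\,\mathrm{d}t .
\end{equation*}
The claim then reduces to the lower bound $\int_0^1 \mathbb{E}_{p_t}\|v_u\|^2\,\mathrm{d}t \geq \mathcal{W}_2^2(p_0,p_1)$. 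Nonnegativity of $d(\pi)$ is immediate, both from $L_u^\pi\ge 0$ and from the definition of $\mathcal{W}_2^2$ as the infimum of $c(\cdot)$ over $\Pi(p_0,p_1)$.

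To obtain that lower bound I would invoke Benamou--Brenier. The step is to check that $(p_t, v_u)$ is an admissible pair, i.e.\ that it solves the continuity equation $\partial_t p_t + \nabla\cdot(p_t v_u) = 0$ weakly with $p_0$ and $p_1$ as endpoints. Take a test function $\varphi\in C_c^\infty(\mathbb{R}^d)$. Then
\begin{equation*}
\tfrac{\mathrm{d}}{\mathrm{d}t}\mathbb{E}_\pi[\varphi(x_t)] = \mathbb{E}_\pi\langle \nabla\varphi(x_t), x_1-x_0\rangle = \mathbb{E}_{p_t}\langle \nabla\varphi(x_t), v_u(x_t)\rangle ,
\end{equation*}
where the second equality is the tower property applied to $\mathbb{E}[\,\cdot \mid x_t]$. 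The endpoints are correct because the $(x_0,x_1)$ marginal of $\pi$ lies in $\Pi(p_0,p_1)$. Benamou--Brenier then states that the kinetic energy of any such pair is at least $\mathcal{W}_2^2(p_0,p_1)$, which closes the argument.

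The main obstacle is the technical side of this step. Benamou--Brenier requires a finite-energy curve in $\mathcal{P}_2$, and $v_u$ must be Borel measurable with $v_u \in L^2(p_t)$ for almost every $t$. I would secure this with conditional Jensen: $\mathbb{E}_{p_t}\|v_u\|^2 \le \mathbb{E}_\pi\|x_1-x_0\|^2 = c(\pi)$. This is finite whenever $p_0$ and $p_1$ have finite second moments, which is assumed implicitly so that $c(\pi)$ and $\mathcal{W}_2$ make sense. With this bound the pair falls under the version of the theorem stated for absolutely continuous curves in $\mathcal{P}_2$, so no smoothness of $v_u$ is needed.
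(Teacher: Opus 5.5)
Your proposal is correct and follows the paper's proof essentially step for step: the tower-property identity $L_u^\pi(t)=c(\pi)-\mathbb{E}_{p_t}\|v_u(x_t)\|^2$, integration in $t$, and then the Benamou--Brenier lower bound $\int_0^1\mathbb{E}_{p_t}\|v_u\|^2\,\mathrm{d}t\ge\mathcal{W}_2^2(p_0,p_1)$. The paper simply assumes that $(p_t,v_u)$ is an admissible pair, whereas you verify it: you check the weak continuity equation and bound the energy by conditional Jensen.
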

    \begin{proof}
        
    Let us consider coupling cost $c(\pi)=\mathbb{E}_\pi\left[\left\|x_1-x_0\right\|^2\right]$ and ideal minimizer of the unconditional field $ v_u(x,t)=\mathbb{E}_{\pi}\left[u\,\middle|\,x=x_t\right]$, where $u = x_1-x_0$. We assume that the joint distribution $\pi(x_0,x_1)$ induces a probability path $x_t\sim p_t$. We can then express time-dependent error $L_u^\pi(t)$ as
\begin{align}
    L_u^\pi(t)
    &=
    \mathbb{E}_{\pi}
    \left[
        \left\|
            u-v_u(x_t,t)
        \right\|^2
    \right]
    \\
    &=
    \mathbb{E}_\pi
        \left[\left\|u\right\|^2\right] -2\underbrace{\mathbb{E}_\pi\langle u,v_u(x_t,t)\rangle}_{\star} + \mathbb{E}_{x_t\sim p_t}
        \left[\left\|v_u(x_t,t)\right\|^2\right] 
\end{align}
We further express $\star$ as 
\begin{align}
    \mathbb{E}_\pi\big[\langle u,\,v_u(x_t,t)\rangle\big]
    &= \mathbb{E}_{x_t\sim p_t}
       \Big[\big\langle
            \underbrace{\mathbb{E}_\pi\!\left[u\mid x_t\right]}_{=\;v_u(x_t,t)},\;
            v_u(x_t,t)
        \big\rangle\Big] \notag\\[4pt]
    &= \mathbb{E}_{x_t\sim p_t}\big[\langle v_u(x_t,t),\,v_u(x_t,t)\rangle\big] \notag\\[2pt]
    &= \mathbb{E}_{x_t\sim p_t}\big[\left\|v_u(x_t,t)\right\|^2\big].
\end{align}
From here it follows
\begin{equation}
    L_u^\pi(t)=\underbrace{\mathbb{E}_\pi
        \left[\left\|u\right\|^2\right]}_{c(\pi)} - \mathbb{E}_{x_t\sim p_t}\left[\left\|v_u(x_t,t)\right\|^2\right]
\end{equation}
or equivalently
\begin{equation}
    L_u^\pi(t)=c(\pi) - \mathbb{E}_{x_t\sim p_t}\left[\left\|v_u(x_t,t)\right\|^2\right]
\end{equation}
If we integrate both sides with respect to $t$
\begin{equation}
    \int_0^1L_u^\pi(t)\mathrm{d}t=c(\pi) - \int_0^1\mathbb{E}_{x_t\sim p_t}\left[\left\|v_u(x_t,t)\right\|^2\right]\mathrm{d}t
\end{equation}
Using \textcite{benamou2000computational} we can express
\begin{equation}
    \int_0^1\mathbb{E}_{x_t\sim p_t}\left[\left\|v_u(x_t,t)\right\|^2\right]\mathrm{d}t \geq \mathcal{W}_2^2(p_0,p_1)
\end{equation}
From which identity follows
 \begin{equation}
        \boxed{\int_0^1L_u^\pi(t)\mathrm{d}t\leq \underbrace{c(\pi)-\mathcal{W}_2^2(p_0,p_1)}_{d(\pi)}}.
    \end{equation}
    \end{proof}
\cut{\subsection{Expressing $L_u^\pi$ and $L_c^\pi$ as conditional variance}
We can further identify that error terms $L^\pi_u(t)$ and  $L^\pi_c(t)$ can be expressed as conditional variance following the law of total variance
\begin{align}
    L_u^\pi(t)
    &=
    \mathbb{E}_{\pi}
    \left[
        \left\|
            u-v_u(x_t,t)
        \right\|^2
    \right]
    \\
    &=
    \mathbb{E}_{x_t\sim p_t}
    \left[
        \mathbb{E}_{\pi}
        \left[
            \left\|
                u-\mathbb{E}_{\pi}[U\mid x_t]
            \right\|^2
            \,\middle|\,
            x_t
        \right]
    \right]
    \\
    &=
    \mathbb{E}_{x_t\sim p_t}
    \left[
        \operatorname{Var}_{\pi}(u\mid x_t)
    \right].
\end{align}
Following above we can also write $L_c^\pi(t) = \mathbb{E}_{x_t\sim p_t}
    \left[\operatorname{Var}_{\pi}(u\mid x_t,c)\right]$
}

\subsection{Proof of Proposition 4.2: Hierarchy of Coupling Costs}
\label{appendix:proof_coupling_order}

\begin{proposition}[Coupling Cost Ordering]
For all $\beta \ge 0$, the quadratic transport costs of the global transport coupling $\pi_{\gt}^\star$, the class-conditional optimal transport coupling $\pi_{\mathrm{C^2OT}(\beta)}^\star$, and the independent coupling $\pi_{\mathrm{ind}} = p_0 \otimes p_1$ satisfy:
\begin{equation}
    \operatorname{Cost}(\pi_{\gt}^\star)
    \le
    \operatorname{Cost}(\pi_{\mathrm{C^2OT}(\beta)}^\star)
    \le
    \operatorname{Cost}(\pi_{\mathrm{ind}}),
\end{equation}
with the lower equality achieved when $\beta = 0$.
\end{proposition}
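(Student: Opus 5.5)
The plan is to read all three couplings as elements of the common feasible set $\Pi(p_0,p_1)$ and compare their quadratic costs $\operatorname{Cost}(\pi)=\mathbb{E}_\pi\|x_1-x_0\|^2$. No analytic machinery is needed: the ordering should follow from optimality over this common set together with a nonnegative penalty.

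First I fix the objects. Data points carry labels, so $p_1$ is the law of $(x_1,c_1)$. In $\mathrm{C^2OT}(\beta)$ the source is augmented with a label $c_0$, drawn independently from the label marginal so that conditional sampling from $p_0$ is respected. The coupling $\pi_{\mathrm{C^2OT}(\beta)}^\star$ minimizes
\[
J_\beta(\pi)=\mathbb{E}_\pi\bigl[\|x_0-x_1\|^2+\beta\|c_0-c_1\|^2\bigr]
\]
over couplings whose $(x_0,x_1)$-marginal lies in $\Pi(p_0,p_1)$. Projecting any such coupling to its $(x_0,x_1)$-marginal gives an element of $\Pi(p_0,p_1)$ with the same $\operatorname{Cost}$. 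By definition $\pi_{\gt}^\star=\pi_{\rm OT}$ minimizes $\operatorname{Cost}$ over $\Pi(p_0,p_1)$, and this immediately gives the left inequality:
\[
\operatorname{Cost}(\pi_{\gt}^\star)=\mathcal{W}_2^2(p_0,p_1)\le \operatorname{Cost}(\pi_{\mathrm{C^2OT}(\beta)}^\star).
\]

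For the right inequality, I use that the independent coupling $\pi_{\mathrm{ind}}=p_0\otimes p_1$, extended by the independent label $c_0$, is feasible for the $\mathrm{C^2OT}$ problem. Since $\beta\|c_0-c_1\|^2\ge0$, the chain is
\[
\operatorname{Cost}(\pi_{\mathrm{C^2OT}(\beta)}^\star)\le J_\beta(\pi_{\mathrm{C^2OT}(\beta)}^\star)\le J_\beta(\pi_{\mathrm{ind}}).
\]
The last term equals $\operatorname{Cost}(\pi_{\mathrm{ind}})+\beta\,\mathbb{E}\|c_0-c_1\|^2$. This is \textbf{the main obstacle}: the penalty is not zero under the independent coupling, so the chain only bounds $\operatorname{Cost}(\pi_{\mathrm{C^2OT}(\beta)}^\star)$ by $\operatorname{Cost}(\pi_{\mathrm{ind}})$ plus a penalty.

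To remove the penalty I would try a different feasible competitor. Take $\tilde\pi$ that pairs labels identically, $c_0=c_1$, and within each label class pairs $x_0$ independently with $x_1\sim p_1(\cdot\mid c)$. This is legitimate because the source labels are independent of $x_0$, so the source law given $c_0$ is still $p_0$. Then $J_\beta(\tilde\pi)=\mathbb{E}_{c}\mathbb{E}_{p_0\otimes p_1(\cdot|c)}\|x_0-x_1\|^2$. When $p_0$ is centered this equals $\mathbb{E}\|x_0\|^2+\mathbb{E}\|x_1\|^2=\operatorname{Cost}(\pi_{\mathrm{ind}})$, because the cross term $-2\langle \mathbb{E}x_0,\mathbb{E}[x_1\mid c]\rangle$ vanishes. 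So I add the standing assumption that $p_0$ has mean zero, as for $\mathcal{N}(0,I_d)$. With it,
\[
\operatorname{Cost}(\pi_{\mathrm{C^2OT}(\beta)}^\star)\le J_\beta(\pi_{\mathrm{C^2OT}(\beta)}^\star)\le J_\beta(\tilde\pi)=\operatorname{Cost}(\pi_{\mathrm{ind}}).
\]
The same argument works in minibatch form with Hungarian assignments within each class.

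Finally, for the equality at $\beta=0$: the objective $J_0$ is exactly $\operatorname{Cost}$ on the $(x_0,x_1)$-marginal. Its minimizers therefore project to minimizers of $\operatorname{Cost}$ over $\Pi(p_0,p_1)$, namely the OT coupling, and $\operatorname{Cost}(\pi_{\mathrm{C^2OT}(0)}^\star)=\mathcal{W}_2^2(p_0,p_1)=\operatorname{Cost}(\pi_{\gt}^\star)$.
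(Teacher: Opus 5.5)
Your proof is correct. The left inequality and the $\beta=0$ equality are argued as in the paper. For the right inequality, the paper starts with the same chain through $J_\beta(\pi_{\mathrm{ind}})$ and hits the same leftover penalty $\beta\,\mathbb{E}\|c_0-c_1\|^2$. It then:
\begin{itemize}
\item passes to the hard-constraint problem ($\beta\to\infty$, $c_0=c_1$) and splits it into per-class OT problems;
\item bounds each of these by the product coupling $p_0\otimes p_1(\cdot\mid c)$ and averages over $p(c)$ to recover $\operatorname{Cost}(\pi_{\mathrm{ind}})$;
\item asserts that finite $\beta$ follows ``by monotonicity under relaxation of the condition constraint''.
\end{itemize}
Your competitor $\tilde\pi$ is exactly the mixture $\sum_c p(c)\,\pi_{\mathrm{ind},c}$ that the paper uses class by class, so the key comparator is the same. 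The difference is that you insert it directly into $J_\beta$ for each fixed $\beta$, where its penalty vanishes. This gives the bound for every $\beta\ge 0$ in one line. It is also exactly what is needed to justify the paper's unproved monotonicity step: any coupling with $c_0=c_1$ is feasible at every finite $\beta$ with zero penalty, so $\operatorname{Cost}(\pi^\star_{\mathrm{C^2OT}(\beta)})\le \operatorname{Cost}(\pi^\star_{\mathrm{C^2OT}(\infty)})$.

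You can drop the mean-zero assumption on $p_0$; it is not needed. Under $\tilde\pi$, $x_0$ is independent of $(c,x_1)$, so the $(x_0,x_1)$-marginal of $\tilde\pi$ is exactly $p_0\otimes p_1$. Hence $\operatorname{Cost}(\tilde\pi)=\operatorname{Cost}(\pi_{\mathrm{ind}})$ for any $p_0$ with finite second moment. In your computation the cross term need not vanish. By the tower property it equals $-2\,\mathbb{E}_c\langle\mathbb{E}x_0,\mathbb{E}[x_1\mid c]\rangle=-2\langle\mathbb{E}x_0,\mathbb{E}x_1\rangle$, which is also the cross term in $\operatorname{Cost}(\pi_{\mathrm{ind}})$. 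This is the same identity $\sum_c p(c)\,\mathbb{E}_{\pi_{\mathrm{ind},c}}\|x_1-x_0\|^2=\operatorname{Cost}(\pi_{\mathrm{ind}})$ that the paper uses without any centering.

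One small addition for the $\beta=0$ case: say why the OT coupling is attainable in the augmented problem. Take $(x_0,x_1)\sim\pi_{\mathrm{OT}}$, draw $c_1$ from the label law given $x_1$, and draw $c_0$ independently. The left inequality alone only gives $\operatorname{Cost}(\pi^\star_{\mathrm{C^2OT}(0)})\ge\mathcal{W}_2^2(p_0,p_1)$; this lift gives the reverse inequality.
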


\begin{proof}
Let $\mathcal{X} = \mathbb{R}^d$ denote the state space and $\mathcal{C}$ denote the conditioning space (endowed with metric $d_{\mathcal{C}}$ or squared norm $\|c_0 - c_1\|^2$). 
Let $p_0$ be the source noise distribution on $\mathcal{X}$, and let $p_1(x_1, c_1)$ be the joint data-conditioning distribution on $\mathcal{X} \times \mathcal{C}$ with spatial marginal $p_1(x_1) = \int_{\mathcal{C}} p_1(x_1, c_1) \, \mathrm{d}c_1$.
Recall that the quadratic transport cost of any coupling $\pi$ with marginals $p_0$ and $p_1(x_1)$ is defined by:
\begin{equation}
    \operatorname{Cost}(\pi) := \mathbb{E}_{(x_0, x_1) \sim \pi} \left[ \|x_1 - x_0\|^2 \right].
\end{equation}

\paragraph{Part 1: $\operatorname{Cost}(\pi_{\gt}^\star) \le \operatorname{Cost}(\pi_{\mathrm{C^2OT}(\beta)}^\star)$.}
By definition, the global transport coupling $\pi_{\gt}^\star$ is the minimizer of the unconstrained Kantorovich optimal transport problem between $p_0$ and the spatial data marginal $p_1(x_1)$:
\begin{equation}
    \label{eq:ot_primal}
    \pi_{\gt}^\star \in \operatorname*{argmin}_{\pi \in \Pi(p_0, p_1)} \mathbb{E}_{\pi}\left[ \|x_1 - x_0\|^2 \right].
\end{equation}
Therefore, for \emph{any} admissible joint distribution $\pi \in \Pi(p_0, p_1)$, we have by definition of the infimum:
\begin{equation}
    \label{eq:global_infimum}
    \operatorname{Cost}(\pi_{\gt}^\star) \le \operatorname{Cost}(\pi).
\end{equation}

Now consider the class-conditional optimal transport coupling $\pi_{\mathrm{C^2OT}(\beta)}^\star$. For any regularization strength $\beta \ge 0$, $\pi_{\mathrm{C^2OT}(\beta)}^\star$ is defined as the minimizer of the joint spatial and condition transport problem:
\begin{equation}
    \label{eq:c2ot_objective}
    \pi_{\mathrm{C^2OT}(\beta)}^\star \in \operatorname*{argmin}_{\pi \in \Pi(p_0, p_1)} \mathbb{E}_{((x_0, c_0), (x_1, c_1)) \sim \pi} \left[ \|x_1 - x_0\|^2 + \beta \|c_1 - c_0\|^2 \right].
\end{equation}
Since the spatial marginals of any candidate coupling in~\eqref{eq:c2ot_objective} are constrained to be $p_0$ and $p_1(x_1)$, the resulting optimal coupling $\pi_{\mathrm{C^2OT}(\beta)}^\star$ is itself an admissible coupling in $\Pi(p_0, p_1)$. 

Applying the optimality property of $\pi_{\gt}^\star$ from~\eqref{eq:global_infimum} directly to $\pi = \pi_{\mathrm{C^2OT}(\beta)}^\star$ yields:
\begin{equation}
    \operatorname{Cost}(\pi_{\gt}^\star) \le \operatorname{Cost}(\pi_{\mathrm{C^2OT}(\beta)}^\star), \quad \forall \beta \ge 0.
\end{equation}
When $\beta = 0$, the penalty on the condition labels vanishes, and the objective in~\eqref{eq:c2ot_objective} reduces identically to the objective in~\eqref{eq:ot_primal}. Thus, $\pi_{\mathrm{C^2OT}(0)}^\star = \pi_{\gt}^\star$, achieving equality.

\paragraph{Part 2: $\operatorname{Cost}(\pi_{\mathrm{C^2OT}(\beta)}^\star) \le \operatorname{Cost}(\pi_{\mathrm{ind}})$.}
Next, we show that the independent product coupling $\pi_{\mathrm{ind}} = p_0 \otimes p_1$ serves as an upper bound on $\operatorname{Cost}(\pi_{\mathrm{C^2OT}(\beta)}^\star)$.
Let $(x_0, c_0) \sim p_0 \times p_{\mathcal{C}}$ and $(x_1, c_1) \sim p_1$ be drawn independently, where $p_{\mathcal{C}}$ is the marginal condition distribution under $p_1$ assigned independently to noise points $x_0$. 
Because $\pi_{\mathrm{ind}} \in \Pi(p_0, p_1)$ is an admissible joint distribution, the minimality of $\pi_{\mathrm{C^2OT}(\beta)}^\star$ for the joint objective in~\eqref{eq:c2ot_objective} implies:
\begin{align}
    \mathbb{E}_{\pi_{\mathrm{C^2OT}(\beta)}^\star} \left[ \|x_1 - x_0\|^2 \right] 
    &\le \mathbb{E}_{\pi_{\mathrm{C^2OT}(\beta)}^\star} \left[ \|x_1 - x_0\|^2 \right] + \beta\, \mathbb{E}_{\pi_{\mathrm{C^2OT}(\beta)}^\star} \left[ \|c_1 - c_0\|^2 \right] \nonumber \\
    &\le \mathbb{E}_{\pi_{\mathrm{ind}}} \left[ \|x_1 - x_0\|^2 + \beta \|c_1 - c_0\|^2 \right]. \label{eq:c2ot_vs_ind_pre}
\end{align}
In the independent coupling $\pi_{\mathrm{ind}}$, the conditioning assignments $c_0$ and $c_1$ are mutually independent identically distributed samples from $p_{\mathcal{C}}$. 

In the hard class-matching limit ($\beta \to \infty$ or exact class-conditional coupling where $c_0 = c_1$), the transport problem decomposes into $|\mathcal{C}|$ independent within-class sub-problems. For each class $c \in \mathcal{C}$ with prevalence $p(c)$:
\begin{equation}
    \pi_{\mathrm{C^2OT}(\infty)}^\star = \sum_{c \in \mathcal{C}} p(c)\, \pi_c^\star, \quad \text{where } \pi_c^\star \in \operatorname*{argmin}_{\pi \in \Pi(p_0, p_1(\cdot \mid c))} \mathbb{E}_{\pi} \left[ \|x_1 - x_0\|^2 \right].
\end{equation}
For every class $c$, the product measure $\pi_{\mathrm{ind}, c} = p_0 \otimes p_1(\cdot \mid c)$ is an admissible coupling for the $c$-th sub-problem. By the optimality of $\pi_c^\star$ over $\Pi(p_0, p_1(\cdot \mid c))$:
\begin{equation}
    \mathbb{E}_{\pi_c^\star}\left[ \|x_1 - x_0\|^2 \right] \le \mathbb{E}_{\pi_{\mathrm{ind}, c}}\left[ \|x_1 - x_0\|^2 \right].
\end{equation}
Taking the expectation over the class distribution $p(c)$ yields:
\begin{align}
    \operatorname{Cost}(\pi_{\mathrm{C^2OT}(\infty)}^\star) 
    &= \sum_{c \in \mathcal{C}} p(c)\, \mathbb{E}_{\pi_c^\star} \left[ \|x_1 - x_0\|^2 \right] \nonumber \\
    &\le \sum_{c \in \mathcal{C}} p(c)\, \mathbb{E}_{\pi_{\mathrm{ind}, c}} \left[ \|x_1 - x_0\|^2 \right] \nonumber \\
    &= \mathbb{E}_{\pi_{\mathrm{ind}}} \left[ \|x_1 - x_0\|^2 \right] = \operatorname{Cost}(\pi_{\mathrm{ind}}).
\end{align}
By monotonicity of the optimal transport objective under relaxation of the condition constraint, $\operatorname{Cost}(\pi_{\mathrm{C^2OT}(\beta)}^\star) \le \operatorname{Cost}(\pi_{\mathrm{ind}})$ holds for all $\beta \ge 0$.

Combining Part 1 and Part 2 gives the complete chain of inequalities:
\begin{equation}
    \operatorname{Cost}(\pi_{\gt}^\star) \le \operatorname{Cost}(\pi_{\mathrm{C^2OT}(\beta)}^\star) \le \operatorname{Cost}(\pi_{\mathrm{ind}}),
\end{equation}
which concludes the proof.
\end{proof}

\subsection{Coupling controls flow curvature}
We next compare the acceleration in the flow induced by the coupling in the standardized Gaussian setting.
We use a Gaussian plug-in model of mini-batch OT which provides an analytically tractable proxy for the intractable discrete mini-batch coupling used in our experiments.
For a velocity field $v_t$, let $a_t[v](x):=\partial_t v_t(x)+D_xv_t(x)v_t(x)$ be the acceleration experienced by a particle $\dot{x}_t = v_t(x_t)$ following the flow, also known as the \emph{material acceleration}.

\begin{proposition}[Material acceleration under Gaussian couplings]
\label{prop:gaussian-material-acceleration-main}
Let $p_0=p_1=\Nc(0,\Id_d)$, and let $v_t^{\mathrm{ind}}$, $v_{t,n}$, and $v_t^\star$ denote respectively the vector fields constructed using (i) the independent coupling, (ii) mini-batch plug-in Gaussian OT coupling with batch size $n$, and (iii) the population OT coupling. Fixing $d$, as $n\to\infty$, the following holds pointwise in $(t, x)$:
\begin{equation}
  \begin{aligned}
    a_t[v^{\mathrm{ind}}](x) =\tfrac{x}{((1-t)^2+t^2)^2}, \qquad
    a_t[v_{n}](x) =\tfrac{2\lVert x\rVert^2-(d-1)}{2n}x+o\left(\tfrac{1}{n}\right), \quad a_t[v^\star](x)&=0.
    \end{aligned}
    \label{eq:gaussian-material-acceleration-main}
\end{equation}
Define the total material acceleration
\(\mathfrak{A}(v)^2
    := \mathbb{E}_{X_0}\int_0^1
      \lVert a_t[v](X_t)\rVert^2\,\mathrm{d}t
\). 
Then, 
\begin{equation}
    \mathfrak{A}(v^{\mathrm{ind}})
    =\sqrt{\left(2+\tfrac{3\pi}{4}\right)d},
    \qquad
    \mathfrak{A}(v_{n}) =\tfrac{\sqrt{d(d^2+18d+41)}}{2n}+o(n^{-1}),
    \qquad
    \mathfrak{A}(v^\star)=0.
    \label{eq:gaussian-total-material-acceleration-main}
\end{equation}
\end{proposition}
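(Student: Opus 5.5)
The whole proposition reduces to Gaussian algebra once we observe that every coupling in question is, exactly or conditionally, jointly Gaussian. For a joint Gaussian $(x_0,x_1)$ with $p_0=p_1=\Nc(0,\Id_d)$ and a symmetric cross-covariance $C=\mathbb{E}[x_0x_1^\top]$, write $u=x_1-x_0$ and $\sigma_t=(1-t)^2+t^2$.

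\textbf{Step 1: the marginal field is linear.} The interpolant $x_t=(1-t)x_0+tx_1$ satisfies
\[
\operatorname{Cov}(x_t)=\sigma_t\Id_d+2t(1-t)C,\qquad \mathbb{E}[u\,x_t^\top]=(2t-1)(\Id_d-C).
\]
Hence the marginal field is linear, $v_t(x)=A_tx$ with $A_t=(2t-1)(\Id_d-C)\bigl(\sigma_t\Id_d+2t(1-t)C\bigr)^{-1}$. For a linear field, $\partial_tv_t=\dot A_tx$ and $D_xv_t\,v_t=A_t^2x$, so
\[
a_t[v](x)=(\dot A_t+A_t^2)\,x .
\]

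\textbf{Step 2: the two exact cases.}
\begin{itemize}
\item Independent coupling ($C=0$): $A_t=(2t-1)/\sigma_t$ and $\dot\sigma_t=2(2t-1)$. This gives $\dot A_t+A_t^2=\bigl(2\sigma_t-(2t-1)^2\bigr)/\sigma_t^2=1/\sigma_t^2$, since $2\sigma_t-(2t-1)^2=1$. This is the claimed formula.
\item Population OT ($C=\Id_d$): $A_t\equiv0$, so $a_t\equiv0$ and $\mathfrak{A}(v^\star)=0$.
\end{itemize}

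\textbf{Step 3: totals in the exact cases.} The flow of $v_t$ has marginals $p_t=\operatorname{Law}(x_t)$, so $\mathfrak{A}(v)^2=\int_0^1\mathbb{E}_{p_t}\|a_t(x)\|^2\,\mathrm{d}t$. For the independent coupling $p_t=\Nc(0,\sigma_t\Id_d)$, so
\[
\mathfrak{A}(v^{\mathrm{ind}})^2=d\int_0^1\sigma_t^{-3}\,\mathrm{d}t .
\]
Substituting $s=2t-1$ gives $\sigma_t=(1+s^2)/2$, and the integral becomes $4\int_{-1}^1(1+s^2)^{-3}\,\mathrm{d}s$, an elementary integral that should evaluate to $2+3\pi/4$.

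\textbf{Step 4: the mini-batch plug-in field.} I model the plug-in coupling as follows. Draw a batch, form the empirical covariances $S_0,S_1$, and pair by the Gaussian OT map
\[
T=S_0^{-1/2}\bigl(S_0^{1/2}S_1S_0^{1/2}\bigr)^{1/2}S_0^{-1/2}.
\]
Conditionally on $T$, the coupling is a Gaussian one of Step 1 with cross-covariance close to $\Id_d$. The population coupling is the mixture over batches, so $v_{t,n}(x)$ is the posterior-weighted average of the linear fields $A_t^{(T)}x$. The weights are the Gaussian likelihoods of $x$ under each conditional law of $x_t$, and this dependence on $x$ is what produces the cubic term.

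By the CLT for sample covariances, $T=\Id_d+n^{-1/2}H+n^{-1}K+o(n^{-1})$ with $H$ a symmetric Gaussian (GOE-type) fluctuation. I would then:
\begin{enumerate}
\item Expand $A_t^{(T)}$ and the likelihood ratios to second order in $H$. The $O(n^{-1/2})$ terms vanish in expectation because $\mathbb{E}H=0$.
\item Collect the $O(n^{-1})$ terms. These involve $\mathbb{E}[H^2]$, $\mathbb{E}[\operatorname{tr}H\,H]$ and quartic contractions with $x$.
\item Compute $\dot A+A^2$ for the resulting field. The claim is that the $t$-dependence cancels, leaving $\tfrac{2\|x\|^2-(d-1)}{2n}\,x$; the $d-1$ should come from the off-diagonal degrees of freedom of $H$.
\end{enumerate}
This is the main obstacle. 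It needs careful bookkeeping of Wishart second moments, of the $K$ correction from the matrix square root, and of the mixture weighting, and the cancellation of time dependence must be verified rather than assumed. I would first do the computation for $d=1$ as a check, then exploit rotational invariance to reduce the general case to the radial and tangential components.

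\textbf{Step 5: total for the mini-batch field.} Given the pointwise formula, $p_t=\Nc(0,\Id_d)+O(n^{-1/2})$, so to leading order $x\sim\Nc(0,\Id_d)$. With $r=\|x\|^2\sim\chi^2_d$ and the moments $\mathbb{E}r=d$, $\mathbb{E}r^2=d(d+2)$, $\mathbb{E}r^3=d(d+2)(d+4)$,
\[
\mathbb{E}\bigl[(2r-(d-1))^2r\bigr]=d\,(d^2+18d+41).
\]
Dividing by $4n^2$, integrating over $t\in[0,1]$ (the integrand is constant in $t$) and taking a square root gives $\mathfrak{A}(v_n)=\sqrt{d(d^2+18d+41)}/(2n)+o(n^{-1})$, as claimed.
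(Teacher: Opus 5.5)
\textbf{There is a genuine gap in Step 4.} Your exact cases and your final $\chi^2_d$ moment computation are correct and agree with the paper. The paper integrates along trajectories ($x_t^{\mathrm{ind}}=\sqrt{(1-t)^2+t^2}\,x_0$ and $x_{t,n}=x_0+O(n^{-1})$) rather than against $p_t$, which is equivalent; your integral $4\int_{-1}^1(1+s^2)^{-3}\,\mathrm{d}s$ does equal $2+3\pi/4$. Step 4, however, is not only left undone but is set up so that it cannot produce the stated constants.

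\textbf{The batch-conditional coupling is not a Step-1 coupling.} In the plug-in model, $x_0$ is drawn from the fitted source Gaussian and $x_1=T_n(x_0)$ deterministically. So neither marginal is $\mathcal{N}(0,\mathrm{Id}_d)$. In fact, two standard Gaussians cannot have cross-covariance $T$ once $T$ has an eigenvalue above $1$. Substituting $C=T=\mathrm{Id}_d+\Delta$ into your $A_t$ gives $A_t^{(T)}\approx-(2t-1)\Delta$. The actual mean-free batch field is $(T-\mathrm{Id}_d)\bigl((1-t)\mathrm{Id}_d+tT\bigr)^{-1}x\approx\Delta x$. These disagree already at order $n^{-1/2}$, so your posterior-weighted average would be wrong at order $n^{-1}$. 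You need the affine batch field $\widehat v_{t,n}(x)=\widehat\mu^1_n-\widehat\mu^0_n+(A_n-\mathrm{Id})B_{t,n}^{-1}(x-\widehat m_{t,n})$, weighted by the path density with covariance $B_{t,n}\widehat\Sigma^0_nB_{t,n}$.

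\textbf{Your linear $T$ drops the fitted means, and they matter at order $1/n$ through the density weighting.} Write $\alpha_t=(1-t)h+th'$ for the mean shift of the path density and $e=h'-h$ for the mean displacement. Then the weighted term $\mathbb{E}[\ell_t(x)v^{(1)}_t(x)]$ contains $\mathbb{E}[e\,\alpha_t^{\mathsf T}]x=(2t-1)x$. The result $v_{t,n}(x)=\tfrac{2t-1}{4n}(2\|x\|^2-(d-1))x$ comes only from combining four pieces:
\begin{itemize}
\item this mean term $(2t-1)x$;
\item $\mathbb{E}[v^{(2)}_t(x)]=-\tfrac{(d+1)(2t-1)}{4}x$, which comes from the Wishart moments;
\item the trace part of the weight, $-\tfrac{2t-1}{2}x$;
\item the cubic part, $\tfrac{2t-1}{2}\|x\|^2x$.
\end{itemize}

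Without the mean term, even with the correct batch field, a mean-free plug-in model ($x_0\sim\mathcal{N}(0,S_0)$, $x_1=Tx_0$) gives $v_{t,n}(x)=\tfrac{2t-1}{4n}(2\|x\|^2-(d+3))x$. In $d=1$ that is $\tfrac{2t-1}{2n}(x^2-2)x$ instead of $\tfrac{2t-1}{2n}x^3$. Hence $a_t=\tfrac{2\|x\|^2-(d+3)}{2n}x$ and $\mathfrak{A}(v_n)^2=d(d^2+10d+17)/(4n^2)$, contradicting the statement.

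Your guess that $d-1$ comes from the off-diagonal modes is right only after the diagonal contributions cancel. That cancellation (the identities $\beta_i=\kappa_i-\tfrac12\eta_i$ and $\theta_{ii}/m_i^4=\tfrac12\eta_i$) uses precisely the mean contribution $\eta_i$.

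\textbf{Smaller point.} $v_{t,n}$ is cubic, so the formula $\dot A+A^2$ does not apply to it. The correct observation is that $v_{t,n}=O(n^{-1})$ makes the convective term $O(n^{-2})$, so $a_t[v_n]=\partial_t v_{t,n}+o(n^{-1})$.
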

Thus both pointwise and total material acceleration scale as $O(n^{-1})$
for Gaussian plug-in mini-batch OT, remain $O(1)$ for independent FM, and
vanish for population OT. 
\input{gaussian_minibatch_appendix}

\newpage
\section{ImageNet-256}
\label{appendix: imagenet}

\subsection{Implementation Details} 
\xhdr{Training details} In tables \ref{tab:sit_config} and \ref{tab:optim_config}, we show training configurations across our SiT~\parencite{Ma2024SiTEF} and DMF~\parencite{lee2026decoupled} experiments for B/2, L/2 and XL/2 model scales. All ImageNet-256 experiments use mini-batch setting of 256 batch size. DMF~\parencite{lee2026decoupled} conditions the encoder on time $t$ and adds additional time $r$ to condition its decoder, turning a flow matching model into a flow map. Following their set-up, we also select \texttt{logit}-normal distribution to sample $(t,r)$ pairs using the time proposal parameters in table \ref{tab:sit_config}. DMF is trained by finetuning a pretrained SiT model via self-distillation meanflow objective~\parencite{geng2026mean} with a diagonal split of 0.5.
\begin{table}[ht]
    \centering
    \caption{ImageNet-256 implementation details across model scales.}
    \label{tab:sit_config}
    \small
    \begin{tabular}{lccc}
        \toprule
        & B/2 & L/2 & XL/2 \\
        \midrule
        \multicolumn{4}{l}{\textit{Backbone}} \\
        \midrule
        Resolution                 & $256\times256$ & $256\times256$ & $256\times256$ \\
        Params (M)                 & 130            & 458            & 675            \\
        FLOPS (G)                  & 23.1           & 80.7           & 118.6          \\
        Hidden dim.                & 768            & 1024           & 1152           \\
        Heads                      & 12             & 16             & 16             \\
        Patch size                 & $2\times2$     & $2\times2$     & $2\times2$     \\
        Sequence length            & 256            & 256            & 256            \\
        Layers                     & 12             & 24             & 28             \\
        \midrule
        \multicolumn{4}{l}{\textit{Flow Matching (SiT)~\parencite{Ma2024SiTEF}}} \\
        \midrule
        Training iterations        & 800K           & 800K           & 800K           \\
        Epochs                     & 160            & 160            & 160            \\
        Class dropout probability  & 0.1            & 0.1            & 0.1            \\
        \midrule
        \multicolumn{4}{l}{\textit{Flow Map (DMF)~\parencite{lee2026decoupled}}} \\
        \midrule
        DMF depth                  & 8              & 18             & 20             \\
        Training iterations        & 400K           & 400K           & 400K           \\
        Epochs                     & 80             & 80             & 80             \\
        Class dropout probability  & \multicolumn{3}{c}{0.1} \\
        Time proposal $\mu_{\mathrm{FM}}$ & \multicolumn{3}{c}{0.0} \\
        Time proposal $(\mu^{(1)}_{\mathrm{MF}}, \mu^{(2)}_{\mathrm{MF}})$ & \multicolumn{3}{c}{$(0.4, -1.2)$} \\
        Model guidance scale $\omega$ & 0.5         & 0.6            & 0.6            \\
        Guidance interval          & $[0.0, 1.0]$   & $[0.0, 0.7]$   & $[0.0, 0.7]$   \\
        \bottomrule
    \end{tabular}
\end{table}

\begin{table}[ht]
    \centering
    \caption{Optimization hyperparameters, shared across all model scales and both training stages.}
    \label{tab:optim_config}
    \small
    \begin{tabular}{lc}
        \toprule
        Optimizer                 & AdamW \\
        Batch size                & 256 \\
        Learning rate             & 1e-4 \\
        Adam $(\beta_1, \beta_2)$ & $(0.9, 0.95)$ \\
        Adam $\epsilon$           & 1e-8 \\
        Weight decay              & 0.0 \\
        EMA decay rate            & 0.9999 \\
        \bottomrule
    \end{tabular}
\end{table}
\xhdr{Compute budget} In table \ref{tab:compute_budget}, we show cost per optimizer step for each of the coupling plans. We observe that class-conditional OT and \gt add a modest cost per optimizer step, with \gt being slightly higher than class-conditional due to not being restricted per class. 
\begin{table}[h]
\centering
\small
\caption{Compute cost per optimizer step of SiT-B/2 training on ImageNet-256 for the three coupling plans. The second column is the increase over the independent coupling.}
\label{tab:compute_budget}
\begin{tabular}{l cc}
\toprule
Coupling & ms / step & $\uparrow$ cost \\
\midrule
Independent & 155.5 & -- \\
Class-cond.\ OT & 157.9 & $\uparrow$ 1.6\% \\
\gt & 160.4 & $\uparrow$ 3.1\% \\
\bottomrule
\end{tabular}
\end{table}

\xhdr{Choice of $\beta$ for class-conditional OT} We choose $\beta$ such that it prevents any cross-class coupling, thereby computing transport plan within each class following \parencite{cheng2025curse}. In practice, we define $\beta=\beta_m\cdot\frac{1}{B^2}\sum_{i=1}^{B}\sum_{j=1}^{B}\lVert x_i - x_j\rVert_2^2$. We sweep $\beta_m$ and show analysis in figure \ref{fig:ccotbeta}, choosing $\beta_m=10$.

\begin{figure}
    \centering
    \includegraphics[width=0.4\linewidth]{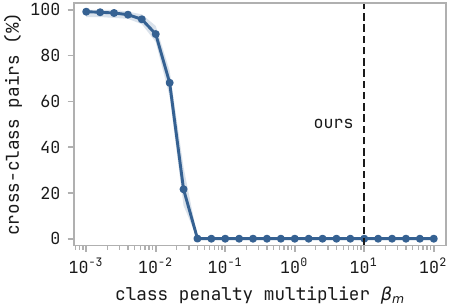}
    \caption{Sweeping $\beta_m$ for class-conditional OT.}
    \label{fig:ccotbeta}
\end{figure}

\xhdr{Text-conditioning for ImageNet} Every ImageNet-1k image is paired with the
captions taken from \textcite{visuallayer2024imagenet1kvl} dataset. Each caption is encoded once with
the frozen DFN5B CLIP ViT-H/14 text tower into its pooled, $\ell_2$-normalised 1024-d text embedding $c\in\mathbb{R}^{1024}$.

\begin{table*}[t]
\caption{\small FID\,$\downarrow$ and FD\textsubscript{DINOv2}\,$\downarrow$ vs.\ guidance scale $w$ for independent, class-conditional OT and \gt couplings across SiT scales (64 Euler steps), under the full guidance interval $[0,1]$.}
\label{tab:sit_full_interval_euler64}
\centering
\scriptsize
\setlength{\tabcolsep}{3.5pt}
\begin{tabular}{ll ccccccccccc}
\toprule
\multicolumn{13}{c}{SiT-B/2} \\
\midrule
Coupling & Metric & $w{=}1.0$ & 1.5 & 2.0 & 2.5 & 3.0 & 3.5 & 4.0 & 5.0 & 6.0 & 8.0 & 10.0 \\
\midrule
\multirow{2}{*}{Independent} & FID & 26.44 & 6.68 & 5.16 & 7.93 & 11.04 & 13.65 & 15.76 & 18.63 & 20.42 & 22.40 & 23.28 \\
 & FD\textsubscript{DINOv2} & 605.11 & 337.16 & 212.66 & 159.87 & 139.10 & 132.46 & 132.27 & 139.64 & 149.79 & 169.51 & 189.02 \\
\midrule
\multirow{2}{*}{Class-cond.\ OT} & FID & 26.72 & 6.70 & 5.08 & 7.81 & 10.90 & 13.47 & 15.59 & 18.44 & 20.28 & 22.25 & 23.12 \\
 & FD\textsubscript{DINOv2} & 607.86 & 340.46 & 215.04 & 160.77 & 139.17 & 131.64 & 131.41 & 138.27 & 148.37 & 168.91 & 189.17 \\
\midrule
\multirow{2}{*}{\gt} & FID & 31.16 & 8.13 & 4.15 & 5.88 & 8.72 & 11.32 & 13.41 & 16.51 & 18.62 & 20.81 & 21.62 \\
 & FD\textsubscript{DINOv2} & 636.79 & 365.26 & 226.75 & 161.94 & 132.62 & 120.25 & 116.36 & 119.84 & 128.33 & 148.82 & 171.99 \\
\bottomrule
\end{tabular}

\vspace{8pt}
\begin{tabular}{ll cccccccc}
\toprule
\multicolumn{10}{c}{SiT-L/2} \\
\midrule
Coupling & Metric & $w{=}1.0$ & 1.5 & 1.75 & 2.0 & 2.5 & 3.0 & 3.5 & 4.0 \\
\midrule
\multirow{2}{*}{Independent} & FID & 13.85 & 2.95 & 3.86 & 5.85 & 10.10 & 13.53 & 16.03 & 17.89 \\
 & FD\textsubscript{DINOv2} & 360.44 & 154.77 & 115.56 & 95.16 & 83.28 & 86.46 & 94.58 & 103.47 \\
\midrule
\multirow{2}{*}{Class-cond.\ OT} & FID & 13.96 & 2.98 & 3.84 & 5.81 & 10.02 & 13.42 & 15.93 & 17.79 \\
 & FD\textsubscript{DINOv2} & 360.38 & 154.83 & 115.16 & 95.01 & 83.41 & 86.38 & 94.01 & 102.64 \\
\midrule
\multirow{2}{*}{\gt} & FID & 17.72 & 2.96 & 2.69 & 3.88 & 7.38 & 10.66 & 13.25 & 15.17 \\
 & FD\textsubscript{DINOv2} & 392.20 & 170.86 & 123.70 & 97.58 & 76.33 & 73.48 & 77.32 & 83.28 \\
\bottomrule
\end{tabular}

\vspace{8pt}
\begin{tabular}{ll ccccccccc}
\toprule
\multicolumn{11}{c}{SiT-XL/2} \\
\midrule
Coupling & Metric & $w{=}1.0$ & 1.5 & 1.75 & 2.0 & 2.25 & 2.5 & 3.0 & 3.5 & 4.0 \\
\midrule
\multirow{2}{*}{Independent} & FID & 12.58 & 2.78 & 3.98 & 6.10 & 8.32 & 10.41 & 13.76 & 16.14 & 17.87 \\
 & FD\textsubscript{DINOv2} & 328.83 & 135.91 & 101.78 & 85.35 & 78.40 & 77.76 & 83.29 & 92.02 & 101.67 \\
\midrule
\multirow{2}{*}{\gt} & FID & 16.11 & 2.63 & 2.59 & 3.87 & 5.65 & 7.47 & 10.70 & 13.16 & 15.09 \\
 & FD\textsubscript{DINOv2} & 357.67 & 150.28 & 108.56 & 85.80 & 73.95 & 69.01 & 68.44 & 73.27 & 80.05 \\
\bottomrule
\end{tabular}
\end{table*}

\subsection{Additional results on ImageNet-256} 
\cut{
\begin{table}[h]
\centering
\small
\setlength{\tabcolsep}{5pt}
\caption{FID\,$\downarrow$ and FD\textsubscript{DINOv2}\,$\downarrow$ on ImageNet-256 for all SiT models and couplings with the full guidance interval $[0,1]$, Euler sampler with 64 steps.}
\label{tab:sit_full_interval_euler64}
\begin{tabular}{ll cc cc cc}
\toprule
Model & $w$ & \multicolumn{2}{c}{Independent} & \multicolumn{2}{c}{Class-cond.\ OT} & \multicolumn{2}{c}{\gt} \\
\cmidrule(lr){3-4}\cmidrule(lr){5-6}\cmidrule(lr){7-8}
 & & FID & FD\textsubscript{DINOv2} & FID & FD\textsubscript{DINOv2} & FID & FD\textsubscript{DINOv2} \\
\midrule
\multirow{11}{*}{SiT-B/2} & 1 & 26.44 & 605.11 & 26.72 & 607.86 & 31.16 & 636.79 \\
 & 1.5 & 6.68 & 337.16 & 6.70 & 340.46 & 8.13 & 365.26 \\
 & 2 & 5.16 & 212.66 & 5.08 & 215.04 & 4.15 & 226.75 \\
 & 2.5 & 7.93 & 159.87 & 7.81 & 160.77 & 5.88 & 161.94 \\
 & 3 & 11.04 & 139.10 & 10.90 & 139.17 & 8.72 & 132.62 \\
 & 3.5 & 13.65 & 132.46 & 13.47 & 131.64 & 11.32 & 120.25 \\
 & 4 & 15.76 & 132.27 & 15.59 & 131.41 & 13.41 & 116.36 \\
 & 5 & 18.63 & 139.64 & 18.44 & 138.27 & 16.51 & 119.84 \\
 & 6 & 20.42 & 149.79 & 20.28 & 148.37 & 18.62 & 128.33 \\
 & 8 & 22.40 & 169.51 & 22.25 & 168.91 & 20.81 & 148.82 \\
 & 10 & 23.28 & 189.02 & 23.12 & 189.17 & 21.62 & 171.99 \\
\midrule
\multirow{8}{*}{SiT-L/2} & 1 & 13.85 & 360.44 & 13.96 & 360.38 & 17.72 & 392.20 \\
 & 1.5 & 2.95 & 154.77 & 2.98 & 154.83 & 2.96 & 170.86 \\
 & 2 & 5.85 & 95.16 & 5.81 & 95.01 & 3.88 & 97.58 \\
 & 2.5 & 10.10 & 83.28 & 10.02 & 83.41 & 7.38 & 76.33 \\
 & 3 & 13.53 & 86.46 & 13.42 & 86.38 & 10.66 & 73.48 \\
 & 3.5 & 16.03 & 94.58 & 15.93 & 94.01 & 13.25 & 77.32 \\
 & 4 & 17.89 & 103.47 & 17.79 & 102.64 & 15.17 & 83.28 \\
\midrule
\multirow{9}{*}{SiT-XL/2} & 1 & 12.58 & 328.83 & -- & -- & 16.11 & 357.67 \\
 & 1.5 & 2.78 & 135.91 & -- & -- & 2.63 & 150.28 \\
 & 1.75 & 3.98 & 101.78 & -- & -- & 2.59 & 108.56 \\
 & 2 & 6.10 & 85.35 & -- & -- & 3.87 & 85.80 \\
 & 2.25 & 8.32 & 78.40 & -- & -- & 5.65 & 73.95 \\
 & 2.5 & 10.41 & 77.76 & -- & -- & 7.47 & 69.01 \\
 & 3 & 13.76 & 83.29 & -- & -- & 10.70 & 68.44 \\
 & 3.5 & 16.14 & 92.02 & -- & -- & 13.16 & 73.27 \\
 & 4 & 17.87 & 101.67 & -- & -- & 15.09 & 80.05 \\
\bottomrule
\end{tabular}
\end{table}
}
\xhdr{\gt preserves mode stability} Figure~\ref{fig:mode_stability} sweeps $w$ for fixed noise and class. Under the independent coupling, increasing guidance results in abrupt change in composition and pose, e.g.\ from a portrait to a full-body view. Under \gt the object remains stable for the full range of $w$. 
\begin{figure}[t]
    \centering
    \includegraphics[width=\linewidth]{cfg_sweep_sit_ot_vs_ind_xl_2_256_local_20260626_103657.pdf}
    \caption{ SiT-XL/2 curated samples for class 339 \texttt{sorrel} across guidance scales $w$ and coupling choices. Under \gt the sample object characteristics remain stable, while for independent and class-conditional object changes its composition, pose, background at higher $w$.}
    \label{fig:mode_stability}
\end{figure}

\begin{figure}[t]
    \centering
    \includegraphics[width=\linewidth]{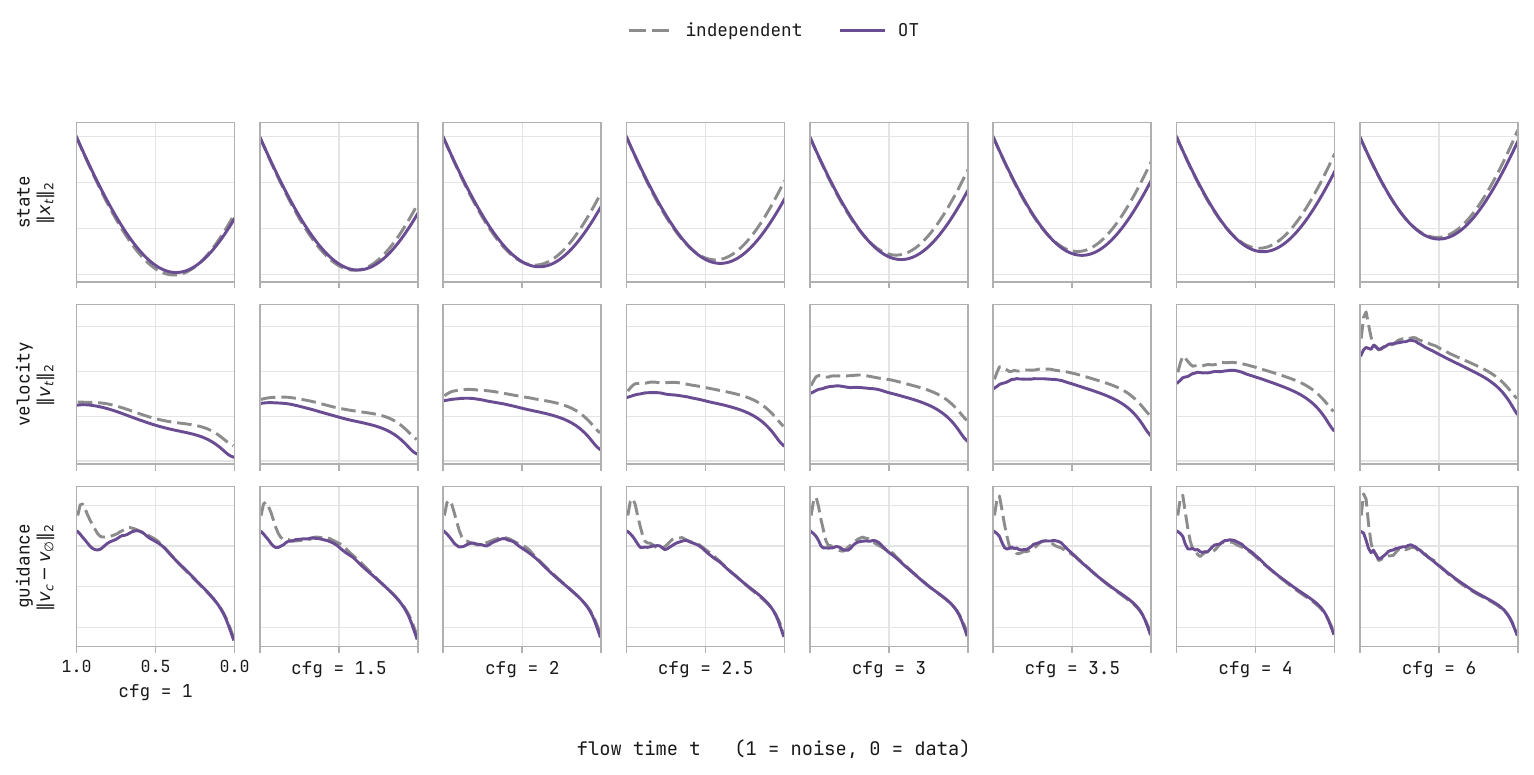}
    \caption{Measuring yo-yo effect and prediction gap across trajectory using SiT-XL/2 (64 steps).}
    \label{fig:placeholder}
\end{figure}

\xhdr{Guidance interval tuning} We show that \gt achieves strongest results overall with applied guidance interval tuning across B/2, L/2 and XL/2 scales in table~\ref{tab:gi_sweep}.

\cut{
\begin{table*}[h]
\caption{\small FID\,$\downarrow$ vs.\ guidance scale $w$ for independent and \gt couplings across SiT scales (64 Euler steps), under the full guidance interval (Full) and a tuned interval (GI), $[0,0.7]$ for L/2 and XL/2 and $[0.1,0.7]$ for B/2. Best per row in bold.}
\label{tab:cfg_sweep_all_scales}
\centering
\scriptsize
\setlength{\tabcolsep}{2.5pt}
\renewcommand{\arraystretch}{1.12}
\begin{minipage}[c]{0.32\textwidth}
\centering
\begin{tabular}{c cc cc}
\toprule
\multicolumn{5}{c}{SiT-B/2} \\
\midrule
 & \multicolumn{2}{c}{Independent} & \multicolumn{2}{c}{\gt} \\
\cmidrule(lr){2-3}\cmidrule(lr){4-5}
$w$ & Full & GI & Full & GI \\
\midrule
1.0 & \textbf{26.44} & \textbf{26.44} & 31.16 & 31.16 \\
1.5 & \textbf{6.68} & 12.14 &  8.13 & 15.89 \\
2.0 &  5.16 &  6.12 & \textbf{4.15} &  8.29 \\
2.5 &  7.93 & \textbf{4.02} &  5.88 &  4.88 \\
3.0 & 11.04 &  3.68 &  8.72 & \textbf{3.58} \\
4.0 & 15.76 &  4.89 & 13.41 & \textbf{3.59} \\
\bottomrule
\end{tabular}
\end{minipage}\hfill
\begin{minipage}[c]{0.32\textwidth}
\centering
\begin{tabular}{c cc cc}
\toprule
\multicolumn{5}{c}{SiT-L/2} \\
\midrule
 & \multicolumn{2}{c}{Independent} & \multicolumn{2}{c}{\gt} \\
\cmidrule(lr){2-3}\cmidrule(lr){4-5}
$w$ & Full & GI & Full & GI \\
\midrule
1.0 & \textbf{13.85} & \textbf{13.85} & 17.72 & 17.72 \\
1.5 & \textbf{2.95} &  4.53 &  2.96 &  6.64 \\
2.0 &  5.85 & \textbf{2.35} &  3.88 &  2.96 \\
2.5 & 10.10 &  2.62 &  7.38 & \textbf{2.17} \\
3.0 & 13.53 &  3.65 & 10.66 & \textbf{2.48} \\
4.0 & 17.89 &  6.00 & 15.17 & \textbf{4.02} \\
\bottomrule
\end{tabular}
\end{minipage}\hfill
\begin{minipage}[c]{0.32\textwidth}
\centering
\begin{tabular}{c cc cc}
\toprule
\multicolumn{5}{c}{SiT-XL/2} \\
\midrule
 & \multicolumn{2}{c}{Independent} & \multicolumn{2}{c}{\gt} \\
\cmidrule(lr){2-3}\cmidrule(lr){4-5}
$w$ & Full & GI & Full & GI \\
\midrule
1.0 & \textbf{12.58} & \textbf{12.58} & 16.11 & 16.11 \\
1.5 &  2.78 &  3.96 & \textbf{2.63} &  5.89 \\
2.0 &  6.10 & \textbf{2.16} &  3.87 &  2.61 \\
2.5 & 10.41 &  2.63 &  7.47 & \textbf{2.01} \\
3.0 & 13.76 &  3.73 & 10.70 & \textbf{2.40} \\
4.0 & 17.87 &  6.04 & 15.09 & \textbf{3.91} \\
\bottomrule
\end{tabular}
\end{minipage}
\end{table*}}

\begin{table*}[h]
\caption{\small FID\,$\downarrow$ across model scales and couplings, under the full guidance interval and tuned interval $[0, 0.7]$.\looseness=-1}
\label{tab:gi_sweep}
\centering
\scriptsize
\setlength{\tabcolsep}{3.5pt}
\begin{tabular}{ll c ccccc ccccc}
\toprule
 & & & \multicolumn{5}{c}{Full} & \multicolumn{5}{c}{GI [0, 0.7]} \\
\cmidrule(lr){4-8}\cmidrule(lr){9-13}
Model & Coupling & $w{=}1.0$ & 1.5 & 2.0 & 2.5 & 3.0 & 4.0 & 1.5 & 2.0 & 2.5 & 3.0 & 4.0 \\
\midrule
\multirow{3}{*}{SiT-B/2}
 & Independent     & \textbf{26.44} & \textbf{6.68} & 5.16 & 7.93 & 11.04 & 15.76 & \textbf{11.93} & \textbf{5.95} & 3.95 & 3.72 & 5.07 \\
 & Class-cond.\ OT & 26.72 & 6.70 & 5.08 & 7.81 & 10.90 & 15.59 & 12.13 & 6.00 & \textbf{3.92} & 3.68 & 5.00 \\
 & \gt             & 31.16 & 8.13 & \textbf{4.15} & \textbf{5.88} & \textbf{8.72} & \textbf{13.41} & 15.63 & 8.03 & 4.71 & \textbf{3.52} & \textbf{3.70} \\
\midrule
\multirow{2}{*}{SiT-L/2}
 & Independent & \textbf{13.85} & \textbf{2.95} & 5.85 & 10.10 & 13.53 & 17.89 & \textbf{4.53} & \textbf{2.35} & 2.62 & 3.65 & 6.00 \\
 & \gt         & 17.72 & 2.96 & \textbf{3.88} & \textbf{7.38} & \textbf{10.66} & \textbf{15.17} & 6.64 & 2.96 & \textbf{2.17} & \textbf{2.48} & \textbf{4.02} \\
\midrule
\multirow{2}{*}{SiT-XL/2}
 & Independent & \textbf{12.58} & 2.78 & 6.10 & 10.41 & 13.76 & 17.87 & \textbf{3.96} & \textbf{2.16} & 2.63 & 3.73 & 6.04 \\
 & \gt         & 16.11 & \textbf{2.63} & \textbf{3.87} & \textbf{7.47} & \textbf{10.70} & \textbf{15.09} & 5.89 & 2.61 & \textbf{2.01} & \textbf{2.40} & \textbf{3.91} \\
\bottomrule
\end{tabular}
\end{table*}

\cut{
\begin{table}[h]
\centering
\small
\setlength{\tabcolsep}{4pt}
\caption{FID\,$\downarrow$ and FD\textsubscript{DINOv2}\,$\downarrow$ of SiT-B/2 on ImageNet-256 over the guidance scale $w$, full guidance interval $[0,1]$, Euler (sampler 64 steps).}
\label{tab:sit_b2_full_interval_euler64}
\resizebox{\linewidth}{!}{%
\begin{tabular}{ll ccccccccccc}
\toprule
Coupling & Metric & \multicolumn{11}{c}{Guidance scale $w$} \\
\cmidrule(lr){3-13}
 & & 1.0 & 1.5 & 2.0 & 2.5 & 3.0 & 3.5 & 4.0 & 5.0 & 6.0 & 8.0 & 10.0 \\
\midrule
\multirow{2}{*}{Independent} & FID & 26.44 & 6.68 & 5.16 & 7.93 & 11.04 & 13.65 & 15.76 & 18.63 & 20.42 & 22.40 & 23.28 \\
 & FD\textsubscript{DINOv2} & 605.11 & 337.16 & 212.66 & 159.87 & 139.10 & 132.46 & 132.27 & 139.64 & 149.79 & 169.51 & 189.02 \\
\midrule
\multirow{2}{*}{Class-cond.\ OT} & FID & 26.72 & 6.70 & 5.08 & 7.81 & 10.90 & 13.47 & 15.59 & 18.44 & 20.28 & 22.25 & 23.12 \\
 & FD\textsubscript{DINOv2} & 607.86 & 340.46 & 215.04 & 160.77 & 139.17 & 131.64 & 131.41 & 138.27 & 148.37 & 168.91 & 189.17 \\
\midrule
\multirow{2}{*}{\gt} & FID & 31.16 & 8.13 & 4.15 & 5.88 & 8.72 & 11.32 & 13.41 & 16.51 & 18.62 & 20.81 & 21.62 \\
 & FD\textsubscript{DINOv2} & 636.79 & 365.26 & 226.75 & 161.94 & 132.62 & 120.25 & 116.36 & 119.84 & 128.33 & 148.82 & 171.99 \\
\bottomrule
\end{tabular}%
}
\end{table}

\begin{table}[h]
\centering
\small
\setlength{\tabcolsep}{4pt}
\caption{FID\,$\downarrow$ and FD\textsubscript{DINOv2}\,$\downarrow$ of SiT-L/2 on ImageNet-256 over the guidance scale $w$, full guidance interval $[0,1]$, Euler sampler with 64 steps (50k samples, seed 3, 800k-step checkpoint).}
\label{tab:sit_l2_full_interval_euler64}
\begin{tabular}{ll cccccccc}
\toprule
Coupling & Metric & \multicolumn{8}{c}{Guidance scale $w$} \\
\cmidrule(lr){3-10}
 & & 1 & 1.5 & 1.75 & 2 & 2.5 & 3 & 3.5 & 4 \\
\midrule
\multirow{2}{*}{Independent} & FID & 13.85 & 2.95 & 3.86 & 5.85 & 10.10 & 13.53 & 16.03 & 17.89 \\
 & FD\textsubscript{DINOv2} & 360.44 & 154.77 & 115.56 & 95.16 & 83.28 & 86.46 & 94.58 & 103.47 \\
\midrule
\multirow{2}{*}{Class-cond.\ OT} & FID & 13.96 & 2.98 & -- & 5.81 & 10.02 & 13.42 & 15.93 & 17.79 \\
 & FD\textsubscript{DINOv2} & 360.38 & 154.83 & -- & 95.01 & 83.41 & 86.38 & 94.01 & 102.64 \\
\midrule
\multirow{2}{*}{\gt} & FID & 17.72 & 2.96 & 2.69 & 3.88 & 7.38 & 10.66 & 13.25 & 15.17 \\
 & FD\textsubscript{DINOv2} & 392.20 & 170.86 & 123.70 & 97.58 & 76.33 & 73.48 & 77.32 & 83.28 \\
\bottomrule
\end{tabular}
\end{table}

\begin{table}[ht]
\centering
\small
\setlength{\tabcolsep}{4pt}
\caption{FID\,$\downarrow$ and FD\textsubscript{DINOv2}\,$\downarrow$ of SiT-XL/2 on ImageNet-256 over the guidance scale $w$, full guidance interval $[0,1]$, Euler sampler with 64 steps (50k samples, seed 3, 800k-step checkpoint).}
\label{tab:sit_xl2_full_interval_euler64}
\begin{tabular}{ll ccccccccc}
\toprule
Coupling & Metric & \multicolumn{9}{c}{Guidance scale $w$} \\
\cmidrule(lr){3-11}
 & & 1 & 1.5 & 1.75 & 2 & 2.25 & 2.5 & 3 & 3.5 & 4 \\
\midrule
\multirow{2}{*}{Independent} & FID & 12.58 & 2.78 & 3.98 & 6.10 & 8.32 & 10.41 & 13.76 & 16.14 & 17.87 \\
 & FD\textsubscript{DINOv2} & 328.83 & 135.91 & 101.78 & 85.35 & 78.40 & 77.76 & 83.29 & 92.02 & 101.67 \\
\midrule
\multirow{2}{*}{\gt} & FID & 16.11 & 2.63 & 2.59 & 3.87 & 5.65 & 7.47 & 10.70 & 13.16 & 15.09 \\
 & FD\textsubscript{DINOv2} & 357.67 & 150.28 & 108.56 & 85.80 & 73.95 & 69.01 & 68.44 & 73.27 & 80.05 \\
\bottomrule
\end{tabular}
\end{table}}

\newpage
\section{Single-cell Datasets}
\label{appendix:single_cell}
\xhdr{PBMC3K} 2,638 peripheral blood mononuclear cells from a healthy donor across 8 cell types.

\xhdr{Dentate gyrus} 18,213 cells from the developing mouse hippocampus~\parencite{la2018rna}, annotated with 14 cell types.


\xhdr{HLCA} 584,944 human lung cells from 486 individuals across 49 datasets~\parencite{sikkema2023integrated}, annotated with 50 cell types.

\xhdr{Implementation Details} We add couplings on top of the uni-modal generation set-up described  in \textcite{palma2025multimodal}. Within the single-cell dataset, CFGen splits data into 90\% training and 10\% validation. Evaluation is performed on the corresponding held-out test data. Since CFGen~\parencite{palma2025multimodal} considers interpolant $x_t = tx_1 + \sigma_t\epsilon$, $\epsilon\sim \mathcal{N}(0, I_d)$, we implement baseline with linear interpolant used in our image experiments (CFGen-linear) which trains CFGen with independent coupling. We then compare both baselines to \gt transport. Our evaluation script uses $0, 1, 2$ seeds.

CFGen splits its training into two stages: 1) encoding single-cell data with an RNA autoencoder and 2) training a conditional flow matching model in latent space. We show hyperparameter choices for training and backbone model in tables \ref{tab:cfgen_training} and \ref{tab:cfgen_resnet_configs}. Note that PBMC3k and Dentate gyrus use \texttt{resnet small} and HLCA uses \texttt{resnet big}.

\begin{table}[h]
\centering
\small
\caption{Training configuration for uni-modal generation with CFGen.}
\label{tab:cfgen_training}
\begin{tabular}{lcc}
\toprule
Parameter & Autoencoder & Latent Flow Matching \\
\midrule
Batch size       & 256       & 256 (64 for PBMC3K) \\
Epochs           & 300       & 1500 \\
Optimizer        & AdamW     & AdamW \\
Learning rate    & $1\times10^{-3}$ & $1\times10^{-4}$ \\
Weight decay     & $1\times10^{-5}$ & $1\times10^{-6}$ \\
Gradient clipping & 1.0      & 1.0 \\
Train/validation split & 90\% / 10\% & 90\% / 10\% \\
\bottomrule
\end{tabular}
\end{table}

\begin{table}[t]
\centering
\caption{CFGen ResNet Backbone.}
\label{tab:cfgen_resnet_configs}
\begin{tabular}{lcc}
\toprule
Hyperparameter & ResNet Small & ResNet Big \\
\midrule
Hidden dimension     & 32  & 64  \\
Residual blocks      & 3   & 3   \\
Embedding dimension  & 20  & 100 \\
Dropout probability  & 0.0 & 0.0 \\
Condition dropout probability         & 0.2 & 0.2 \\
\bottomrule
\end{tabular}
\end{table}

\newpage
\section{Additional Background}

\subsection{Classifier and classifier-free guidance}
\label{appendix: cfg_theory}
\xhdr{Classifier Guidance}
By applying Bayes' Rule we can write
\begin{equation}
    p_t(x\mid c)=\frac{p_t(c\mid x)p_t(x)}{p_t(c)}
\end{equation}
Taking a logarithm of each side we get
\begin{equation}
    \log p_t(x\mid c)=\log p_t(c\mid x)+\mathrm{log}p_t(x)-\log p_t(c)
\end{equation}
By applying $\nabla_x$ to both sides we get
\begin{equation}
    \nabla_x\log p_t(x\mid c)=\nabla_x\log p_t(c\mid x)+\nabla_x\log p_t(x)-\cancel{\nabla_x\log p_t(c)}
\end{equation}
leading to 
\begin{equation}
    \nabla_x\log p_t(x\mid c)=\nabla_x\mathrm{log}p_t(c\mid x)+\nabla_x\log p_t(x)
\end{equation}
Using the conversion formula for score we can further write this as 
\begin{equation}
    v_c(x,t,c)=v_u(x_t,t)+a_t\nabla_x\log p_t(c\mid x)
\end{equation}
Classifier guidance constructs a velocity field by enhancing classifier $p_t(c\mid x)$ with scaling factor $w$ which we refer to as guidance scale, yielding
\begin{equation}
\label{eq:CF_final}
    \boxed{v_c^{CF}(x,t,c)=v_u(x_t,t)+wa_t\nabla_x\log p_t(c\mid x)}
\end{equation}
where $a_t = \frac{1-t}{t}$ and $b_t=\frac{1}{t}$.

The main issue with using classifier guidance is the need to train $p_t(c\mid x)$ classifier, which drastically increases training compute. In the following section, we derive classifier-free guidance. 

\xhdr{Classifier-free Guidance} Classifier-free guidance takes a step further to construct a guided field that does not require additional training of classifier $p_t(c \mid x)$.

We can add $b_tx_t$ and subtract from the RHS of \eqref{eq:CF_final}
\begin{align}
    v^{\mathrm{CFG}}(x_t,t,c)
    &= v_u(x_t,t) + w\, a_t \nabla_x \log p_t(c \mid x_t)
    \nonumber\\
    &= v_u(x_t,t) + w\, a_t \nabla_x \log \frac{p_t(x_t \mid c)\, p_t(c)}{p_t(x_t)}
    \nonumber\\
    &= v_u(x_t,t) + w \big[ a_t \nabla_x \log p_t(x_t \mid c)
        + \cancel{a_t \nabla_x \log p_t(c)}
        - a_t \nabla_x \log p_t(x_t) \big]
    \nonumber\\
    &= v_u(x_t,t) + w \big[
        \underbrace{a_t \nabla_x \log p_t(x_t \mid c) + b_t x_t}_{v_c(x_t,t,c)}
        - \underbrace{\big(a_t \nabla_x \log p_t(x_t) + b_t x_t\big)}_{v_u(x_t,t)}
    \big]
\end{align}
which leads to 
\begin{equation}
\label{eq:CFG_final}
    \boxed{v_c^{CFG}(x,t,c)=v_u(x_t,t)+w\underbrace{(v_c(x,t,c)-v_u(x,t))}_{g(x,t,c)}}
\end{equation}
\newpage
\section{\gt Uncurated Samples}
\begin{figure}[h]
    \centering
    \includegraphics[width=0.95\linewidth]{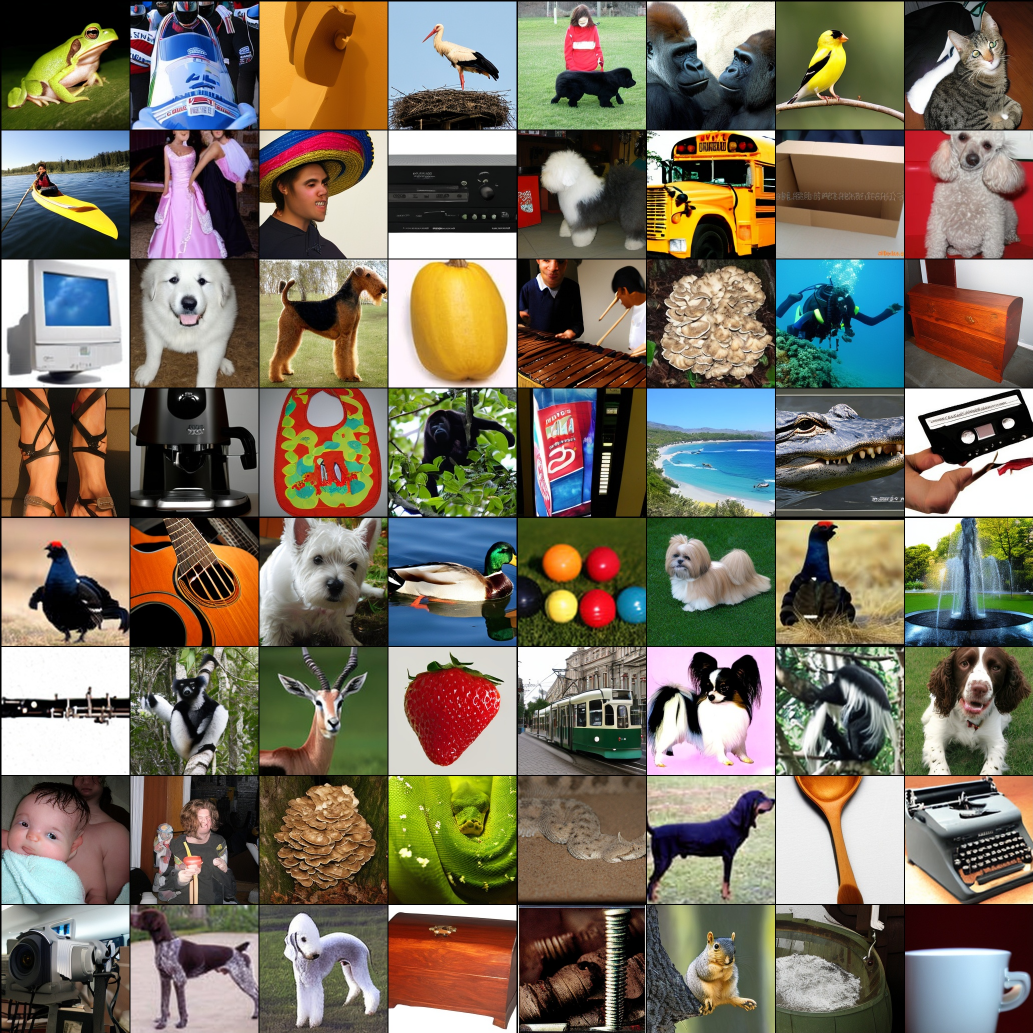}
    \caption{Samples generated with SiT-L/2, Euler (16 steps), $w=4.0$}
    \label{fig:placeholder}
\end{figure}
\newpage
\section{Class-conditional OT Uncurated Samples}
\begin{figure}[h]
    \centering
    \includegraphics[width=0.95\linewidth]{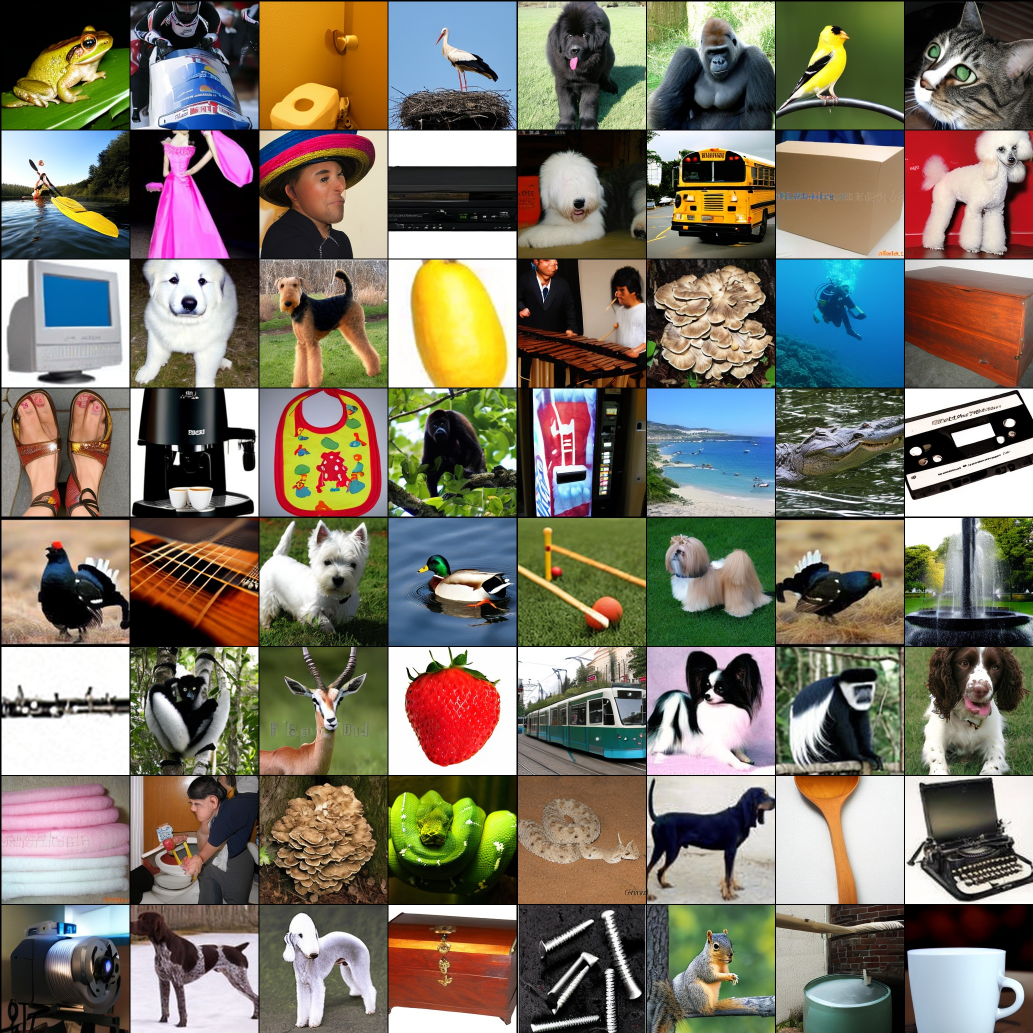}
    \caption{Samples generated with SiT-L/2, Euler (16 steps), $w=4.0$}
    \label{fig:placeholder}
\end{figure}

\newpage
\section{Independent Coupling Uncurated Samples}
\begin{figure}[h]
    \centering
    \includegraphics[width=0.95\linewidth]{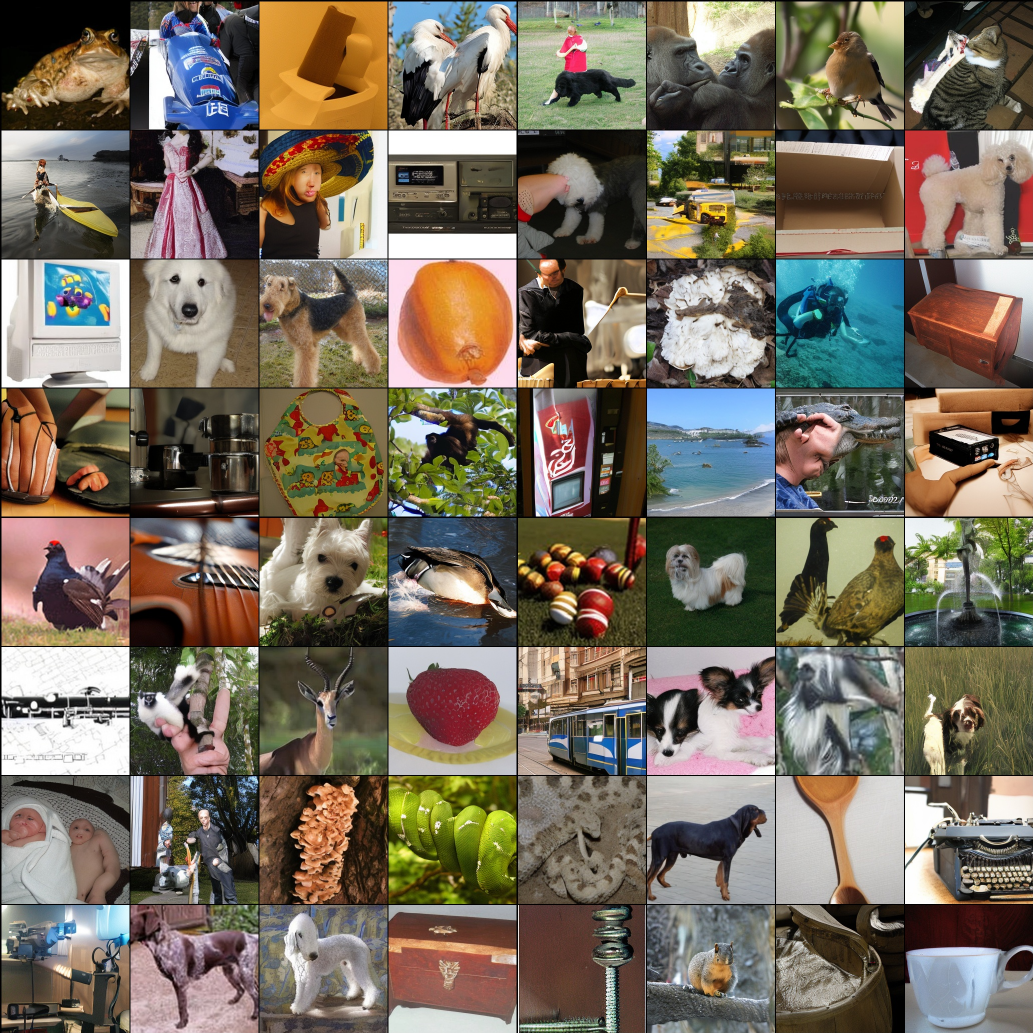}
    \caption{Samples generated with SiT-L/2, Euler (16 steps), $w=4.0$}
    \label{fig:placeholder}
\end{figure}
\newpage
\section{Text-conditioned ImageNet-256 Uncurated Samples} 
We show samples for the following prompts read left to right:\{\texttt{two mittens with colorful yarn on a bed}, \texttt{a white keyboard with a small white keypad}, \texttt{a sink with a white marble top and a black base}, \texttt{a green and purple flower with a black center}, \texttt{three monkeys sitting on a wooden bench}, \texttt{a black and white photo of a stethoscope}, \texttt{a dog is looking at ducks in a pond}, \texttt{a woman feeding her baby with a bottle}, \texttt{a bowl of food}, \texttt{a dog sitting on the grass with a red building in the background}, \texttt{a rocky cliff}, \texttt{a snake is laying on top of hay in a cage}, \texttt{a plate of food on a table}, \texttt{a white monkey hanging on a wooden pole}, \texttt{casio dg-2000 boombox}, \texttt{a dog laying in the grass}, \texttt{a bathroom with a toilet and a bathtub with gummy bears}, \texttt{a chameleon is sitting on a branch with green leaves}, \texttt{a group of masks with different colors and designs}, \texttt{a monkey sitting in a tree with leaves}, \texttt{a person walking along a road near a body of water}, \texttt{a wolf laying on the ground}, \texttt{a black and white photo of a wheel}, \texttt{a large dam with a large waterfall in front of it}, \texttt{a lab coat with a picture of a man in a lab coat}, \texttt{a female swimmer in the pool with a yellow cap}, \texttt{a brown and white dog with a collar on}, \texttt{a plate with shrimp and bacon}, \texttt{a pair of scissors on a wooden table}, \texttt{a bird with a long beak sitting on a branch}, \texttt{a spider sits on its web in the sun}, \texttt{a tree with a branch}, \texttt{a group of people standing in a line}, \texttt{a pool table in an empty building with a green light}, \texttt{panasonic pd-wg-g1}, \texttt{a meerkat standing on a rock in a zoo}, \texttt{a small hamster sleeping in a person's hand}, \texttt{a woman looking at a dinosaur in a museum}, \texttt{a knitting dish cloth and a knitting needle}, \texttt{a stone wall in the middle of a yard}, \texttt{a white and yellow sea slug on a coral reef}, \texttt{a large cicada sitting on a person's hand}, \texttt{two women in kimono}, \texttt{a vintage sewing machine sitting in the grass}, \texttt{a couple standing in front of a yurt}, \texttt{a man lifting a barbell on a competition stage}, \texttt{a circular clock with a white circle in the middle}, \texttt{a military vehicle with a gun mounted on top}, \texttt{three graduates pose for a photo in blue graduation gowns}, \texttt{two cars driving on a race track}, \texttt{a woman holding a large fruit}, \texttt{a large black and white whale with its tail out in the water}, \texttt{a wall of banjos hanging on a wall}, \texttt{a bowl of guacamole with a tortilla chip on top}, \texttt{a close up of a dog with a collar}, \texttt{two dogs are standing in the dirt}, \texttt{a dock with a concrete wall and palm trees}, \texttt{a plate of mashed potatoes}, \texttt{a pair of sunglasses on the ground}, \texttt{a large building with many people walking around it}, \texttt{a small orange fish in a bowl}, \texttt{a close up of a metal fan with a metal cover}, \texttt{three monkeys sitting on a log}, \texttt{a skillet with food in it}\}

\newpage
\subsection{\gt text-conditioned ImageNet-256}
\begin{figure}[h]
    \centering
    \includegraphics[width=0.95\linewidth]{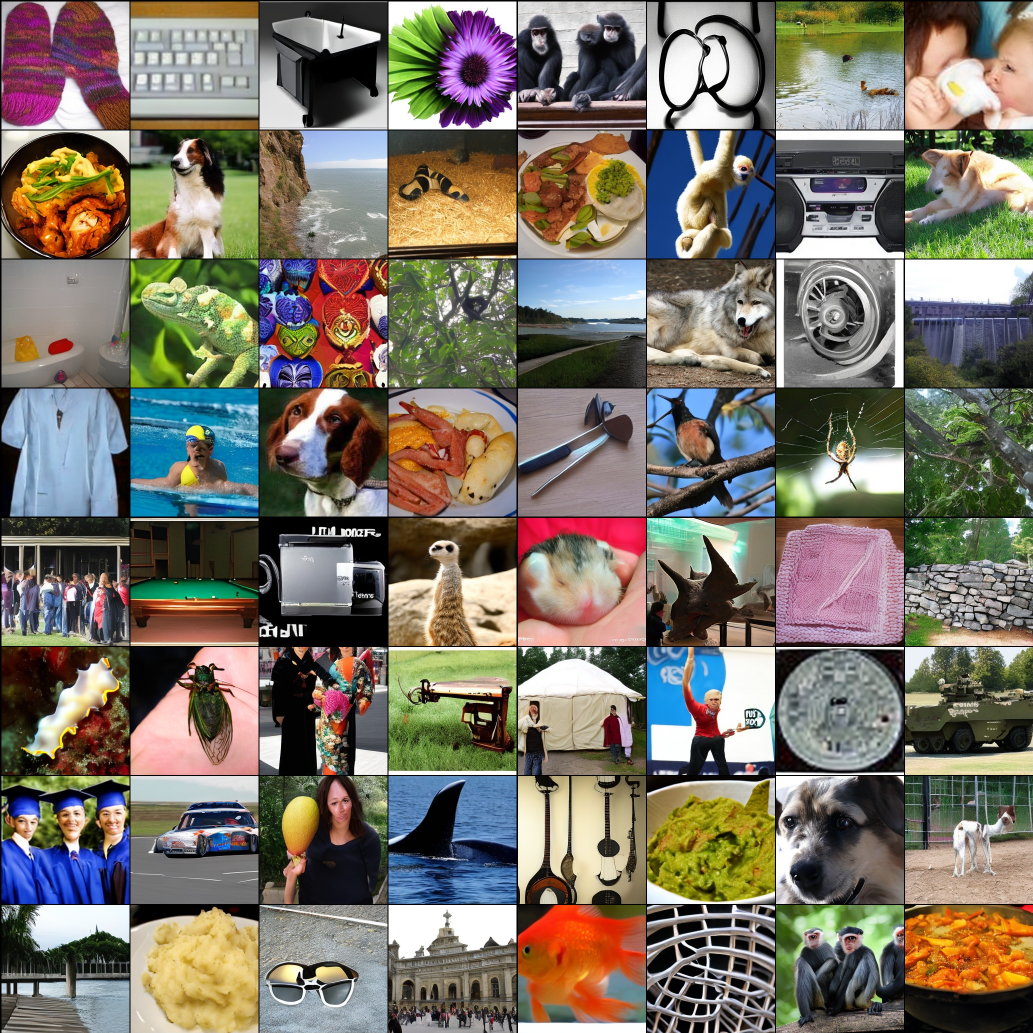}
    \caption{Samples generated with SiT-B/2, Euler (16 steps), $w=4.0$}
    \label{fig:placeholder}
\end{figure}
\newpage
\subsection{Independent text-conditioned ImageNet-256}
\begin{figure}[h]
    \centering
    \includegraphics[width=0.95\linewidth]{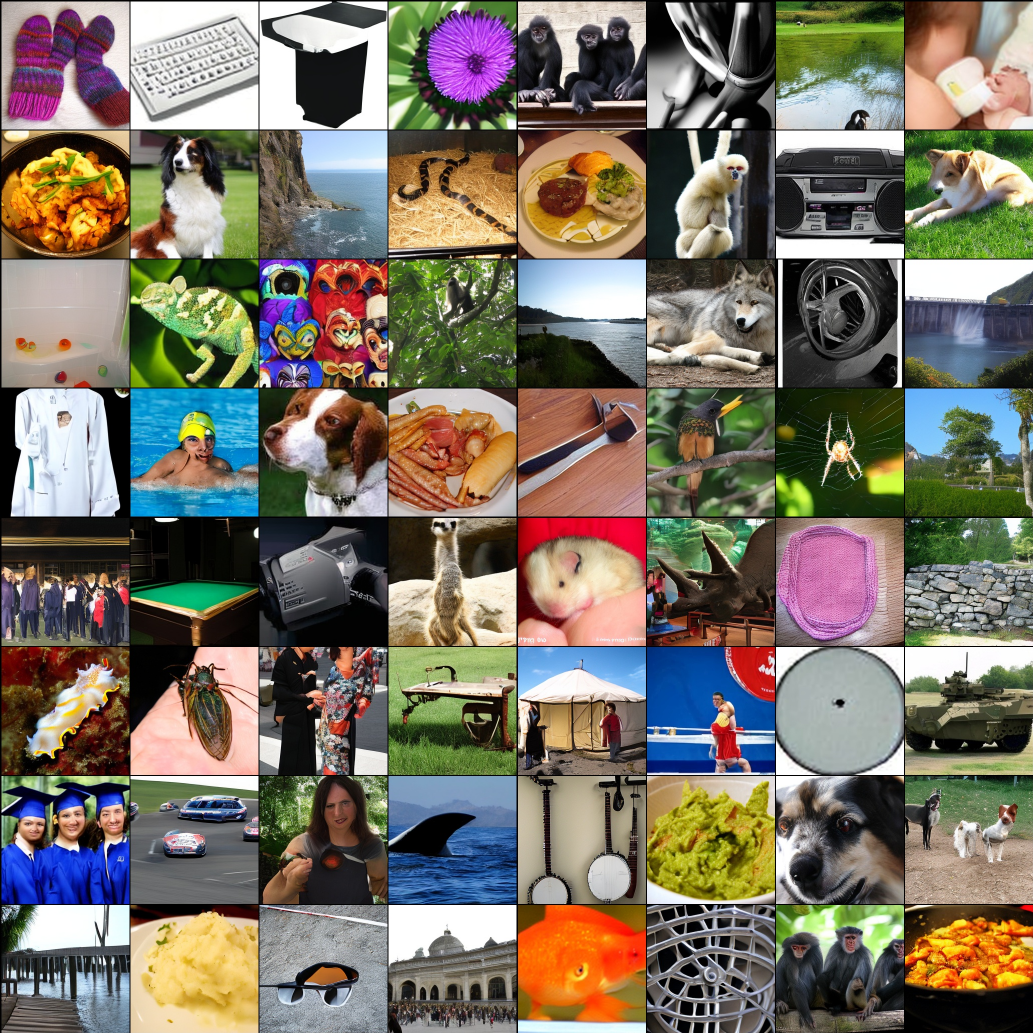}
    \caption{Samples generated with SiT-B/2, Euler (16 steps), $w=4.0$}
    \label{fig:placeholder}
\end{figure}
\end{document}

%% file: math_commands.tex
\usepackage{amsmath,amsfonts,bm}

\def\eqref#1{equation~\ref{#1}}

\def\1{\bm{1}}

\DeclareMathAlphabet{\mathsfit}{\encodingdefault}{\sfdefault}{m}{sl}
\SetMathAlphabet{\mathsfit}{bold}{\encodingdefault}{\sfdefault}{bx}{n}

\def\gL{{\mathcal{L}}}

%% file: macros.tex
\newcommand{\xhdr}[1]{{\noindent\bfseries #1.}}

\usepackage[framemethod=TikZ]{mdframed}
\mdfdefinestyle{MyFrame}{%
    linecolor=aithyra-black,
    outerlinewidth=0.5pt,
    roundcorner=0pt,
    innertopmargin=2pt, 
    innerbottommargin=2pt, 
    innerrightmargin=7pt,
    innerleftmargin=7pt,
    backgroundcolor=aithyra-white}

\mdfdefinestyle{MyFrame2}{%
    linecolor=aithyra-lightgrey,
    outerlinewidth=1pt,
    roundcorner=0pt,
    innertopmargin=10pt,
    innerbottommargin=7pt,
    innerrightmargin=10pt,
    innerleftmargin=10pt,
    backgroundcolor=aithyra-lightgrey}

\mdfdefinestyle{MyFrame3}{%
    linecolor=aithyra-lightgrey,
    outerlinewidth=1pt,
    roundcorner=0pt,
    innertopmargin=10pt,
    innerbottommargin=7pt,
    innerrightmargin=10pt,
    innerleftmargin=10pt,
    backgroundcolor=aithyra-lightgrey}

\mdfdefinestyle{MyFrameTeal}{%
    linecolor=aithyra-teal,
    linewidth=2pt,
    topline=false, bottomline=false, rightline=false, leftline=true,
    roundcorner=0pt,
    innertopmargin=10pt,
    innerbottommargin=7pt,
    innerrightmargin=10pt,
    innerleftmargin=10pt,
    backgroundcolor=aithyra-lightgrey!50}

\mdfdefinestyle{MyFrameEq}{%
    linecolor=aithyra-lightgrey,
    outerlinewidth=0pt,
    roundcorner=0pt,
    innertopmargin=0pt,
    innerbottommargin=0pt,
    innerrightmargin=7pt,
    innerleftmargin=7pt,
    backgroundcolor=aithyra-lightgrey}

%% file: gaussian_minibatch_appendix.tex
\label{appendix:gaussian_plugin}

\paragraph{Gaussian plug-in minibatch model} We study minibatch OT flow matching using an analytically tractable Gaussian
model. Let $p_0=\Nc(0,\Id)$ and $p_1=\Nc(0,\Sigma)$ be the source and target
distributions.  For independent minibatches of size $n$ from $p_0$ and $p_1$,
we fit the source and target means and covariances, couple the fitted Gaussians
by their Gaussian Monge map, and marginalize the resulting conditional
flow-matching fields over the randomly drawn minibatches.  We emphasize that
this is a \emph{Gaussian plug-in model} of mini-batch transport that we use
for its analytical tractability, since exact analysis of the discrete
Hungarian coupling used in our experiments would likely be significantly more
involved.

Writing $\widehat p_{t,n}(x)$ and $\widehat v_{t,n}(x)$ for the path density
and velocity induced by a batch pair, the marginal flow field learned by flow
matching is
\begin{equation}
    v_{t,n}(x)
    =
    \frac{\mathbb{E}
    [\widehat p_{t,n}(x)\widehat v_{t,n}(x)]}
    {\mathbb{E}[\widehat p_{t,n}(x)]}.
    \label{eq:gaussian-plugin-marginal-field}
\end{equation}
Here and throughout this section, these expectations are understood to be taken over the independently sampled source and target minibatches.
This posterior density weighting creates a nonlinear correction even though every batch-conditional field is affine.

Since $p_0$ is isotropic, we may assume that the target has diagonal covariance \(\Sigma=\operatorname{diag}(\sigma_1^2,\ldots,\sigma_d^2)\), \(\sigma_i>0\).
Define  \(m_i(t)=(1-t)+t\sigma_i\),
and the coefficients
\begin{align}
    \theta_{ij}(t)
    &:=
    \frac{\sigma_i\sigma_j}{\sigma_i+\sigma_j}
    \left[
      \frac{2t\sigma_i\sigma_j}{\sigma_i+\sigma_j}
      \bigl(m_i(t)+m_j(t)\bigr)-m_i(t)m_j(t)
    \right],
    \label{eq:gaussian-plugin-theta}
    \\
    \kappa_i(t)
    &:=-\frac{\sigma_i^2}{m_i(t)^2}
      \sum_{j\ne i}
      \frac{\sigma_j(2t\sigma_j-m_j(t))}
      {(\sigma_i+\sigma_j)^2m_j(t)}.
    \label{eq:gaussian-plugin-kappa}
\end{align}

\begin{proposition}[Gaussian plug-in minibatch field]
\label{prop:gaussian-plugin-general}
For fixed $d$, positive-definite diagonal $\Sigma$ as above, and fixed $(x,t)$,
the marginalized field (\eqref{eq:gaussian-plugin-marginal-field}) satisfies, as $n\to\infty$,
\begin{equation}
    [v_{t,n}(x)]_i
    =\frac{\sigma_i-1}{m_i(t)}x_i
     +\frac{1}{n}\left[
       \kappa_i(t)x_i
       +\frac{x_i}{m_i(t)^3}
        \sum_{j=1}^d\theta_{ij}(t)\frac{x_j^2}{m_j(t)^3}
     \right]
     +o(n^{-1}).
    \label{eq:gaussian-plugin-general-result}
\end{equation}
\end{proposition}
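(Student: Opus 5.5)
The plan is a second-order delta-method expansion of the ratio in \eqref{eq:gaussian-plugin-marginal-field} in the minibatch fluctuations. I would proceed in four steps.

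\textbf{Step 1: closed form for a single batch pair.} For one batch pair, write the fitted source as $\Nc(\hat m_0,\hat S_0)$ and the fitted target as $\Nc(\hat m_1,\hat S_1)$. The Gaussian Monge map is $T(y)=\hat m_1+\hat A(y-\hat m_0)$ with
\[
\hat A=\hat S_0^{-1/2}\bigl(\hat S_0^{1/2}\hat S_1\hat S_0^{1/2}\bigr)^{1/2}\hat S_0^{-1/2}.
\]
Along the linear path $x_t=(1-t)y+tT(y)$, set $\hat M_t=(1-t)\Id+t\hat A$. The path density is then
\[
\widehat p_{t,n}=\Nc\bigl(\hat\mu_t,\hat M_t\hat S_0\hat M_t\bigr),\qquad \hat\mu_t=(1-t)\hat m_0+t\hat m_1 .
\]
The velocity is affine:
\[
\widehat v_{t,n}(x)=\hat m_1-\hat m_0+(\hat A-\Id)\hat M_t^{-1}(x-\hat\mu_t).
\]

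\textbf{Step 2: parametrize the fluctuations.} Write $\hat m_0=\xi_0/\sqrt n$ and $\hat m_1=\Sigma^{1/2}\xi_1/\sqrt n$. Write $\hat S_0=\Id+E_0/\sqrt n$ and $\hat S_1=\Sigma^{1/2}(\Id+E_1/\sqrt n)\Sigma^{1/2}$. Here $\xi_0,\xi_1,E_0,E_1$ are independent and, to the order needed, have Gaussian and Wishart-type moments:
\[
\mathbb E[(E)_{ij}(E)_{kl}]=\delta_{ik}\delta_{jl}+\delta_{il}\delta_{jk}+O(n^{-1/2}).
\]
The third moments are $O(n^{-1/2})$ relative to these. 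The unperturbed point is $\hat A=\Sigma^{1/2}$, which is diagonal, so $M_t=\operatorname{diag}(m_i(t))$. This makes the zeroth-order field $\frac{\sigma_i-1}{m_i}x_i$.

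\textbf{Step 3: expand the Monge map and the batch quantities.} I expand $\hat A$ to second order in $(E_0,E_1)$. The Fréchet derivatives of the matrix square root at the diagonal matrix $\operatorname{diag}(\sigma_i^2)$ solve Sylvester equations. Their entries are therefore $(\cdot)_{ij}/(\sigma_i+\sigma_j)$, and the second derivative nests two such equations. This is the source of the denominators $\sigma_i+\sigma_j$ and $(\sigma_i+\sigma_j)^2$ in $\theta_{ij}$ and $\kappa_i$. Then I expand $\hat M_t^{-1}$, $\log\det$ of the path covariance, the path precision, and $\widehat v_{t,n}$, all to second order.

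\textbf{Step 4: take expectations of the ratio.} Write
\[
\widehat p_{t,n}=p_t(x)\bigl(1+n^{-1/2}\alpha+n^{-1}\beta+\dots\bigr),\qquad \widehat v_{t,n}=v_t+n^{-1/2}\gamma+n^{-1}\delta+\dots .
\]
Then
\[
v_{t,n}=v_t+n^{-1}\bigl(\mathbb E[\delta]+\mathbb E[\alpha\gamma]\bigr)+o(n^{-1}),
\]
because $\mathbb E\alpha=\mathbb E\gamma=0$ and the $\mathbb E\beta$ terms cancel between numerator and denominator. So the $1/n$ correction is a density-weighted covariance between the log-density fluctuation and the velocity fluctuation, plus the mean second-order velocity shift. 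The mean terms $\xi_0,\xi_1$ contribute only through the covariance term $\mathbb E[\alpha\gamma]$, and cross terms between means and covariances vanish by independence and symmetry.

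\textbf{Evaluating the contractions.} Using the Wishart second moments:
\begin{itemize}
\item the $x$-independent part of $\alpha$ (from the determinant and the trace) together with $\mathbb E\delta$ gives the linear term $\kappa_i(t)x_i$;
\item the quadratic part of $\alpha$, namely $x^\top(\cdot)x$ through the precision perturbation, contracted against the linear-in-$x$ part of $\gamma$, gives the cubic term $\frac{x_i}{m_i^3}\sum_j\theta_{ij}\frac{x_j^2}{m_j^3}$.
\end{itemize}
The off-diagonal Wishart entries couple index $i$ to $j\neq i$. That is why $\kappa_i$ sums over $j\neq i$, while the diagonal $j=i$ contributions combine into $\theta_{ii}$.

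\textbf{Main obstacle and rigour.} The main obstacle is the bookkeeping in the second-order expansion of the matrix square root and its contraction against the Wishart moments, i.e.\ verifying that the coefficients assemble exactly into \eqref{eq:gaussian-plugin-theta}--\eqref{eq:gaussian-plugin-kappa}. I would first do the computation for $d=1$ (where $\theta_{11}$ must match and $\kappa_1=0$), then $d=2$, to fix the signs before attempting general $d$. For rigour, the $o(n^{-1})$ remainder requires exchanging expansion and expectation. I would split on the event $\{\|E_0\|,\|E_1\|,|\xi_0|,|\xi_1|\le n^{1/8}\}$. On that event a Taylor remainder of order $n^{-3/2}$ times polynomial moments holds uniformly. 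The complement has probability $e^{-cn^{1/4}}$ by Wishart and Gaussian concentration, while $\widehat p_{t,n}$ and $\widehat p_{t,n}\widehat v_{t,n}$ have polynomially bounded moments for fixed $(x,t)$, which follows from bounds on $\hat S_0^{-1}$ moments for $n>d+4$.
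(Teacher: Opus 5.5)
Your proposal follows the paper's proof essentially step for step: expand $\hat A$ about $\Sigma^{1/2}$ via Sylvester equations, write the batch field and path density as $v_t^\star+\varepsilon v^{(1)}+\varepsilon^2v^{(2)}$ and $p_t^\star(1+\varepsilon\ell_t)$, reduce the ratio to $v_t^\star+\frac1n\{\mathbb E[v^{(2)}]+\mathbb E[\ell_t v^{(1)}]\}$, and contract against the Wishart moments (your truncation argument adds rigour that the paper's formal expansion omits). One bookkeeping correction concerns the linear coefficient.

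\begin{itemize}
\item It also needs the mean--mean pairing of the $x$-linear, sample-mean part of your $\alpha$ with the $x$-independent, sample-mean part of your $\gamma$. This pairing contributes $\eta_i(t)x_i$, with $\eta_i=\sigma_i(2t\sigma_i-m_i)/m_i^3$.
\item The restriction $j\neq i$ in $\kappa_i$ comes from an exact cancellation of the diagonal pieces: $\mathbb E[\delta]_i=(\kappa_i-\tfrac12\eta_i)x_i$, the trace term gives $-\tfrac12\eta_i x_i$, and the mean term gives $+\eta_i x_i$. It is not that the diagonal pieces ``combine into $\theta_{ii}$''.
\item If you omit the mean term, you are left with a spurious $-\eta_i x_i$; your $d=1$ check would expose it.
\end{itemize}
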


\begin{corollary}[Standardized endpoints]
\label{cor:gaussian-plugin-standardized}
If $\Sigma=\Id$, then
\begin{align}
    v_{t,n}(x)
    &=\frac{2t-1}{4n}
      \left(2\lVert x\rVert^2-(d-1)\right)x+o(n^{-1}),
    \label{eq:gaussian-plugin-standardized-velocity-main}
    \\
    \partial_t v_{t,n}(x)
    &=\frac{1}{2n}
      \left(2\lVert x\rVert^2-(d-1)\right)x+o(n^{-1}).
    \label{eq:gaussian-plugin-standardized-acceleration-main}
\end{align}
\end{corollary}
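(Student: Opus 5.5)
The plan is to obtain the corollary by specializing Proposition~\ref{prop:gaussian-plugin-general} to $\sigma_1=\dots=\sigma_d=1$. The only non-routine point is whether the remainder can be differentiated in $t$.

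\textbf{The velocity expansion.} With $\sigma_i=1$ we have $m_i(t)=(1-t)+t=1$ for every $i$ and $t$. The leading term $\frac{\sigma_i-1}{m_i(t)}x_i$ therefore vanishes identically, and only the $1/n$ correction survives. Substituting $\sigma_i=\sigma_j=1$ and $m_i=m_j=1$ into \eqref{eq:gaussian-plugin-theta} gives
\[
\theta_{ij}(t)=\tfrac12\bigl[\tfrac{2t}{2}\cdot 2-1\bigr]=\tfrac{2t-1}{2}
\quad\text{for all } i,j.
\]
Substituting into \eqref{eq:gaussian-plugin-kappa} gives
\[
\kappa_i(t)=-\sum_{j\neq i}\frac{2t-1}{4}=-\frac{(d-1)(2t-1)}{4}.
\]
Plugging these into \eqref{eq:gaussian-plugin-general-result}, the $i$-th component of the correction is
\[
\frac1n\Bigl[-\tfrac{(d-1)(2t-1)}{4}x_i+\tfrac{2t-1}{2}\,x_i\sum_j x_j^2\Bigr]
=\frac{2t-1}{4n}\bigl(2\lVert x\rVert^2-(d-1)\bigr)x_i .
\]
Collecting components yields \eqref{eq:gaussian-plugin-standardized-velocity-main}.

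\textbf{The time derivative.} The $1/n$ coefficient is affine in $t$ with $t$-derivative $\frac{2}{4n}\bigl(2\lVert x\rVert^2-(d-1)\bigr)x$, which is exactly \eqref{eq:gaussian-plugin-standardized-acceleration-main}. What remains is to show that $\partial_t$ of the $o(n^{-1})$ remainder is still $o(n^{-1})$. This is the main obstacle, since a pointwise asymptotic statement cannot in general be differentiated.

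I would handle it by returning to the structure of \eqref{eq:gaussian-plugin-marginal-field}. For each batch pair, $\widehat p_{t,n}(x)$ is a Gaussian density and $\widehat v_{t,n}(x)$ is affine in $x$, and both depend smoothly on $t$ through the fitted means and covariances. The expansion in Proposition~\ref{prop:gaussian-plugin-general} comes from a second-order delta-method expansion of these expectations in the batch estimation errors, which are $O_p(n^{-1/2})$. Applying the same expansion to the $t$-differentiated integrands $\partial_t(\widehat p\,\widehat v)$ and $\partial_t\widehat p$ gives expansions of the $t$-derivatives of numerator and denominator. Dominated convergence justifies interchanging $\partial_t$ with the batch expectation: Gaussian tails of the Wishart/normal sample statistics, restricted to the high-probability event where the fitted covariances are well-conditioned, give integrable bounds locally uniformly in $t\in[0,1]$.

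This shows that the remainder $R_n(t,x)$ satisfies $\partial_t R_n=o(n^{-1})$ locally uniformly in $t$. Equivalently, the expansion of $v_{t,n}(x)$ holds in $C^1$ in $t$, so the derivative may be taken term by term.
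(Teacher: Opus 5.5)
Your proposal is correct and follows the paper's proof. The paper likewise substitutes $\sigma_i=m_i(t)=1$, obtaining $\theta_{ij}=\tfrac{2t-1}{2}$ and $\kappa_i=-\tfrac{(d-1)(2t-1)}{4}$, into Proposition~\ref{prop:gaussian-plugin-general}, and then differentiates at fixed $x$. The paper does that last step term by term without comment, so your argument that the $o(n^{-1})$ remainder also has $o(n^{-1})$ $t$-derivative (rerunning the second-order expansion on the $t$-differentiated numerator and denominator) justifies a step the paper leaves implicit.
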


\begin{proposition}[Independent Gaussian flow-matching field]
\label{prop:gaussian-independent}
For independent flow matching with
$p_0=\Nc(0,\Id)$ and
$p_1=\Nc(0,\operatorname{diag}(\sigma_1^2,\ldots,\sigma_d^2))$, define \(q_i(t)=(1-t)^2+t^2\sigma_i^2\), \(r_i(t)=t\sigma_i^2-(1-t)\).
Then
\begin{align}
    [v_t^{\mathrm{ind}}(x)]_i
    &=\frac{r_i(t)}{q_i(t)}x_i,
    \label{eq:gaussian-plugin-independent-velocity-main}
    \\
    [\partial_t v_t^{\mathrm{ind}}(x)]_i
    &=\left[
      \frac{1+\sigma_i^2}{q_i(t)}
      -\frac{2r_i(t)^2}{q_i(t)^2}
    \right]x_i.
    \label{eq:gaussian-plugin-independent-time-derivative-main}
\end{align}
If $\Sigma=\Id$, these expressions specialize to
\begin{equation}
    v_t^{\mathrm{ind}}(x)
    =\frac{2t-1}{(1-t)^2+t^2}x,
    \qquad
    \partial_t v_t^{\mathrm{ind}}(x)
    =\frac{4t(1-t)}{((1-t)^2+t^2)^2}x.
    \label{eq:standardized-independent-main}
\end{equation}
\end{proposition}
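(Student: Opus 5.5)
The plan is to use the fact that, under the independent coupling $\pi_{\mathrm{ind}}=p_0\otimes p_1$, every quantity in the conditional flow-matching regression is jointly Gaussian. The population minimizer $v_t^{\mathrm{ind}}(x)=\mathbb{E}[x_1-x_0\mid x_t=x]$ is then a linear regression that can be computed in closed form. Concretely, take $x_0\sim\Nc(0,\Id)$ and $x_1\sim\Nc(0,\Sigma)$ independent, with $\Sigma=\operatorname{diag}(\sigma_1^2,\ldots,\sigma_d^2)$. Set $x_t=(1-t)x_0+tx_1$ and $u=x_1-x_0$. The pair $(u,x_t)$ is a linear image of the Gaussian vector $(x_0,x_1)$, so it is jointly Gaussian with mean zero. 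Hence
\[
\mathbb{E}[u\mid x_t=x]=\operatorname{Cov}(u,x_t)\operatorname{Cov}(x_t)^{-1}x .
\]

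The first step is to compute the two covariances. Because $\Sigma$ is diagonal and $x_0,x_1$ are independent, both matrices are diagonal, so the problem decouples across coordinates.

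For the variance of $x_t$, we get $\operatorname{Var}(x_{t,i})=(1-t)^2+t^2\sigma_i^2=q_i(t)$.

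For the cross-covariance, only the self-terms survive, giving
\[
\operatorname{Cov}(u_i,x_{t,i})=\operatorname{Cov}\bigl(x_{1,i}-x_{0,i},\,(1-t)x_{0,i}+tx_{1,i}\bigr)=t\sigma_i^2-(1-t)=r_i(t).
\]
Since $q_i(t)>0$ for all $t\in[0,1]$ (as $\sigma_i>0$), dividing coordinatewise yields \eqref{eq:gaussian-plugin-independent-velocity-main}.

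The second step is the time derivative. Here $x$ is held fixed, so only the scalar coefficient $r_i/q_i$ is differentiated. The key observation is that $r_i'(t)=1+\sigma_i^2$ and $q_i'(t)=-2(1-t)+2t\sigma_i^2=2r_i(t)$. The quotient rule then gives
\[
\partial_t\frac{r_i}{q_i}=\frac{1+\sigma_i^2}{q_i}-\frac{2r_i^2}{q_i^2},
\]
which is \eqref{eq:gaussian-plugin-independent-time-derivative-main}.

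The third step is the specialization $\Sigma=\Id$. Then $r=2t-1$ and $q=(1-t)^2+t^2$, so the velocity formula is immediate. For the derivative we simplify $(2q-2(2t-1)^2)/q^2$. Since $2q=4t^2-4t+2$ and $2(2t-1)^2=8t^2-8t+2$, the numerator is $4t-4t^2=4t(1-t)$, as claimed in \eqref{eq:standardized-independent-main}.

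There is no real obstacle in this argument. The only point needing care is justifying that the regression minimizer is exactly the Gaussian conditional mean. This follows from the standard fact, used in the proof of Proposition~\ref{prop:gap-transport}, that the squared-error minimizer is $\mathbb{E}[u\mid x_t]$, combined with the fact that conditional expectations of jointly Gaussian vectors are linear. Beyond that, the one identity to keep track of is $q_i'=2r_i$, which is what keeps the derivative compact.
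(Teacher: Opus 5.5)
Your proposal is correct and follows the paper's own argument. The paper likewise computes $v_t^{\mathrm{ind}}(x)=\mathbb{E}[U\mid X_t=x]$ by joint Gaussian conditioning as $(t\Sigma-(1-t)\Id)((1-t)^2\Id+t^2\Sigma)^{-1}x$, then diagonalizes and differentiates at fixed $x$; your identity $q_i'=2r_i$ and the explicit $\Sigma=\Id$ algebra simply fill in steps the paper leaves implicit.
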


The population Gaussian OT field is
$[v_t^\star(x)]_i=(\sigma_i-1)x_i/m_i(t)$.  Unlike the plug-in correction,
the independent field in Proposition~\ref{prop:gaussian-independent} has no
batch-size dependence.  When $\Sigma=\Id$, population OT is the identity and
has zero velocity and time derivative, whereas independent FM exhibits an
order-one contraction--expansion (referred to in the main paper as the ``yo-yo'' effect) despite having identical endpoint distributions.
Gaussian plug-in minibatching retains a residual version of this motion, but
Corollary~\ref{cor:gaussian-plugin-standardized} shows that its magnitude and
Eulerian time derivative decay as $1/n$.

We next prove Proposition~\ref{prop:gaussian-plugin-general} and its corollary;
Proposition~\ref{prop:gaussian-independent} follows from the joint Gaussian
conditioning calculation given afterward.

\xhdr{Setup}
We keep $d$ and $\Sigma$ fixed as $n\to\infty$ and assume $n>d$.  With
$\varepsilon=n^{-1/2}$ and unbiased sample covariances, write the source and target parameters as
\begin{align}
    \widehat\mu_n^0 &=\varepsilon h,
    &
    \widehat\Sigma_n^0 &=\Id+\varepsilon H,
    \\
    \widehat\mu_n^1 &=\varepsilon\Sigma^{1/2}h',
    &
    \widehat\Sigma_n^1
    &=\Sigma+\varepsilon\Sigma^{1/2}H'\Sigma^{1/2}.
    \label{eq:gaussian-plugin-sample-fluctuations}
\end{align}
Here $h,h'$ are independent standard Gaussian vectors, while $H,H'$ are
independent centered Wishart fluctuations and are independent of the sample
means.  Their leading second moments are
\begin{equation}
    \mathbb{E}[H_{ij}H_{kl}]
    =\delta_{ik}\delta_{jl}+\delta_{il}\delta_{jk}+O(n^{-1}),
    \label{eq:gaussian-plugin-wishart-moments}
\end{equation}
and likewise for $H'$.  The $O(n^{-1})$ term in
\eqref{eq:gaussian-plugin-wishart-moments} contributes only beyond the order
retained below.

The Monge map between the two fitted Gaussians is
\begin{equation}
    T_n(x)=\widehat\mu_n^1+A_n(x-\widehat\mu_n^0),
    \qquad
    A_n\widehat\Sigma_n^0A_n=\widehat\Sigma_n^1,
    \qquad A_n\succ0.
\end{equation}
Expanding
$A_n=\Sigma^{1/2}+\varepsilon K+\varepsilon^2L+O_p(\varepsilon^3)$
and matching powers in the covariance identity gives
\begin{align}
    \Sigma^{1/2}K+K\Sigma^{1/2}
    &=\Sigma^{1/2}(H'-H)\Sigma^{1/2},
    \\
    \Sigma^{1/2}L+L\Sigma^{1/2}
    &=-K^2-\Sigma^{1/2}HK-KH\Sigma^{1/2}.
    \label{eq:gaussian-plugin-sylvester-equations}
\end{align}
Since $\Sigma$ is diagonal, the first equation has the entrywise solution
\begin{equation}
    K_{ij}=c_{ij}(H'_{ij}-H_{ij}),
    \qquad
    c_{ij}:=\frac{\sigma_i\sigma_j}{\sigma_i+\sigma_j}.
    \label{eq:gaussian-plugin-K-entrywise}
\end{equation}

For later use, define
\begin{equation}
    M_t=(1-t)\Id+t\Sigma^{1/2},
    \qquad
    W_t=M_t^{-1}.
    \label{eq:gaussian-plugin-path-notation}
\end{equation}
The diagonal entries of $M_t$ are the scalars $m_i(t)$. 
The population Gaussian OT field is
\begin{equation}
    v_t^\star(x)=(\Sigma^{1/2}-\Id)W_tx,
    \qquad
    [v_t^\star(x)]_i=\frac{\sigma_i-1}{m_i(t)}x_i.
    \label{eq:gaussian-plugin-population-field}
\end{equation}

\xhdr{Batch-conditional and marginalized fields}
For a fixed pair of batch fits, set
\begin{equation}
    X_t=(1-t)X_0+tT_n(X_0),
    \qquad
    X_0\sim\Nc(\widehat\mu_n^0,\widehat\Sigma_n^0).
\end{equation}
Writing
\begin{equation}
    B_{t,n}=(1-t)\Id+tA_n,
    \qquad
    \widehat m_{t,n}
    =(1-t)\widehat\mu_n^0+t\widehat\mu_n^1,
\end{equation}
the conditional path is Gaussian and its FM field is the affine map
\begin{equation}
    \widehat v_{t,n}(x)
    =\widehat\mu_n^1-\widehat\mu_n^0
     +(A_n-\Id)B_{t,n}^{-1}(x-\widehat m_{t,n}).
    \label{eq:gaussian-plugin-conditional-field}
\end{equation}
If
\begin{equation}
    \alpha_t=(1-t)h+t\Sigma^{1/2}h',
    \qquad
    e_t=\Sigma^{1/2}W_t(h'-h),
\end{equation}
then expansion of \eqref{eq:gaussian-plugin-conditional-field} yields
\begin{equation}
    \widehat v_{t,n}(x)
    =v_t^\star(x)+\varepsilon v_t^{(1)}(x)
     +\varepsilon^2v_t^{(2)}(x)+O_p(\varepsilon^3),
    \label{eq:gaussian-plugin-conditional-expansion}
\end{equation}
where
\begin{align}
    v_t^{(1)}(x)
    &=W_tKW_tx+e_t,
    \\
    v_t^{(2)}(x)
    &=\left(W_tLW_t-tW_tKW_tKW_t\right)x-W_tKW_t\alpha_t.
    \label{eq:gaussian-plugin-v1-v2}
\end{align}

The marginalization in \eqref{eq:gaussian-plugin-marginal-field} weights each
batch-conditional field by $\widehat p_{t,n}(x)$.  We therefore expand this
conditional path density about $p_t^\star = \Nc(0,M_t^2)$.

Conditional on the fitted batches, the covariance of $X_t$
has the expansion
\begin{equation}
    B_{t,n}\widehat\Sigma_n^0B_{t,n}
    =M_t^2+\varepsilon G_t+O_p(\varepsilon^2),
    \qquad
    G_t=M_tHM_t+t(KM_t+M_tK).
\end{equation}
Consequently,
\begin{equation}
    \widehat p_{t,n}(x)
    =p_t^\star(x)\left[1+\varepsilon\ell_t(x)+O_p(\varepsilon^2)\right],
\end{equation}
with
\begin{equation}
    \ell_t(x)
    =\alpha_t^{\mathsf T}W_t^2x
     +\frac12x^{\mathsf T}W_t^2G_tW_t^2x
     -\frac12\operatorname{tr}(W_t^2G_t).
    \label{eq:gaussian-plugin-density-score}
\end{equation}
Since $\alpha_t$ and $G_t$ are centered, $\mathbb{E}[\ell_t(x)]=0$;
similarly, the centered fluctuations $K$ and $e_t$ give
$\mathbb{E}[v_t^{(1)}(x)]=0$.  Expanding the density-weighted ratio
\eqref{eq:gaussian-plugin-marginal-field} gives
\begin{equation}
    v_{t,n}(x)
    =v_t^\star(x)
     +\frac1n\left\{
       \mathbb{E}[v_t^{(2)}(x)]
       +\mathbb{E}[\ell_t(x)v_t^{(1)}(x)]
     \right\}
     +o(n^{-1}).
    \label{eq:gaussian-plugin-master-expansion}
\end{equation}
The second-order density fluctuation cancels between the numerator and
denominator.  The second expectation in
\eqref{eq:gaussian-plugin-master-expansion} is precisely the posterior
density-weighting term omitted by an unweighted batch average.

\xhdr{Closed-form coefficient}
We now evaluate the coefficient of the order-$1/n$ term in
\eqref{eq:gaussian-plugin-master-expansion}.  In the posterior-weighting term
$\mathbb{E}[\ell_t(x)v_t^{(1)}(x)]$, the covariance-dependent part of
$\ell_t$ is linear in $G_t$, while the matrix-dependent part of $v_t^{(1)}$
is linear in $K$.  Thus, using \eqref{eq:gaussian-plugin-K-entrywise} and the
Wishart moments in \eqref{eq:gaussian-plugin-wishart-moments}, their required
covariance contraction is
\begin{equation}
    \mathbb{E}[[G_t]_{ij}K_{kl}]
    =\theta_{ij}(t)
      (\delta_{ik}\delta_{jl}+\delta_{il}\delta_{jk})+O(n^{-1}).
    \label{eq:gaussian-plugin-GK-contraction}
\end{equation}
Combining this contraction with the second Sylvester equation in
\eqref{eq:gaussian-plugin-sylvester-equations}, which determines the mean of
$v_t^{(2)}$, gives
\begin{align}
    \mathbb{E}[v_t^{(2)}(x)]_i
    &=\beta_i(t)x_i+O(n^{-1}),
    \\
    \mathbb{E}[\ell_t(x)v_t^{(1)}(x)]_i
    &=\eta_i(t)x_i
      +\frac{x_i}{m_i(t)^3}
       \sum_{j=1}^d\theta_{ij}(t)\frac{x_j^2}{m_j(t)^3}
      -\frac{\theta_{ii}(t)}{m_i(t)^4}x_i+O(n^{-1}),
\end{align}
where
\begin{align}
    \beta_i(t)
    &=-\frac{\sigma_i^2}{m_i(t)^2}
      \sum_{j=1}^d(1+\delta_{ij})
      \frac{\sigma_j(2t\sigma_j-m_j(t))}
      {(\sigma_i+\sigma_j)^2m_j(t)},
    \\
    \eta_i(t)
    &=\frac{\sigma_i(2t\sigma_i-m_i(t))}{m_i(t)^3}.
\end{align}
The diagonal identities
\begin{equation}
    \beta_i(t)=\kappa_i(t)-\frac12\eta_i(t),
    \qquad
    \frac{\theta_{ii}(t)}{m_i(t)^4}=\frac12\eta_i(t)
\end{equation}
cancel all diagonal linear contributions.  Inserting the remaining terms into
\eqref{eq:gaussian-plugin-master-expansion} proves
\eqref{eq:gaussian-plugin-general-result}.

\xhdr{Standardized specialization}
When $\Sigma=\Id$, we have $\sigma_i=m_i(t)=1$ and
$2t\sigma_i-m_i(t)=2t-1$.  Equations~\eqref{eq:gaussian-plugin-theta}
and \eqref{eq:gaussian-plugin-kappa} reduce to
\begin{equation}
    \theta_{ij}(t)=\frac{2t-1}{2},
    \qquad
    \kappa_i(t)=-\frac{(d-1)(2t-1)}{4}.
\end{equation}
Substituting these expressions into
\eqref{eq:gaussian-plugin-general-result} gives
\eqref{eq:gaussian-plugin-standardized-velocity-main}; differentiating at
fixed $x$ gives \eqref{eq:gaussian-plugin-standardized-acceleration-main},
proving Corollary~\ref{cor:gaussian-plugin-standardized}.  At $t=1/2$ the
field is in fact exactly zero for every $n$: exchanging the i.i.d. fitted
batches preserves the midpoint and reverses the displacement.

In one dimension, the off-diagonal sum in
\eqref{eq:gaussian-plugin-kappa} is empty.  Writing
$m(t)=(1-t)+t\sigma$, the general result becomes
\begin{equation}
    v_{t,n}(x)
    =\frac{\sigma-1}{m(t)}x
     +\frac1n
      \frac{\sigma(2t\sigma-m(t))}{2m(t)^5}x^3
     +o(n^{-1}).
    \label{eq:gaussian-plugin-one-dimensional-result}
\end{equation}
Hence the entire linear order-$1/n$ correction cancels in one dimension.

\xhdr{Comparison with the independent coupling}
\begin{proof}[Proof of Proposition~\ref{prop:gaussian-independent}]
For independent
$X_0\sim\Nc(0,\Id)$ and $X_1\sim\Nc(0,\Sigma)$, let
$X_t=(1-t)X_0+tX_1$ and $U=X_1-X_0$.  Joint Gaussian conditioning yields
\begin{equation}
    v_t^{\mathrm{ind}}(x)
    =\mathbb{E}[U\mid X_t=x]
    =\left(t\Sigma-(1-t)\Id\right)
      \left((1-t)^2\Id+t^2\Sigma\right)^{-1}x.
    \label{eq:gaussian-plugin-independent-field}
\end{equation}
Diagonalizing this expression and differentiating at fixed $x$ gives
\eqref{eq:gaussian-plugin-independent-velocity-main} and
\eqref{eq:gaussian-plugin-independent-time-derivative-main}; setting
$\Sigma=\Id$ recovers \eqref{eq:standardized-independent-main}.
\end{proof}
The independent field is also exactly the field induced by singleton discrete
minibatch OT, because the sole source and target observations must be paired.
This does not identify it with Gaussian plug-in OT at $n=1$, for which an
unbiased sample covariance is undefined.

\xhdr{Acceleration along flow trajectories}
For a time-dependent velocity field $v_t$, define its material acceleration by
\begin{equation}
    a_t[v](x)
    :=\partial_t v_t(x)+D_xv_t(x)v_t(x).
    \label{eq:gaussian-material-acceleration-definition}
\end{equation}
This is the acceleration of a trajectory satisfying
$\dot x_t=v_t(x_t)$.  In particular, the population Gaussian OT field follows
straight displacement trajectories and therefore satisfies
$a_t[v^\star](x)=0$.

For the plug-in field, let
\begin{equation}
    b_i(t):=\frac{\sigma_i-1}{m_i(t)}
\end{equation}
and write overdots for time derivatives.  Expanding
\eqref{eq:gaussian-material-acceleration-definition} using
\eqref{eq:gaussian-plugin-general-result} gives
\begin{equation}
    \begin{aligned}
    [a_t[v_n](x)]_i
    =\frac{x_i}{n}\bigg[
       &\dot\kappa_i(t)+2b_i(t)\kappa_i(t)\\
       &+\sum_{j=1}^d
       \frac{\dot\theta_{ij}(t)
             -(b_i(t)+b_j(t))\theta_{ij}(t)}
            {m_i(t)^3m_j(t)^3}x_j^2
    \bigg]+o(n^{-1}).
    \end{aligned}
    \label{eq:gaussian-plugin-material-acceleration}
\end{equation}
For independent flow matching, the corresponding expression is exact:
\begin{equation}
    [a_t[v^{\mathrm{ind}}](x)]_i
    =\frac{\sigma_i^2}{q_i(t)^2}x_i.
    \label{eq:gaussian-independent-material-acceleration}
\end{equation}

When $\Sigma=\Id$, write $q(t)=(1-t)^2+t^2$.  The three acceleration fields
simplify to
\begin{equation}
    a_t[v^\star](x)=0,
    \qquad
    a_t[v_n](x)
    =\frac{2\lVert x\rVert^2-(d-1)}{2n}x+o(n^{-1}),
    \qquad
    a_t[v^{\mathrm{ind}}](x)=\frac{x}{q(t)^2}.
    \label{eq:gaussian-standardized-material-accelerations}
\end{equation}
The plug-in material acceleration agrees with the Eulerian derivative in
\eqref{eq:gaussian-plugin-standardized-acceleration-main} to order $1/n$,
because the convective term is of order $1/n^2$ in the standardized case.

Finally, let $x_t$ denote the flow trajectory initialized at $x_0$, and define
its integrated squared acceleration by
\begin{equation}
    \mathcal{A}[v;x_0]
    :=\int_0^1\lVert a_t[v](x_t)\rVert^2\,\mathrm{d}t.
\end{equation}
For standardized endpoints,
$x_t^\star=x_0$,
$x_t^{\mathrm{ind}}=\sqrt{q(t)}\,x_0$, and
$x_{t,n}^{\mathrm{plug}}=x_0+O(n^{-1})$.  Consequently,
\begin{align}
    \mathcal{A}[v^\star;x_0]
    &=0,
    \\
    \mathcal{A}[v_n;x_0]
    &=\frac{\lVert x_0\rVert^2
       \left(2\lVert x_0\rVert^2-(d-1)\right)^2}{4n^2}
      +o(n^{-2}),
    \label{eq:gaussian-plugin-integrated-acceleration}
    \\
    \mathcal{A}[v^{\mathrm{ind}};x_0]
    &=\left(2+\frac{3\pi}{4}\right)\lVert x_0\rVert^2.
    \label{eq:gaussian-independent-integrated-acceleration}
\end{align}
Thus the leading plug-in acceleration energy is nonincreasing in $n$ and
decays as $n^{-2}$, while independent FM incurs order-one acceleration and
exact OT incurs none.  For every fixed $x_0\ne0$, independent FM therefore has
the largest of these three acceleration energies for all sufficiently large
$n$.  This comparison concerns the controlled large-$n$ expansion and does
not assert exact monotonicity of the finite-$n$ Gaussian plug-in field.

If the initial condition is itself random, $X_0\sim\Nc(0,\Id)$, then
$R:=\lVert X_0\rVert^2\sim\chi_d^2$, with
\begin{equation}
    \mathbb{E}[R]=d,
    \qquad
    \mathbb{E}[R^2]=d(d+2),
    \qquad
    \mathbb{E}[R^3]=d(d+2)(d+4).
\end{equation}
Consequently, assuming the plug-in expansion above holds in $L^1$ with respect
to $X_0$,
\begin{align}
    \mathbb{E}\!\left[\mathcal{A}[v^\star;X_0]\right]
    &=0,
    \\
    \mathbb{E}\!\left[\mathcal{A}[v_n;X_0]\right]
    &=\frac{d\left(d^2+18d+41\right)}{4n^2}+o(n^{-2}),
    \label{eq:gaussian-plugin-expected-integrated-acceleration}
    \\
    \mathbb{E}\!\left[\mathcal{A}[v^{\mathrm{ind}};X_0]\right]
    &=\left(2+\frac{3\pi}{4}\right)d.
    \label{eq:gaussian-independent-expected-integrated-acceleration}
\end{align}
Thus, at fixed dimension, the expected plug-in acceleration energy again
decays as $n^{-2}$, whereas the independent-coupling energy remains order one
in $n$.